\renewcommand{\leq}{\leqslant}
\renewcommand{\le}{\leqslant}
\renewcommand{\geq}{\geqslant}
\renewcommand{\ge}{\geqslant}
\title{Provable Bayesian Inference via Particle Mirror Descent}
\author{
    Bo Dai$^1$, Niao He$^2$, Hanjun Dai$^1$, Le Song$^1$\\
    $^1$ Georgia Institute of Technology\\
    \{bodai, hanjundai\}@gatech.edu, lsong@cc.gatech.edu\\
    $^2$ University of Illinois at Urbana-Champaign\\
    niaohe@illinois.edu\\
}
\begin{document}

\maketitle

\begin{abstract} 
Bayesian methods are appealing in their flexibility in modeling complex data and ability in capturing uncertainty in parameters. However, when Bayes' rule does not result in tractable closed-form, most approximate inference algorithms lack either scalability or rigorous guarantees. To tackle this challenge, we propose a simple yet provable algorithm, \emph{Particle Mirror Descent} (PMD), to iteratively approximate the posterior density.  PMD is inspired by stochastic functional mirror descent where one descends in the density space using a small batch of data points at each iteration, and by particle filtering where one uses samples to approximate a function. We prove result of the first kind that, with $m$ particles, PMD provides a posterior density estimator that converges in terms of $KL$-divergence to the true posterior in rate $O(1/\sqrt{m})$. We demonstrate competitive empirical performances of PMD compared to several approximate inference algorithms in mixture models, logistic regression, sparse Gaussian processes and latent Dirichlet allocation on large scale datasets. 
\end{abstract}

\clearpage
\newpage

\tableofcontents
\addtocontents{toc}{\protect\setcounter{tocdepth}{2}} 

\clearpage
\newpage

\section{Introduction}\label{sec:intro}

Bayesian methods are attractive because of their ability in modeling complex data and capturing uncertainty in parameters. The crux of Bayesian inference is to compute the posterior distribution, $p(\theta|X) \propto p(\theta) \prod\nolimits_{n=1}^N p(x_n|\theta)$, of a parameter $\theta \in \RR^d$ given a set of $N$ data points $X = \cbr{x_n}_{n=1}^N$ from $\RR^D$, with a prior distribution $p(\theta)$ and a model of data likelihood $p(x|\theta)$. For many non-trivial models from real-world applications, the prior might not be conjugate to the likelihood or might contain hierarchical structure. Therefore, computing the posterior often results in intractable integration and poses computational challenges. Typically, one resorts to approximate inference such as sampling, \eg, MCMC~\cite{Neal93} and SMC~\cite{DouFreGor01}, or variational inference~\cite{JorGhaJaaSau98, WaiJor08}. 

Two longstanding challenges in approximate Bayesian inference are i) \emph{provable convergence} and  ii) \emph{data-intensive computation} at each iteration. MCMC is a general algorithm known to generate samples from distribution that converges to the true posterior. However, in order to generate a single sample at every iteration, it requires a complete scan of the dataset and evaluation of the likelihood at each data point, which is computationally expensive. To address this issue, approximate sampling algorithms have been proposed which use only a small batch of data points at each iteration~\citep[\eg][]{Chopin02, BalMad06, WelTeh11, MacAda14}. \citet{Chopin02, BalMad06} extend the sequential Monte Carlo~(SMC) to Bayesian inference on static models. However, these algorithms rely on Gaussian distribution or kernel density estimator as transition kernel for efficiency, which breaks down the convergence guarantee of SMC. On the other hand, the stochastic Langevin dynamics algorithm~(SGLD)~\cite{WelTeh11} and its derivatives~\cite{AhnKorWel12, ChenFoxGue14, DinFanBabCheetal14} combine ideas from stochastic optimization and Hamiltonian Monte Carlo, and are proven to converge  in terms of integral approximation, as recently shown in~\cite{TehThiVol2014a,TehThiVol2015}. Still, it is unclear whether the \emph{dependent} samples generated  reflects convergence to the true posterior. FireflyMC~\cite{MacAda14}, introduces auxiliary variables to switch on and off data points to save computation for likelihood evaluations, but this algorithm requires the knowledge of lower bounds of  likelihood that is model-specific and may be hard to calculate. 

In another line of research, the variational inference algorithms~\cite{JorGhaJaaSau98,WaiJor08, Minka01} attempt to approximate the entire posterior density by optimizing information divergence~\cite{Minka05}. The recent derivatives~\cite{HofBleWanPai13} avoid examination of all the data in each update. However, the major issue for these algorithms is the absence of theoretical guarantees. This is due largely to the fact that variational inference algorithms typically choose a parametric family to approximate the posterior density, which can be far from the true posterior, and require to solve a highly non-convex optimization problem. In most cases, these algorithms optimize over simple exponential family for tractability. More flexible variational families have been explored but largely restricted to mixture models~\cite{JaaJor98, GerHofBle12}. In these cases, it is often difficult to quantify the approximation and optimization error at each iteration, and analyze how the error accumulates across the iterations. Therefore, a provably convergent variational inference algorithm is still needed. 

In this paper, we present such a simple and provable nonparametric inference algorithm, \emph{Particle Mirror Descent}~(PMD), to iteratively approximate the posterior density. PMD relies on the connection that Bayes' rule can be expressed as the solution to a convex optimization problem over the density space~\cite{Williams80, Zellner88, ZhuCheXin14}. However, directly solving the optimization will lead to both computational and representational issues: one scan over the entire dataset at each iteration is needed, and the exact function update has no closed-form. To address these issues, we draw inspiration from two sources: ({\bf i}) stochastic mirror descent, where one can instead descend in the density space using a small batch of data points at each iteration; and ({\bf ii}) particle filtering and kernel density estimation, where one can maintain a tractable approximate representation of the density using samples. In summary, PMD possesses a number of desiderata:

\noindent  {\bf Simplicity.} PMD applies to many probabilistic models, even with \emph{non-conjugate priors}. The algorithm is summarized in just a few lines of codes, and only requires the value of likelihood and prior, unlike other approximate inference techniques~\citep[\eg]{WelTeh11, GerHofBle12, PaiBleJor12, HofBleWanPai13}, which typically require their first and/or second-order derivatives.

\noindent  {\bf Flexibility.} Different from other variational inference algorithms, which sacrifice the model flexibility for tractability, our method approximates the posterior by particles or kernel density estimator. The flexibility of nonparametric model enables PMD to capture multi-modal in posterior.

\noindent  {\bf Stochasticity.} At iteration $t$, PMD only visits a mini-batch of data to compute the stochastic functional gradient, and samples $O(t)$ points from the solution. Hence, it avoids scanning over the whole dataset in each update.

\noindent  {\bf Theoretical guarantees.} We show the density estimator provided by PMD converges in terms of both integral approximation and $KL$-divergence to the true posterior density in rate $O(1/\sqrt{m})$ with $m$ particles. To our best knowledge, these results are the first of the kind in Bayesian inference for estimating posterior.

In the remainder, we will introduce the optimization view of Bayes' rule before presenting our algorithm, and then we provide both theoretical and empirical supports of PMD. 

Throughout this paper, we denote $KL$ as the  Kullback-Leibler divergence, function $q(\theta)$ as $q$, a random sequence as $\theta_{[t]}:=[\theta_1,\ldots,\theta_t]$, integral $f(\cdot)$ w.r.t. some measure $\mu(\theta)$ over support $\Omega$ as $\int f(\theta)\mu(d\theta)$, or $\int f(\theta)d\theta$ without ambiguity, $\langle\cdot,\cdot\rangle_{L_2}$ as the $L_2$ inner product, and $\|\cdot\|_p$ as the $L_p$ norm for $1\leq p\leq\infty$.

\section{Optimization View of Bayesian Inference}\label{sec:background}

Our algorithm stems from the connection between Bayes' rule and optimization. \citet{Williams80, Zellner88, ZhuCheXin14} showed that Bayes' rule
\vspace{-2mm}
$$
p(\theta |  {X}) = \frac{p(\theta)\prod_{n=1}^Np( {x}_n|\theta)}{p( {X})}
$$
where $ p( {X}) = \int p(\theta)\prod_{n=1}^Np( {x}_n|\theta) d\theta$, 
 can be obtained by solving the optimization problem
\vspace{-2mm}
\begin{align}\label{eq:bayes_opt}
  \min_{q(\theta)\in\mathcal{P}}L(q) :=- \sum_{n=1}^N\bigg[\int q(\theta)\log p( {x}_n|\theta)\, d\theta\bigg]+KL(q(\theta)\,||\,p(\theta)),
\end{align}
where $\mathcal{P}$ is the valid density space. The objective, $L(q)$, is continuously differentiable with respect to $q \in \mathcal{P}$ and one can further show that 
\vspace{-2mm}
\begin{lemma}\label{lem:strongconvexity}
Objective function $L(q)$ defined on $q(\theta)\in \mathcal{P}$ is $1$-strongly convex w.r.t. $KL$-divergence.\vspace{-2mm}
\end{lemma}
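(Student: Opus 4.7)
The plan is to exploit the fact that $L$ decomposes as the sum of a functional that is linear in $q$ and the term $KL(q\,\|\,p(\theta))$. Since any affine functional has vanishing Bregman divergence, all the strong convexity must come from the entropy-like $KL$ term, and it is exactly the Bregman divergence generated by $q\mapsto KL(q\,\|\,p(\theta))$ that matches $KL(q_2\,\|\,q_1)$. So the statement is essentially the classical fact that the negative entropy is $1$-strongly convex in the KL sense, applied here in a relative (with respect to the prior $p(\theta)$) form.

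Concretely, I would proceed as follows. First, rewrite $L(q) = \ell(q) + V(q)$, where $\ell(q) := -\sum_{n=1}^N \int q(\theta)\log p(x_n|\theta)\,d\theta$ is linear in $q$, and $V(q) := KL(q\,\|\,p(\theta))$. Strong convexity with respect to $KL$ will mean establishing, for any $q_1,q_2\in\mathcal{P}$,
\begin{equation*}
L(q_2) \;\geq\; L(q_1) + \left\langle \nabla L(q_1),\, q_2-q_1\right\rangle_{L_2} + KL(q_2\,\|\,q_1),
\end{equation*}
which is the Bregman-type characterization. Because $\ell$ is linear it contributes $\ell(q_2) - \ell(q_1) - \langle\nabla \ell(q_1),q_2-q_1\rangle_{L_2} = 0$, so it remains to verify that the Bregman divergence generated by $V$ equals $KL(q_2\,\|\,q_1)$.

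For the second step, I would take the functional derivative $\delta V/\delta q(\theta) = \log(q(\theta)/p(\theta)) + 1$, observe that the constant $+1$ drops out when integrated against $q_2-q_1$ since densities integrate to one, and then expand
\begin{equation*}
V(q_2) - V(q_1) - \int \log\!\tfrac{q_1(\theta)}{p(\theta)}\,(q_2(\theta) - q_1(\theta))\,d\theta.
\end{equation*}
All the $\log p(\theta)$ contributions cancel after collecting terms, leaving $\int q_2(\theta)\log(q_2(\theta)/q_1(\theta))\,d\theta = KL(q_2\,\|\,q_1)$. Adding this identity to the vanishing linear contribution gives $B_L(q_2,q_1) = KL(q_2\,\|\,q_1)$, which is exactly $1$-strong convexity of $L$ with respect to the $KL$-divergence.

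The only real care is around the functional calculus on the constrained density space: one must justify the use of the Gateaux derivative on directions $q_2-q_1$ that preserve the unit-mass constraint (so the $+1$ Lagrangian-type constant in $\delta V/\delta q$ has zero $L_2$ inner product with admissible directions), and one should note that the identity only uses $q_1,q_2 \ll p(\theta)$ which is automatic on $\mathcal{P}$ if we impose the convention that $L=+\infty$ otherwise. I do not expect a serious obstacle here; the entire argument is an algebraic manipulation once the Bregman-divergence form of strong convexity is adopted.
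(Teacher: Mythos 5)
Your proposal is correct and is essentially the paper's argument: both proofs amount to observing that $L$ differs from a relative entropy by an affine functional, so its Bregman divergence is exactly $KL(q_2\,\|\,q_1)$, giving $1$-strong convexity with equality. The only cosmetic difference is that the paper writes $L(q)=KL(q\,\|\,q^*)-\log Z$ and invokes the generalized Pythagoras (three-point) identity with reference density $q^*$, whereas you decompose $L$ into a linear part plus $KL(q\,\|\,p)$ and expand the Bregman divergence directly; the cancellation of the constant $+1$ against mass-preserving directions that you flag is the same implicit step the paper uses when it drops the $\langle q_1-q_2,\log Z\rangle$ term.
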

Despite of the closed-form representation of the optimal solution, it can be challenging to compactly represent, tractably compute, or efficiently sample from the solution. The normalization, $p( {X}) = \int p(\theta)\prod_{n=1}^Np( {x}_n|\theta) d\theta$, involves high dimensional integral and typically does not admit tractable closed-form computation. Meanwhile, the product in the numerator could be arbitrarily complicated, making it difficult to represent and sample from. However, this optimization perspective provides us a way to tackle these challenges by leveraging recent advances from optimization algorithms.

\subsection{Stochastic Mirror Descent in Density Space}\label{subsec:exact_smd}

We will resort to stochastic optimization to avoid scanning the entire dataset for each gradient evaluation. The stochastic mirror descent~\cite{NemJudLanSha09} expands the usual stochastic gradient descent scheme to problems with non-Euclidean geometries, by applying unbiased stochastic subgradients and Bregman distances as prox-map functions. We now explain in details, the stochastic mirror descent algorithm in the context of Bayesian inference. 

At $t$-th iteration, given a data point $x_t$ drawn randomly from the dataset, the stochastic functional gradient of $L(q)$ with respect to $q(\theta)\in L_2$ is $g_t(\theta) = \log(q(\theta)) - \log(p(\theta)) - N\log p(x_t|\theta)$. The stochastic mirror descent iterates over the prox-mapping step
$ q_{t+1}= \mathbf{P}_{q_t}(\gamma_t g_t)$, where $\gamma_t>0$ is the stepsize and
\vspace{-1mm}
\begin{eqnarray*}
  \mathbf{P}_{q}(g):=\argmin\nolimits_{\qhat(\theta) \in \Pcal} \, \{\inner{\qhat}{g}_{L_2} + KL(\qhat\| q)\}.  
\end{eqnarray*}
Since the domain is  density space, $KL$-divergence is a natural choice for the prox-function. The prox-mapping therefore admits the closed-form
\vspace{-2mm}
\begin{eqnarray}\label{eq:solve_exact_prox}
q_{t+1}(\theta)&=&q_{t}(\theta)\exp(-\gamma_t g_t(\theta))/Z\\\nonumber
&=& q_t(\theta)^{1-\gamma_t}p(\theta)^{\gamma_t} p(x_t|\theta)^{N\gamma_t}/Z,
\end{eqnarray}
where $Z:=\int q_{t}(\theta)\exp(-\gamma_t g_t(\theta))\, d\theta$ is the normalization.  This update is similar the Bayes' rule. However, an important difference here is that the posterior is updated using the \emph{fractional power} of the previous solution, the prior and the likelihood. Still computing $q_{t+1}(\theta)$  can be  intractable due to the normalization $Z$.    

\subsection{Error Tolerant Stochastic Mirror Descent}\label{approx:exact_smd}
To handle the intractable integral normalization at each prox-mapping step, we will consider a modified
version of the stochastic mirror descent algorithm which can tolerate additional error in the prox-mapping step. 
Given $\epsilon\geqslant 0$ and $g\in L_2$, we define the  $\epsilon$-prox-mapping of $q$ as the set
\vspace{-1mm}
\begin{align}\label{eq:inexact_prox11}
\mathbf{P}_{q}^\epsilon(g):=\{\qhat\in \mathcal{P}\, :\, KL(\qhat||q)+\langle g, \qhat\rangle_{L_2} \leqslant \min\nolimits_{\qhat\in\mathcal{P}}  \{KL(\qhat||q)+\langle g, \qhat\rangle_{L_2}\}+\epsilon\}, 
\end{align}
and consider the update $\qtil_{t+1}(\theta)\in \mathbf{P}_{\qtil_t}^{\epsilon_t}(\gamma_t g_t)$. When $\epsilon_t=0, \forall t$, this reduces to the usual stochastic mirror descent algorithm. The classical results regarding the convergence rate can also be extended as below
\begin{theorem}\label{thm:main1}
\vspace{-2mm}
Let $q^*=\argmin_{q\in\mathcal{P}} L(q)$, stochastic mirror descent with inexact prox-mapping after $T$ steps gives the recurrence: 
\vspace{-3mm}
$$
\EE[KL(q^* ||  \qtil_{t+1})] \leqslant \epsilon_t+(1-\gamma_t) \EE[KL(q^* ||  \qtil_t)] + \frac{\gamma_t^2}{2}\EE\|g_t\|^2_\infty
$$ 
\end{theorem}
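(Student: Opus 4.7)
The plan is to adapt the classical stochastic mirror descent analysis to the functional density‐space setting, exploiting three ingredients: (i) a three‐point inequality for the $\epsilon$-inexact prox‐mapping, (ii) the $1$-strong convexity of $L$ from Lemma~\ref{lem:strongconvexity}, which will furnish the $(1-\gamma_t)$ contraction factor, and (iii) Pinsker's inequality to convert the $L_\infty$ gradient bound into a term compatible with the $KL$ prox. The stochastic unbiasedness $\EE[g_t\mid \qtil_t]=L'(\qtil_t)$ will be used when passing to expectations.

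First, I would establish the standard $\epsilon$-inexact three-point inequality: if $\qtil_{t+1}\in \mathbf{P}^{\epsilon_t}_{\qtil_t}(\gamma_t g_t)$, then for every $u\in\mathcal{P}$,
\[
\gamma_t\langle g_t, \qtil_{t+1}-u\rangle_{L_2}+KL(\qtil_{t+1}\|\qtil_t)\le KL(u\|\qtil_t)-KL(u\|\qtil_{t+1})+\epsilon_t .
\]
This follows by combining the Bregman three-point identity for $KL$ with the definition \eqref{eq:inexact_prox11} of the $\epsilon$-prox-mapping applied to $u$. Setting $u=q^*$ and rearranging yields
\[
KL(q^*\|\qtil_{t+1}) \le KL(q^*\|\qtil_t)-\gamma_t\langle g_t, \qtil_{t+1}-q^*\rangle_{L_2}-KL(\qtil_{t+1}\|\qtil_t)+\epsilon_t .
\]

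Next I would split $\qtil_{t+1}-q^* = (\qtil_{t+1}-\qtil_t)+(\qtil_t-q^*)$ and handle the two resulting inner products separately. For the ``drift'' term, Hölder's inequality and Pinsker give
\[
-\gamma_t\langle g_t, \qtil_{t+1}-\qtil_t\rangle_{L_2}-KL(\qtil_{t+1}\|\qtil_t)\le \gamma_t\|g_t\|_\infty\|\qtil_{t+1}-\qtil_t\|_1-\tfrac{1}{2}\|\qtil_{t+1}-\qtil_t\|_1^2 \le \tfrac{\gamma_t^2}{2}\|g_t\|_\infty^2 ,
\]
where the last step is the standard $ab-b^2/2\le a^2/2$ maximization. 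For the remaining term I take conditional expectation; unbiasedness gives $\EE[\langle g_t,\qtil_t-q^*\rangle_{L_2}\mid \qtil_t]=\langle L'(\qtil_t),\qtil_t-q^*\rangle_{L_2}$, and $1$-strong convexity of $L$ in $KL$ (Lemma~\ref{lem:strongconvexity}) combined with $L(\qtil_t)\ge L(q^*)$ yields
\[
\langle L'(\qtil_t),\qtil_t-q^*\rangle_{L_2}\ge L(\qtil_t)-L(q^*)+KL(q^*\|\qtil_t)\ge KL(q^*\|\qtil_t).
\]
Multiplying by $-\gamma_t$, plugging both bounds back, and taking total expectation produces exactly the claimed recurrence.

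The main technical hurdle I anticipate is the first step: carefully justifying the $\epsilon$-inexact three‐point inequality in the density-space setting, since $\mathcal{P}$ is an infinite-dimensional convex set and one must argue that feasibility of $q^*$ together with the suboptimality gap $\epsilon_t$ in \eqref{eq:inexact_prox11} still yields the clean additive $\epsilon_t$ on the right-hand side (rather than, say, something like $\sqrt{\epsilon_t}$). The remaining steps are largely mechanical once the strong-convexity inequality and Pinsker are in place; the $(1-\gamma_t)$ contraction factor is entirely a consequence of Lemma~\ref{lem:strongconvexity}, so any looser modulus of strong convexity would degrade the rate accordingly.
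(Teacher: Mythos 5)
Your proposal is correct and follows essentially the same route as the paper's proof: the $\epsilon$-inexact three-point inequality from the prox-mapping, the split of the inner product into a drift term controlled by H\"older/Young plus Pinsker (yielding $\tfrac{\gamma_t^2}{2}\|g_t\|_\infty^2$), and then unbiasedness plus the $1$-strong convexity of Lemma~\ref{lem:strongconvexity} to extract the $(1-\gamma_t)$ contraction. The one hurdle you flag---whether the value-based definition of $\mathbf{P}^{\epsilon}_{q}(g)$ in \eqref{eq:inexact_prox11} yields the clean additive $\epsilon_t$ in the variational inequality---is real, but the paper handles it the same way you would, by simply asserting the first-order-optimality characterization with the same $\epsilon_t$.
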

\vspace{-3mm}

\noindent{\bf Remark 1:} 
As shown in the classical analysis of stochastic mirror descent, we could also provide a non-asymptotic convergence results in terms of objective error at average solutions, \eg, simple average $\bar q_T=\sum_{t=1}^T\gamma_t  \qtil_t/\sum_{t=1}^T\gamma_t$ in Appendix~\ref{appendix:inexact_prox}.

\noindent{\bf Remark 2:} For simplicity, we present the algorithm with stochastic gradient estimated by a single data point. The mini-batch trick is also applicable to reduce the variance of stochastic gradient, and convergence remains the same order but with an improved constant.

Allowing error in each step gives us room to design more flexible algorithms. Essentially, this implies that we can \emph{approximate} the intermediate density by some \emph{tractable representation}. As long as the approximation error is not too large, the algorithm will still converge; and if the approximation does not involve costly computation, the overall algorithm will still be efficient. 

\section{Particle Mirror Descent Algorithm}\label{sec:algorithm}

We  introduce two efficient strategies to approximate prox-mappings, one based on weighted particles and the other based on weighted kernel density estimator. The first strategy is designed for the situation when the prior is a ``good" guess of the true posterior, while the second strategy works for general situations. Interestingly, these two methods resemble particle reweighting and rejuvenation respectively in sequential Monte Carlo yet with notable differences.

\subsection{Posterior Approximation Using Weighted Particle}\label{subsec:weight_particle}

We first consider the situation when we are given a ``good" prior, such that  $p(\theta)$ has the same support as the true posterior $q^*(\theta)$, \ie, $0\le q^*(\theta)/p(\theta)\le C$. We will simply maintain a set of samples (or particles) from $p(\theta)$, and utlize them to estimate the intermediate prox-mappings. Let $\cbr{\theta_i}_{i=1}^m\sim p(\theta)$ be a set of fixed \iid\,samples. We approximate $q_{t+1}(\theta)$ as a set of weighted particles 
\begin{eqnarray}\label{eq:weighted_particle}
&\tilde q_{t+1}(\theta) = \sum\nolimits_{i=1}^m \alpha^{t+1}_i\, \delta(\theta_i),\\
&\alpha^{t+1}_i:=\frac{\alpha^{t}_i \exp(-\gamma_t g_t(\theta_i))}{\sum_{i=1}^m \alpha^{t}_i \exp(-\gamma_t g_t(\theta_i))},  \forall t\geq 1 .\nonumber
\end{eqnarray}
The update is derived from the closed-form solution to the \emph{exact} prox-mapping step~\eq{eq:solve_exact_prox}. Since the normalization is a constant common to all components, one can simply update the set of working variable $\alpha_i$ as
\vspace{-1mm}
\begin{align}
  \alpha_i &\leftarrow \alpha_i^{1-\gamma_t} p(x_t|\theta_i)^{N\gamma_t},\forall i\\\nonumber
  \alpha_i &\leftarrow \frac{\alpha_i}{\sum_{i=1}^m \alpha_i}.
\end{align}
We show that the one step approximation~(\ref{eq:weighted_particle}) incurs a dimension-\emph{independent} error when estimating the integration of a function. 
\begin{theorem}\label{thm:discrete_approximation_error}
\vspace{-2mm} 
For any bounded and integrable function $f$,  
$
\EE\sbr{\abr{\int \qtil_t(\theta)f(\theta)d\theta - \int q_t(\theta)f(\theta)d\theta}}\le \frac{2C\|f\|_\infty}{\sqrt{m}}. 
$
\vspace{-2mm}
\end{theorem}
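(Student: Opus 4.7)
The plan is to recognize $\tilde q_t$ as a self-normalized importance sampling (SNIS) estimator of $q_t$ whose proposal is the prior $p$. First, I would verify by induction on $t$ that the multiplicative particle update $\alpha_i \leftarrow \alpha_i^{1-\gamma_t} p(x_t|\theta_i)^{N\gamma_t}$ (followed by renormalization), together with the closed-form exact prox-mapping (\ref{eq:solve_exact_prox}), preserves the invariant
\[
\alpha_i^t = \frac{w_t(\theta_i)}{\sum_{j=1}^m w_t(\theta_j)}, \qquad w_t(\theta) := \frac{q_t(\theta)}{p(\theta)}.
\]
Since $\theta_i\sim p$ i.i.d.\ and the ``good prior'' assumption gives $0\le w_t(\theta)\le C$, the $w_t(\theta_i)$ are bounded i.i.d.\ random variables with $\EE_p[w_t]=\int q_t\,d\theta = 1$ and $\EE_p[w_t f]=\int f\, q_t\,d\theta =: I$.

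The error then reduces to a standard SNIS analysis. Writing $\hat W_m := \tfrac{1}{m}\sum_{i=1}^m w_t(\theta_i)$ and $\hat N_m := \tfrac{1}{m}\sum_{i=1}^m w_t(\theta_i)\bigl(f(\theta_i)-I\bigr)$, one has
\[
\int f\,\tilde q_t\,d\theta - I = \frac{\hat N_m}{\hat W_m}.
\]
The numerator $\hat N_m$ is an empirical mean of i.i.d.\ mean-zero variables bounded by $2C\|f\|_\infty$ (using $|f-I|\le 2\|f\|_\infty$ and $w_t\le C$), so Cauchy--Schwarz gives $\EE|\hat N_m| \le \sqrt{\mathrm{Var}(\hat N_m)} \le 2C\|f\|_\infty/\sqrt{m}$.

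The main obstacle is handling the random denominator $\hat W_m$: it has mean $1$ but is not deterministically bounded away from $0$. I would control it by splitting on the event $A = \{\hat W_m \ge 1/2\}$. On $A$ the error is bounded by $2|\hat N_m|$, while on $A^c$ the trivial bound $|\int f(\tilde q_t-q_t)\,d\theta|\le 2\|f\|_\infty$ applies. A Chebyshev argument using $\mathrm{Var}_p(w_t)\le C-1$ (which follows from $\EE_p[w_t^2]\le C\,\EE_p[w_t]=C$) yields $\mathrm{Pr}(A^c)=O(1/m)$, so the bad event contributes only a lower-order $O(1/m)$ correction. Combining gives an overall bound of order $C\|f\|_\infty/\sqrt{m}$; a slightly sharper bookkeeping (decomposing $\hat N_m/\hat W_m = \hat N_m + \hat N_m(1/\hat W_m - 1)$ and absorbing the second term into the concentration of $\hat W_m$) should recover the explicit constant $2C$.
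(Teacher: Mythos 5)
Your proposal is correct, and in its refined form it is essentially the paper's own proof (Lemma~\ref{lemma:finite_approximation} in Appendix~\ref{appendix:integral_convgence}): both view $\qtil_t$ as a self-normalized importance sampler with weights $\alpha_t(\theta_i)=q_t(\theta_i)/p(\theta_i)\in[0,C]$ (your induction establishing this invariant matches the paper's observation that the per-step renormalization only changes a constant), both split the error into an unnormalized i.i.d.\ average term plus a normalization term, and both bound each term in $L^2$ and apply Jensen, using $\EE_p[\alpha_t^2f^2]\le C\,\EE_{q_t}[f^2]\le C\|f\|_\infty^2$. The one genuine difference is your primary treatment of the random denominator $\hat W_m=\frac{1}{m}\sum_i\alpha_t(\theta_i)$ by conditioning on $\{\hat W_m\ge 1/2\}$ and invoking Chebyshev; this works but is an unnecessary detour and loses a constant factor (roughly $4C$, or $3\sqrt{C}$ with the sharper variance bound, rather than $2C$). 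The paper never needs to lower-bound $\hat W_m$: since the self-normalized average satisfies $\bigl|\sum_i\alpha_t(\theta_i)f(\theta_i)/\sum_i\alpha_t(\theta_i)\bigr|\le\|f\|_\infty$ deterministically, the normalization error is exactly $|1-\hat W_m|$ times a quantity bounded by $\|f\|_\infty$, and $\EE|1-\hat W_m|\le\sqrt{C/m}$ finishes the argument --- which is precisely your suggested ``sharper bookkeeping.'' Incidentally, the paper's computation actually yields the stronger constant $2\sqrt{C}$ (note $C\ge 1$ because $1=\int\alpha_t\,p\,d\theta\le C$), and it is that sharper form that feeds into Theorem~\ref{thm:integral_convergence}.
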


\paragraph{Remark.} Please refer to the Appendix~\ref{appendix:integral_convgence} for details. When the model has several latent variables $\theta = (\xi, \zeta)$ and some parts of the variables have closed-form update in~\eq{eq:solve_exact_prox}. \eg, sparse GPs and LDA~(refer to Appendix~\ref{appendix:GP_LDA}), we could incorporate such structure information into algorithm by decomposing the posterior $q(\theta) = q(\xi)q(\zeta|\xi)$. When $p(\xi)$ satisfies the condition, we could sample $\{\xi_i\}_{i=1}^m\sim p(\xi)$ and approximate the posterior with summation of several functions, i.e., in the form of $q(\theta) \approx \sum{\alpha_i}q(\zeta|\xi_i)$.

\subsection{Posterior Approximation Using Weighted Kernel Density Estimator}\label{subsec:weight_kde}

In general, sampling from prior $p(\theta)$ that are not so ``good" will lead to particle depletion and inaccurate estimation of the posterior. To alleviate particle degeneracy, we propose to estimate the prox-mappings via weighted kernel density estimator~(KDE). The weighted KDE prevents particles from dying out, in a similar fashion as kernel smoothing variant SMC~\citep{DouFreGor01} and one-pass SMC~\citep{BalMad06}, but with guarantees. 

More specifically, we approximate $q_{t+1}(\theta)$ via a weighted kernel density estimator 
\vspace{-2mm}
\begin{align}\label{eq:weighted_kde}
&\qtil_{t+1}(\theta) =\sum\nolimits_{i=1}^m \alpha_i\, K_h(\theta-\theta_i),\\
&\alpha_i:=\frac{\exp(-\gamma_t g_t(\theta_i))}{\sum_{i=1}^m \exp(-\gamma_t g_t(\theta_i))},~~\cbr{\theta_i}_{i=1}^m\overset{\iid}{\sim} \qtil_t(\theta), \nonumber
\end{align}
where $h>0$ is the bandwidth parameter and $K_h(\theta):=\frac{1}{h^d}K(\theta/h)$ is a smoothing kernel. The update serves as an $\epsilon$-prox-mapping~(\ref{eq:inexact_prox11}) based on the closed-form solution to the \emph{exact} prox-mapping step~\eq{eq:solve_exact_prox}. Unlike the first strategy, the particle location in this case is sampled from the previous solution $\qtil_{t}(\theta)$. The idea here is that $\qtil^+_{t}(\theta)=\qtil_{t}(\theta) \exp(-\gamma_t g_t(\theta)) / Z$ can be viewed as an importance weighted version of $\qtil_{t}(\theta)$ with weights equal to $\exp(-\gamma_t g_t(\theta)) / Z$. If we want to approximate $\qtil^+_{t}(\theta)$, we can sample $m$ locations from $\tilde q_{t}(\theta)$ and associate each location the normalized weight $\alpha_i$. To obtain a density for re-sampling in the next iteration, we place a kernel function $K_h(\theta)$ on each sampled location. Since $\alpha_i$ is a ratio, we can avoid evaluating the normalization factor $Z$ when computing $\alpha_i$. In summary, we can simply update the set of working variable $\alpha_i$ as
\vspace{-1mm}
\begin{align}
  \alpha_i &\leftarrow \tilde q_{t}(\theta_i)^{-\gamma_t}p(\theta_i)^{\gamma_t} p(x_t|\theta_i)^{N\gamma_t},\forall i\\\nonumber
  \alpha_i &\leftarrow \frac{\alpha_i}{\sum_{i=1}^m \alpha_i}.
\end{align}
Intuitively, the sampling procedure gradually adjusts the support of the intermediate distribution towards that of the true posterior, which is similar to ``rejuvenation'' step. The reweighting procedure gradually adjusts the shape of the intermediate distribution on the support. Same as the mechanism in~\citet{DouFreGor01, BalMad06}, the weighted KDE could avoid particle depletion.

We demonstrate that the estimator in~\eq{eq:weighted_kde} in one step possesses similar estimation properties as standard KDE for densities~(for details, refer to the Appendix~\ref{appendix:bound_weighted_kde}).
\begin{theorem}\label{thm:error1}
Let $q_t$ be a $(\beta;\mathcal{L})$-H\"{o}lder density function, and $K$ be a $\beta$-valid density kernel, and the kernel bandwidth chosen as $h=O(m^{-\frac{1}{d+2\beta}})$. Then, under some mild conditions, $\EE \nbr{\qtil_t(\theta) - q_t(\theta)}_1 = O(m^{-\frac{\beta}{d+2\beta}})$.
\end{theorem}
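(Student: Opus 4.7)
The plan is to interpret $\qtil_t$ as a self-normalized importance-weighted kernel density estimator for the exact prox-mapping $q_t(\theta) = \qtil_{t-1}(\theta)w(\theta)/Z$ with $w(\theta) = \exp(-\gamma_t g_t(\theta))$, and then to run a classical KDE bias--variance decomposition adapted to importance weighting. Writing $\qtil_t(\theta) = N_m(\theta)/D_m$ with $N_m(\theta) = \frac{1}{m}\sum_i w(\theta_i)K_h(\theta-\theta_i)$ and $D_m = \frac{1}{m}\sum_j w(\theta_j)$, note that $\EE D_m = Z$ and $\EE N_m(\theta) = Z(K_h * q_t)(\theta)$, so the error splits as
$$
\qtil_t(\theta) - q_t(\theta) \;=\; \frac{1}{D_m}\Big[\bigl(N_m(\theta) - \EE N_m(\theta)\bigr) \;+\; Z\bigl((K_h * q_t)(\theta) - q_t(\theta)\bigr) \;+\; (Z - D_m)\,q_t(\theta)\Big],
$$
with the three bracketed pieces being, respectively, a variance term, a smoothing bias, and a self-normalization error.

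For the smoothing bias $\|K_h * q_t - q_t\|_1$, I would apply the standard argument verbatim: Taylor-expand $q_t(\theta - hu)$ in $u$, use the vanishing moments of the $\beta$-valid kernel $K$ to kill the lower-order terms, and bound the remainder using the $(\beta;\mathcal L)$-H\"older condition to obtain $O(h^\beta)$. For the self-normalization error $(Z - D_m)q_t$, boundedness of $w$ (one of the ``mild conditions'') gives $\EE(D_m - Z)^2 = O(1/m)$, and on the high-probability event $\{D_m \geq Z/2\}$ this contributes $O(1/\sqrt m)$ to the $L_1$ error, which is dominated by the variance term below.

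For the variance piece $N_m - \EE N_m$, each summand has $\|w K_h\|_\infty = O(1/h^d)$ and bounded second moment, so the pointwise variance is $O(1/(m h^d))$; Cauchy--Schwarz together with integration over the (assumed bounded-support) target yields $\EE\|N_m - \EE N_m\|_1 = O(1/\sqrt{m h^d})$. Assembling bias and variance gives $\EE\|\qtil_t - q_t\|_1 = O\bigl(h^\beta + 1/\sqrt{m h^d}\bigr)$, which the stated choice $h = O(m^{-1/(d+2\beta)})$ balances at $O(m^{-\beta/(d+2\beta)})$. The main obstacle is handling the self-normalized denominator: because $\alpha_i$ depends on all $m$ samples through $D_m$, $\qtil_t$ is not an empirical mean of i.i.d.\ terms, and one has to linearize $1/D_m$ around $1/Z$ carefully --- the cleanest route is to condition on the event $\{|D_m - Z|\le Z/2\}$ and bound the complementary event via Chebyshev. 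A secondary subtlety is pinning down the ``mild conditions''; these plausibly amount to uniform boundedness of the importance weight $w(\theta)$ and either compact support or sufficient tail decay of $q_t$, so that the pointwise variance bound integrates into the claimed $L_1$ bound.
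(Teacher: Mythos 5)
Your proposal is correct and follows essentially the same route as the paper's proof in Appendix~D: the identical three-way split into smoothing bias (Taylor expansion against the vanishing moments of the $\beta$-valid kernel, giving $\nu\mathcal{L}h^{\beta}$), pointwise variance of the unnormalized weighted KDE integrated over a bounded support (giving $O(1/\sqrt{mh^{d}})$), and a self-normalization term of order $O(1/\sqrt{m})$, balanced by the stated choice of $h$. The one place you can simplify: the denominator issue you call ``the main obstacle'' dissolves without any conditioning or linearization of $1/D_m$, since comparing $\qtil_t$ to the unnormalized estimator $\varrho_m(\theta)=\frac{1}{m}\sum_i\omega(\theta_i)K_h(\theta-\theta_i)$ gives the exact identity $\qtil_t-\varrho_m=\bigl(1-\frac{1}{m}\sum_i\omega(\theta_i)\bigr)\qtil_t$, whose $L_1$ norm is bounded deterministically by $\bigl|1-\frac{1}{m}\sum_i\omega(\theta_i)\bigr|\cdot\|K_h\|_1$ with $\|K_h\|_1=1$, so a single second-moment bound on the weight average finishes that term.
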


A kernel function $K(\cdot)$ is called $\beta$-valid, if $\int z^sK(z)dz=0$ holds true for any $s=(s_1,\ldots,s_d)\in\NN^d$ with $|s|\leq \lfloor{\beta}\rfloor$. Notice that all spherically symmetric and product kernels  satisfy the condition. For instance, the Gaussian kernel $K(\theta) = (2\pi)^{-d/2}\exp(-\nbr{\theta}^2/2)$ satisfies the condition with $\beta = 1$, and it is used throughout our experiments. Theorem \ref{thm:error1} implies that the weighted KDE achieves the \emph{minmax} rate for density estimation in $({\beta}; {\mathcal{L}})$-H\"{o}lder function class~\cite{DelJud95}, where $\beta$ stands for the smoothness parameter and $\mathcal{L}$ is the corresponding Lipschitz constant. With further assumption on the smoothness of the density, the weighted KDE can achieve even better rate. For instance, if $\beta$ scales linearly with dimension, the error of weighted KDE can achieve a rate independent of the dimension. 

Essentially, the weighted KDE step  provides an $\epsilon$-prox-mapping $\mathbf{P}_{\qtil_t}^{\epsilon_t}(\gamma_t g_t)$~(\ref{eq:inexact_prox11}) in density space as we discussed in Section~\ref{sec:background}. The inexactness is therefore determined by the number of samples $m_t$ and kernel bandwidth $h_t$ used in the weighted KDE.

\subsection{Overall Algorithm}
\vspace{-6mm}
\begin{figure}[H]
\begin{minipage}{0.5\textwidth}
We present the overall algorithm, Particle Mirror Descent~(PMD), in Algorithm~\ref{alg:pmd}. The algorithm is based on stochastic mirror descent incorporated with  two strategies from section~\ref{subsec:weight_particle} and \ref{subsec:weight_kde} to compute prox-mapping.  PMD takes as input $N$ samples $X=\cbr{x_n}_{n=1}^N$, a prior $p(\theta)$ over the model parameter and the likelihood $p(x|\theta)$, and outputs the posterior density estimator $\qtil_{T}(\theta)$ after $T$ iterations. At each iteration, PMD takes the stochastic functional gradient information and computes an inexact prox-mapping $\qtil_t(\theta)$ through either weighted particles or weighted kernel density estimator.  Note that as discussed in Section~\ref{sec:background}, we can also take a batch of points at each iteration to compute the stochastic gradient in order to reduce variance.

In Section~\ref{sec:theory}, we will show that, with proper setting of stepsize $\gamma$, Algorithm~\ref{alg:pmd} converges in rate $O({1}/{\sqrt{m}})$ using $m$ particles, in terms of either integral approximation or $KL$-divergence, to the true posterior.
\end{minipage}
~~~~~
\begin{minipage}{0.45\textwidth}
\setlength{\abovedisplayskip}{4pt}
\setlength{\abovedisplayshortskip}{1pt}
\setlength{\belowdisplayskip}{4pt}
\setlength{\belowdisplayshortskip}{1pt}
\setlength{\textfloatsep}{4pt}
\begin{algorithm}[H]
  \caption{Particle Mirror Descent Algorithm}\label{alg:pmd}
  \begin{algorithmic}[1]
  \STATE {\bf Input}: Data set $X=\cbr{x_n}_{n=1}^N$, prior $p(\theta)$ 
  \STATE {\bf Output}: posterior density estimator $\tilde q_{T}(\theta)$ 
  \STATE Initialize $\qtil_1(\theta) = p(\theta)$
  \FOR{$t=1,2,\ldots,T-1$} 
    \STATE $x_t \overset{unif.}{\sim} X$
    \IF {Good $p(\theta)$ is provided}
      \STATE $\{\theta_i\}_{i=1}^{m_t} \overset{\iid}{\sim} p(\theta)$ when $t=1$
      \STATE $\alpha_i \leftarrow \alpha_i^{1-\gamma_t} p(x_t|\theta_i)^{N\gamma_t},\forall i$
      \STATE $\alpha_i \leftarrow \frac{\alpha_i}{\sum_{i=1}^{m_t} \alpha_i},\forall i$
      \STATE $\qtil_{t+1}(\theta) = \sum\nolimits_{i=1}^{m_t} \alpha_i\, \delta(\theta_i)$ 
    \ELSE
      \STATE $\{\theta_i\}_{i=1}^{m_t} \overset{\iid}{\sim} \qtil_{t}(\theta)$
      \STATE $\alpha_i \leftarrow \tilde q_t(\theta_i)^{-\gamma_t}p(\theta_i)^{\gamma_t} p(x_t|\theta_i)^{N\gamma_t},\forall i$
      \STATE $\alpha_i \leftarrow \frac{\alpha_i}{\sum_{i=1}^{m_t} \alpha_i},\forall i$
      \STATE $\qtil_{t+1}(\theta) = \sum\nolimits_{i=1}^{m_t} \alpha_i K_{h_t}\rbr{\theta-\theta_i}$
    \ENDIF
   \ENDFOR
  \end{algorithmic}
  \end{algorithm}
\end{minipage}

\end{figure}

In practice, we could combine the proposed two algorithms to reduce the computation cost. In the beginning stage, we adopt the second strategy. The computation cost is affordable for small number of particles. After we achieve a reasonably good estimator of the posterior, we could switch to the first strategy using large size particles to get better rate.

\section{Theoretical Guarantees}\label{sec:theory}

In this section, we show that PMD algorithm  ({\bf i}) given good prior $p(\theta)$, achieves a dimension independent, sublinear rate of convergence in terms of integral approximation; and ({\bf ii}) in general cases, achieves a  dimension dependent, sublinear rate of convergence in terms of $KL$-divergence with proper choices of stepsizes. 

\subsection{Weak Convergence of PMD}

The weighted particles approximation, $\tilde q_t(\theta) = \sum_{i=1}^m \alpha_i\, \delta(\theta_i)$, returned by Algorithm~\ref{alg:pmd} can be used directly for Bayesian inference. That is, given a function $f$, $\int q^*(\theta) f(\theta) d\theta$ can be approximated as $\sum_{i=1}^m \alpha_i f(\theta_i)$. We will analyze its ability in approximating integral, which is commonly used in sequential Monte Carlo for dynamic models~\cite{CriDou02} and stochastic Langevin dynamics~\cite{TehThiVol2015}. For simplicity, we may write $\sum_{i=1}^m \alpha_i f(\theta_i)$ as $\int \tilde q_t(\theta) f(\theta) d\theta$, despite of the fact that $\tilde q_t(\theta)$ is not exactly a density here. We show a sublinear rate of convergence \emph{independent} of the dimension exists.
\begin{theorem}[Integral approximation]\label{thm:integral_convergence}
\vspace{-2mm}
Assume $p(\theta)$ has the same support as the true posterior $q^*(\theta)$, \ie, $0\le q^*(\theta)/p(\theta)\le C$. Assume further model $\|p(x|\theta)^N\|_\infty\le \rho, \forall x$. Then $\forall f(\theta)$ bounded and integrable, the $T$-step PMD algorithm with stepsize $\gamma_t = \frac{\eta}{t}$ returns $m$ weighted particles such that 
\vspace{-1mm}
\begin{eqnarray*}
&&\EE\sbr{\abr{\int \qtil_T(\theta)f(\theta)d\theta - \int q^*(\theta)f(\theta)d\theta}}\le \frac{2\sqrt{\max\{C, \rho e^M)\}}\|f\|_\infty}{\sqrt{m}} + \max\bigg\{\sqrt{KL(q^*||p)}, \frac{\eta M}{\sqrt{2\eta - 1}}\bigg\}\frac{\|f\|_\infty}{\sqrt{T}}
\end{eqnarray*}
\vspace{-2mm}
where $M := \max_{t = 1,\ldots, T}\|g_t\|_\infty$.
\end{theorem}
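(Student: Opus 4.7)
The plan is to split the error via the triangle inequality, using the ``exact'' stochastic mirror descent iterate $q_T$ (without particle approximation) as an intermediary:
$$\abr{\int \qtil_T f\,d\theta - \int q^* f\,d\theta}\le \abr{\int \qtil_T f\,d\theta - \int q_T f\,d\theta} + \abr{\int q_T f\,d\theta - \int q^* f\,d\theta}.$$
The first summand will deliver the $1/\sqrt{m}$ particle component, and the second the $1/\sqrt{T}$ optimization component.

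For the particle term, I would first verify by induction on $t$ that the weights $\alpha_i^t$ in Algorithm~\ref{alg:pmd} satisfy, up to normalization, $\alpha_i^t \propto q_t(\theta_i)/p(\theta_i)$. This holds because the exact update $q_{t+1}\propto q_t^{1-\gamma_t}p^{\gamma_t}p(x_t|\theta)^{N\gamma_t}$ yields $(q_{t+1}/p)\propto (q_t/p)^{1-\gamma_t}p(x_t|\theta)^{N\gamma_t}$, matching the weight rule $\alpha_i \leftarrow \alpha_i^{1-\gamma_t} p(x_t|\theta_i)^{N\gamma_t}$ exactly. Consequently $\qtil_T$ is a self-normalized importance-sampling estimator of $q_T$ with proposal $p$, and a standard variance bound gives an $O(\|f\|_\infty \sqrt{W/m})$ error with $W := \|q_T/p\|_\infty$. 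To get $W \le \max\{C,\rho e^M\}$, I would unroll the contractive recurrence $\log(q_{t+1}/p) = (1-\gamma_t)\log(q_t/p) + N\gamma_t\log p(x_t|\theta) - \log Z_t$ from $\log(q_1/p)=0$, using $\|p(x|\theta)^N\|_\infty \le \rho$ and the geometric weighting $\prod_s(1-\gamma_s)$ to absorb the cumulative fluctuation; the assumption $q^*/p \le C$ supplies the alternative branch of the $\max$.

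For the optimization term, I apply Pinsker to obtain $|\int (q_T-q^*)f\,d\theta|\le \|f\|_\infty\|q_T-q^*\|_1 \le \|f\|_\infty\sqrt{2\,KL(q^*\|q_T)}$, then Jensen to move the expectation inside the square root. The expected KL is controlled via Theorem~\ref{thm:main1} with $\epsilon_t = 0$: setting $a_t := \EE[KL(q^*\|q_t)]$ and $\gamma_t = \eta/t$, the recurrence $a_{t+1}\le (1-\eta/t)a_t + \eta^2 M^2/(2t^2)$ is solved by induction with the ansatz $a_t \le B/t$ and $B = \max\{KL(q^*\|p),\ \eta^2 M^2/(2\eta-1)\}$, valid for $\eta > 1/2$. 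Adding the two pieces and recognizing that the $\max$ in $\sqrt{B/T}$ splits into the two branches stated in the theorem completes the proof.

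The main obstacle is the uniform ratio bound $\|q_T/p\|_\infty \le \max\{C,\rho e^M\}$ across random data trajectories $x_{[T]}$ and all $T$: the recurrence for $\log(q_t/p)$ couples the deterministic contraction $1-\gamma_t$ with the random likelihood terms \emph{and} the trajectory-dependent normalizers $Z_t$, and one must argue that the geometric contraction really absorbs the stochastic-gradient fluctuations into the single constant $\rho e^M$. A secondary technical nuisance is solving the KL recurrence sharply enough to recover the constant $\eta^2M^2/(2\eta-1)$ rather than the looser $\eta^2M^2/(2(\eta-1))$ a naive $a_t\le B/t$ induction would give; this requires either a refined ansatz (e.g.\ $a_t \le B/(t+c)$) or direct unrolling with integral comparison against $\prod_{s=t+1}^T(1-\eta/s) \le (t/T)^\eta$.
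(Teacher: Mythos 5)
Your proposal is correct and follows essentially the same route as the paper: the same triangle-inequality split through the exact mirror-descent iterate $q_T$, the same view of $\qtil_T$ as a self-normalized importance-sampling estimator of $q_T$ with proposal $p$ (the paper's Lemma~\ref{lemma:finite_approximation}), and the same Pinsker-plus-KL-recurrence treatment of the optimization term via Theorem~\ref{thm:main1} with $\epsilon_t=0$. The "main obstacle" you flag is handled in the paper by a one-step induction on $\alpha_t(\theta)=q_t(\theta)/p(\theta)$ using $e^{-\gamma_t M}\le Z_t\le e^{\gamma_t M}$, since $\|\alpha_t\|_\infty\le \|\alpha_{t-1}\|_\infty^{1-\gamma_{t-1}}(\rho e^{M})^{\gamma_{t-1}}$ keeps the bound $\max\{C,\rho e^{M}\}$ invariant, so no explicit unrolling is needed.
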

\vspace{-2mm}
\paragraph{Remark.} The condition for the models, $\|p(x|\theta)^N\|_\infty\le \rho, \forall x$, is mild, and there are plenty of models satisfying such requirement. For examples, in binary/multi-class logistic regression, probit regression, as well as latent Dirichlet analysis, $\rho\le 1$. Please refer to details in Appendix~\ref{appendix:integral_convgence}. The proof combines the results of the weighted particles for integration, and convergence analysis of mirror descent. One can see that the error consists of two terms, one from integration approximation and the other from optimization error. To achieve the best rate of convergence, we need to balance the two terms. That is, when the number particles, $m$, scales linearly with the number of  iterations, we obtain an overall convergence rate of $O(\frac{1}{\sqrt{T}})$. In other words, if the number of particles is fixed to $m$, we could achieve the convergence rate $O(\frac{1}{\sqrt{m}})$ with $T = O(m)$ iterations.

\subsection{Strong Convergence of PMD}

In general, when the weighted kernel density approximation scheme is used, we show that PMD enjoys a much stronger convergence, \ie, the $KL$-divergence between the generated density and the true posterior converges sublinearly. Throughout this section, we merely assume that 
\begin{itemize}
\item The prior and likelihood belong to $(\beta;\mathcal{L})$-H\"{o}lder class. \\[-6mm]
\item Kernel $K(\cdot)$ is a $\beta$-valid density kernel with a compact support and there exists $\mu,\nu, \delta>0$ such that $\int K(z)^2\,dz\leq \mu^2$, $\int \|z\|^\beta|K(z)|dz\leq\nu$. \\[-6mm]
\item There exists a bounded support $\Omega$ such that  $\qtil_t^+$ almost surely bounded awary from $\Delta^{-1}>0$.\\[-6mm]
\end{itemize} 

Note that the above assumptions are more of a brief characteristics of the commonly used kernels and inferences problems in practice rather than an exception. The second condition clearly holds true when the logarithmic of the prior and likelihood belongs to $C_\infty$ with bounded derivatives of all orders, as assumed in several literature~\cite{TehThiVol2014a, TehThiVol2015}. The third condition is for characterizing the estimator over its support. These assumptions automatically validate all the conditions required to apply Theorem~\ref{thm:error1} and the corresponding high probability bounds (stated in Corollary \ref{cor:error4} in appendix). Let the kernel bandwidth $h_t = m_t^{-1/(d+2\beta)}$, we immediately have that with high probability,
$$\nbr{\qtil_{t+1}-\mathbf{P}_{\qtil_t}(\gamma_t g_t)}_1\leq O( m_t^{-\beta/(d+2\beta)}).$$ 
Directly applying Theorem~\ref{thm:main1}, and solving the recursion following~\cite{NemJudLanSha09}, we establish the convergence results in terms of KL-divergence. 
%
\begin{theorem}[KL-divergence]\label{thm:final} 
Based on the above assumptions, when setting $\gamma_t =\min\{\frac{2}{t+1}, \frac{\Delta}{Mm_t^{\beta/(d+2\beta)}}\}$,
\vspace{-1mm}
\begin{eqnarray*}
\EE[KL(q^* || \qtil_T)] &\leq& \frac{2\max\left\{D_1, M^2\right\}}{T} +  \mathcal{C}_1\frac{\sum_{t=1}^Tt^2 m_t^{-\frac{2\beta}{d+2\beta}}}{T^2}+ \Ccal_2m_T^{-\frac{\beta}{d+2\beta}}
\end{eqnarray*}

where $M := \max_{t = 1,\ldots, T}\|g_t\|_\infty$, $D_1=KL(q^*|| \qtil_1)$,  $\mathcal{C}_1:=O(1)(\mu+\nu\mathcal{L})^2\mu^2\Delta$, and $\Ccal_2 := O(1)M(\mu+\nu\mathcal{L})$ with $O(1)$ being a constant. 
\vspace{-3mm}
\end{theorem}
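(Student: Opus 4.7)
The plan is to instantiate the error-tolerant recurrence of Theorem~\ref{thm:main1} with an $\epsilon_t$ budget extracted from the weighted-KDE estimate of Theorem~\ref{thm:error1}, solve the resulting linear recursion under the prescribed stepsize, and finally handle the terminal KDE step separately to produce the $m_T^{-\beta/(d+2\beta)}$ tail contribution.

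First, I translate the KDE approximation error into an $\epsilon_t$-prox-mapping error. Writing $\qtil_t^+ := \mathbf{P}_{\qtil_t}(\gamma_t g_t) \propto \qtil_t \exp(-\gamma_t g_t)$, a direct calculation shows that for the objective $F(\hat q) := KL(\hat q\|\qtil_t) + \gamma_t \langle g_t, \hat q\rangle_{L_2}$ one has the clean identity $F(\hat q) - F(\qtil_t^+) = KL(\hat q\|\qtil_t^+)$, hence $\epsilon_t = KL(\qtil_{t+1}\|\qtil_t^+)$. The third bulleted assumption gives $\qtil_t^+ \geq \Delta^{-1}$ on $\Omega$, so the reverse-Pinsker inequality $KL(p\|q)\leq \chi^2(p\|q)\leq \Delta\|p-q\|_2^2$ applies. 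Combining this with the mean-integrated-squared-error version of the analysis behind Theorem~\ref{thm:error1} --- standard bias/variance decomposition under $(\beta;\mathcal{L})$-H\"older smoothness with a $\beta$-valid kernel at bandwidth $h_t = m_t^{-1/(d+2\beta)}$ produces $\EE\|\qtil_{t+1}-\qtil_t^+\|_2^2 = O((\mu+\nu\mathcal{L})^2\mu^2 m_t^{-2\beta/(d+2\beta)})$ --- yields $\EE\epsilon_t \leq \tfrac{1}{2}\mathcal{C}_1 m_t^{-2\beta/(d+2\beta)}$ with $\mathcal{C}_1 = O(1)(\mu+\nu\mathcal{L})^2\mu^2\Delta$ as stated.

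Next, plug this bound into the recurrence of Theorem~\ref{thm:main1} and solve it in the standard way. Letting $A_t := \EE[KL(q^*\|\qtil_t)]$, I multiply the inequality $A_{t+1} \leq (1-\gamma_t)A_t + \gamma_t^2 M^2/2 + \epsilon_t$ by weights $w_{t+1}$ satisfying $(1-\gamma_t)w_{t+1}=w_t$ and telescope. In the main regime where $\gamma_t = 2/(t+1)$ (so $w_t \propto t(t-1)$), the $\gamma_t^2 M^2$ contributions telescope to $O(M^2/T)$, the initial $D_1$ drops in as a $2D_1/T$ header via standard base-case bookkeeping, and the $\epsilon_t$ terms aggregate to $\mathcal{C}_1 T^{-2}\sum_{t=1}^{T} t^2 m_t^{-2\beta/(d+2\beta)}$, reproducing the first two terms of the bound. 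The stepsize cap $\Delta/(Mm_t^{\beta/(d+2\beta)})$ activates precisely when the weighted KDE is too coarse for the $2/(t+1)$ choice to dominate the prox-mapping error; in that regime one checks that the contraction $(1-\gamma_t)$ remains strong enough for the telescoping identity to still yield contributions of the same order.

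Finally, because the algorithm returns $\qtil_T$ (itself a KDE) rather than an exact mirror-descent iterate, a separate terminal correction is needed. Split $KL(q^*\|\qtil_T) = KL(q^*\|\qtil_{T-1}^+) + \int q^* \log(\qtil_{T-1}^+/\qtil_T)\,d\theta$ and bound the second term via the mean-value inequality $|\log a - \log b|\leq \Delta|a-b|$ valid on $[\Delta^{-1},\infty)$, together with the $L_1$ form of Theorem~\ref{thm:error1} and the Lipschitz control on $g_t$ over $\Omega$ contributing the linear $M$ factor. This produces $\mathcal{C}_2 m_T^{-\beta/(d+2\beta)}$ with $\mathcal{C}_2 = O(1)M(\mu+\nu\mathcal{L})$.

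The main obstacle will be managing the crossover between the two stepsize regimes and the transfer between probabilistic modes of convergence: Corollary~\ref{cor:error4} states the KDE error as a high-probability bound, but the recurrence of Theorem~\ref{thm:main1} is an expectation inequality, so a careful union bound over the $T$ iterations (combined with bounded-support control to keep the worst-case loss integrable) is needed before chaining the single-step bounds. A secondary nuisance is keeping the constants $\mu,\nu,\mathcal{L},\Delta,M$ appearing in $\mathcal{C}_1,\mathcal{C}_2$ aligned with the MISE-style KDE analysis, since the statement of Theorem~\ref{thm:error1} is written only in $L_1$; lifting it to the $L_2^2$ rate used above is where most of the technical work sits.
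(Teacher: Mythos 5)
Your proposal follows the same architecture as the paper's proof: translate the weighted-KDE error into an inexact prox-mapping error $\epsilon_t$, control $\epsilon_t$ by a reverse-Pinsker/$\chi^2$ bound using the lower bound $\Delta^{-1}$ on $\qtil_t^+$ together with an $L_2^2$ version of Theorem~\ref{thm:error1} (the paper indeed proves this separately as Theorem~\ref{thm:error3}), feed the result into the recurrence of Theorem~\ref{thm:main1} under the prescribed stepsize, and add a terminal $L_1$-based correction of order $m_T^{-\beta/(d+2\beta)}$. Two of your choices genuinely differ from the paper's. First, you use the exact identity $F(\qhat)-F(\qtil_t^+)=KL(\qhat\,\|\,\qtil_t^+)$, whereas Lemma~\ref{lem:error_bound} retains the linear term $\langle\nabla F_t(\qtil_t^+),\qhat_{t+1}-\qtil_t^+\rangle$ and bounds it by $3\gamma_t M\|\qhat_{t+1}-\qtil_t^+\|_1$; since $\nabla F_t(\qtil_t^+)$ is constant on $\Omega$ and both densities are normalized there, that term vanishes and your identity is correct --- it is the cleaner route, and it is precisely this extra $\gamma_t M m_t^{-\beta/(d+2\beta)}$ contribution that the stepsize cap $\gamma_t\le\Delta/(Mm_t^{\beta/(d+2\beta)})$ is designed to fold into $\mathcal{C}_1 m_t^{-2\beta/(d+2\beta)}$. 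Second, your terminal split is against $\qtil_{T-1}^+$, while the paper splits $KL(q^*\|\qtil_T)=KL(q^*\|\qhat_T)+\int q^*\log(\qhat_T/\qtil_T)\,d\theta$ with $\qhat_T$ the renormalized last iterate, so the correction reduces to $-\log(1-\zeta_T)\le\EE\|\qtil_T-\qtil_{T-1}^+\|_1+o(\cdot)$; your mean-value step instead needs a pointwise lower bound on the KDE iterate $\qtil_T$ itself (not only on $\qtil_t^+$), which is not literally among the stated assumptions. The one substantive omission in your plan is the renormalization bookkeeping: the KDE iterate $\qtil_{t+1}$ need not integrate to one over $\Omega$, so the paper introduces $\qhat_{t+1}=\qtil_{t+1}/(1-\zeta)$, shows in Lemma~\ref{lem:renormalization} that this does not move the prox-mapping target, and carries the resulting $O(\zeta)$ corrections through Lemma~\ref{lem:error_bound}; writing $\epsilon_t=KL(\qtil_{t+1}\|\qtil_t^+)$ silently assumes this away. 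Finally, your concern about transferring high-probability KDE bounds into the expectation recurrence is moot: Lemma~\ref{lem:error_bound} bounds $\EE_\theta[\epsilon_t\mid x_{[t-1]},\theta_{[t-1]}]$ directly from the expectation versions of the KDE error, so no union bound over the $T$ iterations is required.
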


\paragraph{Remark.} 
Unlike Theorem~\ref{thm:integral_convergence}, the convergence results are established in terms of the $KL$-divergence, which is a stronger criterion and can be used to derive the convergence under other divergences~\cite{GibSu02}. To our best knowledge, these results are the first of its kind for estimating posterior densities in literature. One can immediately see that the final accuracy is essentially determined by two sources of errors, one from noise in applying stochastic gradient, the other from applying weighted kernel density estimator. For the last iterate, an overall $O(\frac{1}{T})$ convergence rate can be achieved when $m_t=O(t^{2+d/\beta})$. There is an explicit trade-off between the overall rate and the total number of particles: the more particles we use at each iteration, the faster algorithm converges. One should also note that in our analysis, we explicitly characterize the effect of the smoothness of model controlled by $\beta$, which is assumed to be infinite in existing analysis of SGLD. When the smoothness parameter $\beta>>d$, the number of particles is no longer depend on the dimension. That means, with memory budget $O(dm)$, \ie, the number of particles is set to be $O(m)$, we could achieve a $O(1/\sqrt{m})$ rate.

\paragraph{Open question.} It is worth mentioning that in the above result, the $O(1/T)$ bound corresponding to the stochasticity is tight (see \citet{NemJudLanSha09}), and the $O(m^{-\frac{\beta}{d+2\beta}})$ bound for KDE estimation is also tight by itself (see \cite{BarYan95}). An interesting question here is whether the overall complexity provided here is indeed optimal? This is out of the scope of this paper, and we will leave it as an open question.

\section{Related Work}\label{sec:related_work}

\begin{table*}[!t]
\centering
\caption{Summary of the related inference methods}\label{table:methods_survey}
\vspace{-3mm}
\begin{threeparttable}
    \begin{tabular}{ll|c|c|c|c|c|c}
      \hline
      \hline
      &Methods &Provable &Convergence  & Convergence  &\multicolumn{2}{|c|}{Cost} &Black \\
      \cline{6-7}
      &      &         &Criterion    &Rate          &Computation  &Memory &Box\\
      &      &         &                      &                  &per Iteration  & &\\
      \hline
      &SVI &No &$-$ &$-$ &$\Omega(d)$ &$O(d)$ &No \\
      &NPV &No &$-$ &$-$ &$\Omega(dm^2N + d^2N)$ &$O(dm)$ &No \\
      &{Static SMC} &No &$-$ &$-$ & $\Omega(dm)$ &$O(dm)$ &Yes\\
      &{SGLD} &Yes &$|\langle q-q^*, f \rangle |$ &$O(m^{-\frac{1}{3}})$ &$\Omega(d)$ &$O(dm)$ &Yes \\ 
      &PMD &Yes &$|\langle q-q^*, f \rangle |$ &$O(m^{-\frac{1}{2}})$ &$\Omega(dm)$ &$O(dm)$ &Yes\\
      &    &    &$KL(q^*||q)$ &$O(m^{-\frac{1}{2}})$ &$\Omega(dm^2)$ &$O(dm)$ &\\
      \hline
    \end{tabular}
\end{threeparttable}
\end{table*}
PMD connects stochastic optimization, Monte Carlo approximation and functional analysis to Bayesian inference. Therefore, it is closely related to two different paradigms of inference algorithms derived based on either optimization or Monte Carlo approximation.
\vspace{-3mm}
\paragraph{Relation to SVI.} From the optimization point of view, the proposed algorithm shares some similarities to stochastic variational inference~(SVI)~\cite{HofBleWanPai13}--both algorithms utilize stochastic gradients to update the solution. However, SVI optimizes a \emph{surrogate of the objective}, the evidence lower bound~(ELBO), with respect to a \emph{restricted parametric} distribution\footnote{Even in \citep{GerHofBle12}, ``nonparametric variational inference''~(NPV) uses the mixture of Gaussians as variational family which is still parametric.}; while the PMD directly optimizes the objective over \emph{all valid densities} in a nonparametric form. Our flexibility in density space eliminates the bias and leads to favorable convergence results.
\vspace{-3mm}
\paragraph{Relation to SMC.} From the sampling point of view, PMD and the particle filtering/sequential Monte Carlo~(SMC)~\cite{DouFreGor01} both rely on importance sampling. In the framework of SMC sampler~\cite{DelDouJas06}, the static SMC variants proposed in~\cite{Chopin02, BalMad06} bares some resemblances to the proposed PMD. However, their updates come from completely different origins: the static SMC update is based on Monte Carlo approximation of Bayes' rule, while the PMD update based on inexact prox-mappings. On the algorithmic side, ({\bf i}) the static SMC re-weights the particles with likelihood while the PMD re-weights based on functional gradient, which can be fractional power of the likelihood; and ({\bf ii}) the static SMC only utilizes each datum once while the PMD allows multiple pass of the datasets. Most importantly, on the theoretical side, PMD is guaranteed with convergence in terms of both $KL$-divergence and integral approximation for \emph{static model}, while SMC is only rigoriously justified for \emph{dynamic models}. It is unclear whether the convergence still holds for these extensions in~\cite{Chopin02, BalMad06}.
\vspace{-3mm}
\paragraph{Summary of the comparison.} We summarize the comparison between PMD and static SMC, SGLD, SVI and NPV in Table~\ref{table:methods_survey}. For the connections to other inference algorithms, including Annealed IS~\cite{Neal01}, general SMC sampler~\cite{DelDouJas06}, stochastic gradient dynamics family~\cite{WelTeh11, AhnKorWel12, DinFanBabCheetal14, ChenFoxGue14}, and nonparametric variational inference~\cite{SudIhlFreWil03,IhlMcA09,SonGreBicLowGue11,GerHofBle12,LieTehDou15}, please refer to Appendix~\ref{appendix:more_related_work}. Given dataset $\{x_i\}_{i=1}^N$, the model $p(x|\theta), \, \theta\in \RR^d$ and prior $p(\theta)$, whose value and gradient could be computed, we set PMD, static SMC, SGLD and NPV to keep $m$ samples/components, so that they have the same memory cost and comparable convergence rate in terms of $m$. Therefore, SGLD runs $O(m)$ iterations. Meanwhile, by balancing the optimization error and approximation in PMD, we have PMD running $O(m)$ for integal approximation and $O(\sqrt{m})$ for $KL$-divergence. For static SMC, the number of iteration is $O(N)$. From Table~\ref{table:methods_survey}, we can see that there exists a delicate trade-off between computation, memory cost and convergence rate for the approximate inference methods.

\begin{enumerate}
\item The static SMC uses simple normal distribution~\citep{Chopin02} or kernel density estimation~\citep{BalMad06} for rejuvenation. However, such moving kernel is purely heuristic and it is unclear whether the convergence rate of SMC for dynamic system~\citep{CriDou02, GlaOud04} still holds for static models. To ensure the convergence of static SMC, MCMC is needed in the rejuvenation step. The MCMC step requires to browse all the previously visited data, leading to extra computation cost $\Omega(dmt)$ and memory cost $O(dt)$, and hence violating the memory budget requirement. We emphasize that even using MCMC in static SMC for rejuvenation, the conditions required for static SMC is more restricted. We discuss the conditions for convergence of SMC and PMD using particles approximation in Appendix~\ref{appendix:integral_convgence}. 

\item Comparing with SGLD, the cost of PMD at each iteration is higher. However, PMD converges in rate of $O(m^{-\frac{1}{2}})$, faster than SGLD, $O(m^{-\frac{1}{3}})$, in terms of integral approximation and $KL$-divergence which is more stringent if \emph{all the orders} of derivatives of stochastic gradient is bounded. Moreover, even for the integral approximation, SGLD converges only when $f$ having weak Taylor series expansion, while for PMD, $f$ is only required to be bounded. The SGLD also requires the stochastic gradient satisfying several extra conditions to form a Lyapunov system, while such conditions are not needed in PMD.

\end{enumerate}

\section{Experiments}\label{sec:experiment}

We conduct experiments on mixture models, logistic regression, sparse Gaussian processes and latent Dirichlet allocation to demonstrate the advantages of PMD in capturing multiple modes, dealing with non-conjugate models and incorporating special structures, respectively. 
\vspace{-3mm}
\paragraph{Competing algorithms.} For the mixture model and logistic regression, we compare our algorithm with five general approximate Bayesian inference methods, including three sampling algorithms, \ie, one-pass sequential Monte Carlo (one-pass SMC)~\cite{BalMad06} which is an improved version of the SMC for Bayesian inference~\cite{Chopin02}, stochastic gradient Langevin dynamics~(SGD Langevin)~\cite{WelTeh11} and Gibbs sampling, and two variational inference methods, \ie, stochastic variational inference~(SVI)~\cite{HofBleWanPai13} and stochastic variant of nonparametric variational inference~(SGD NPV)~\cite{GerHofBle12}. For sparse GP and LDA, we compare with the existing large-scale inference algorithms designed specifically for the models. 
\vspace{-3mm}
\paragraph{Evaluation criterion.} For the synthetic data generated by mixture model, we could calculate the true posterior, Therefore, we evaluate the performance directly through total variation and $KL$-divergence (cross entropy). For the experiments on logistic regression, sparse GP and LDA on real-world datasets, we use indirect criteria which are widely used~\citep{ChenFoxGue14, DinFanBabCheetal14, HenFusLaw13,PatTeh2013, HofBleWanPai13} because of the intractability of the posterior. We keep the same memory budget for Monte Carlo based algorithms if their computational cost is acceptable. To demonstrate the efficiency of each algorithm in utilizing data, we use the number of data visited cumulatively as x-axis. 

For the details of the model specification, experimental setups, additional results and algorithm derivations for sparse GP and LDA, please refer to the Appendix~\ref{appendix:exp}.

\vspace{-3mm}
\paragraph{Mixture Models.} We conduct comparison on a simple yet interesting mixture model~\cite{WelTeh11}, the observations $x_i \sim p\Ncal(\theta_1, \sigma_x^2) + (1-p)\Ncal(\theta_1+\theta_2, \sigma_x^2)$ and $\theta_1\sim \Ncal(0, \sigma_1^2)$, $\theta_2\sim \Ncal(0, \sigma_2^2)$, where $(\sigma_1, \sigma_2) = (1, 1)$, $\sigma_x = 2.5$ and $p = 0.5$. The means of two Gaussians are tied together making $\theta_1$ and $\theta_2$ correlated in the posterior. We generate $1000$ data  from the model with $(\theta_1, \theta_2)= (1, -2)$. This is one mode of the posterior, there is another equivalent mode at $(\theta_1, \theta_2)= (-1, 2)$. 
\begin{figure*}[!t]
\centering
\setlength{\tabcolsep}{0pt}
  \small
  \begin{tabular}{cccc}
  \hspace{-2mm}
  \includegraphics[width=0.24\textwidth]{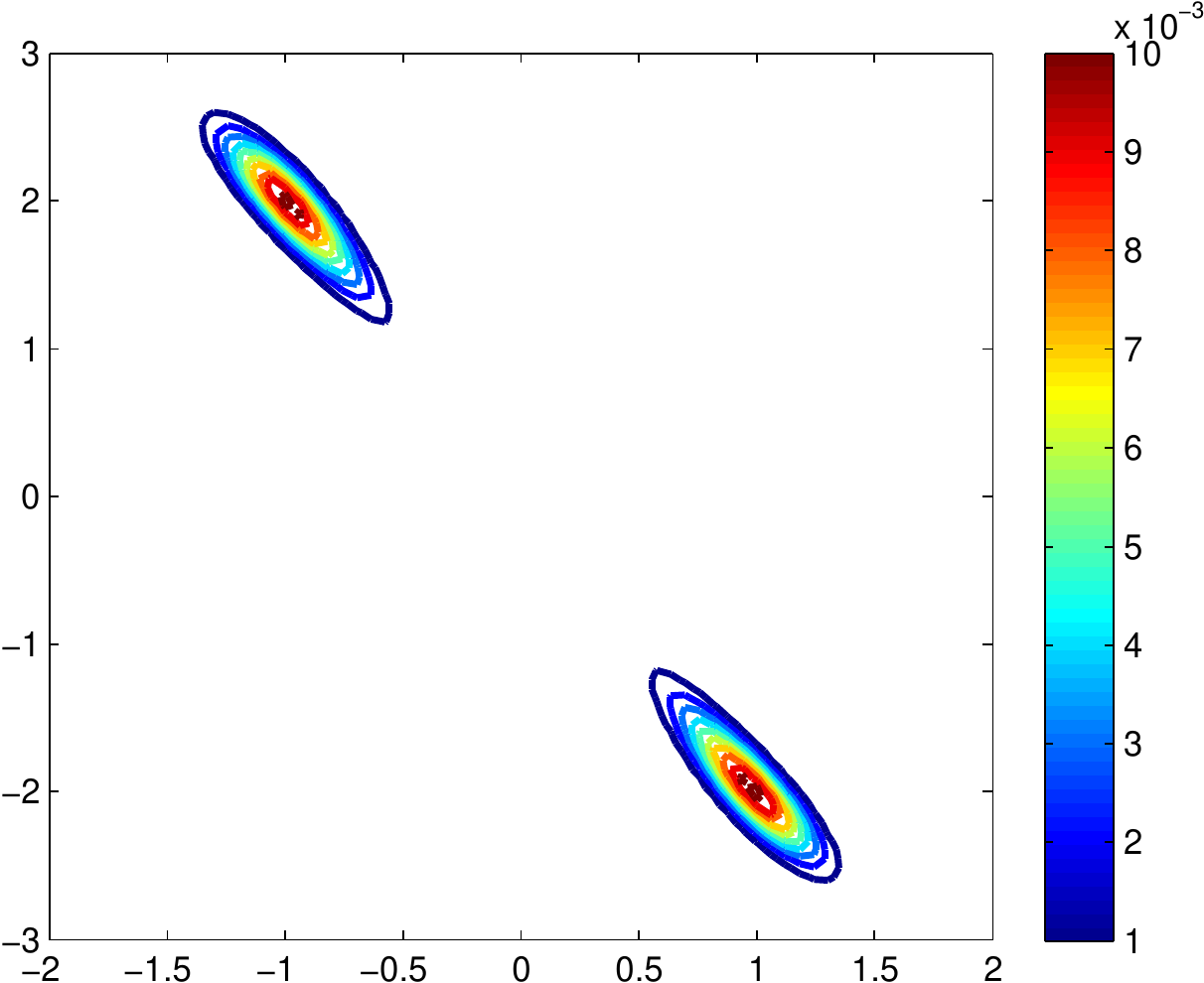} &
  \includegraphics[width=0.24\textwidth]{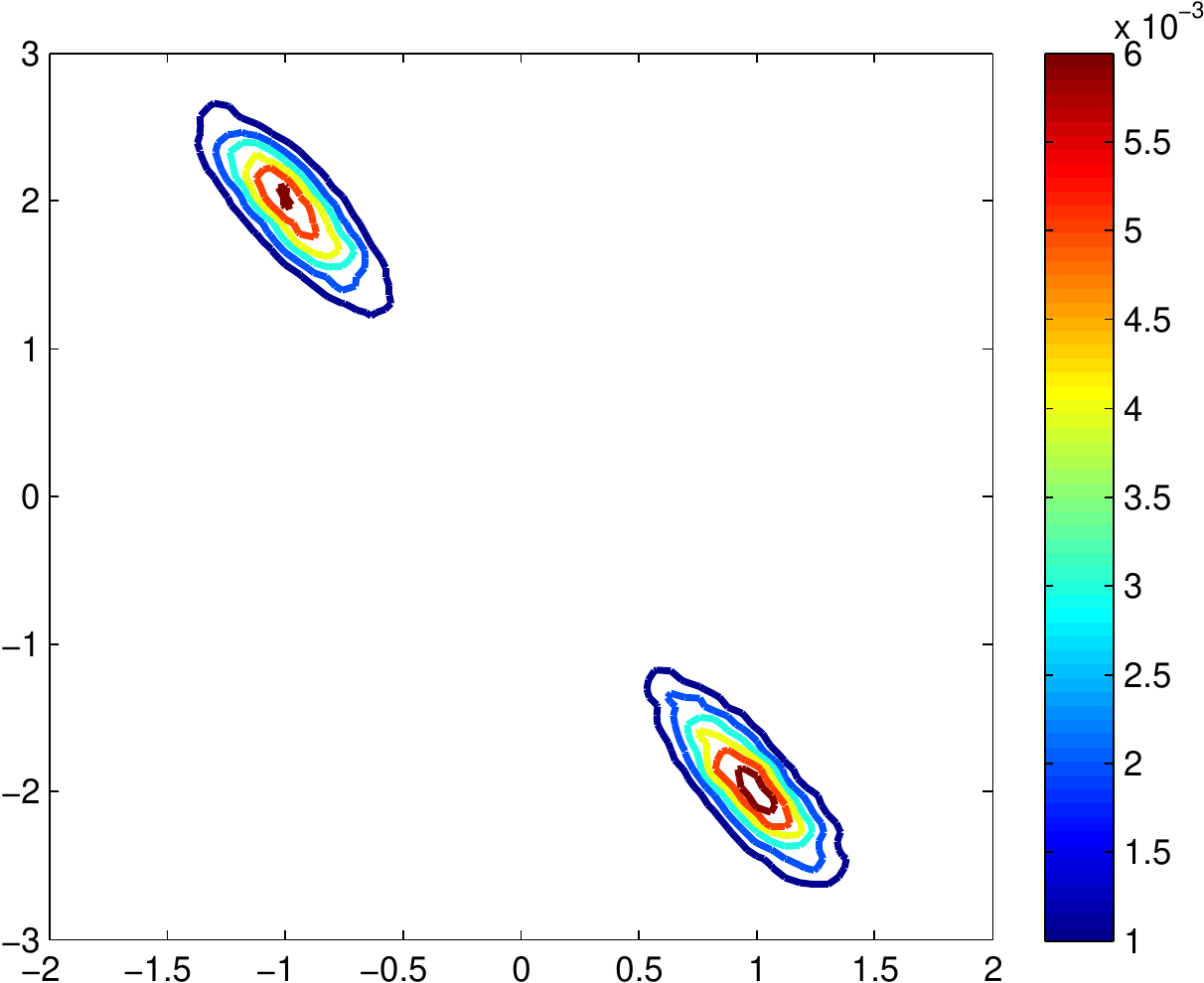} &
  \includegraphics[width=0.25\textwidth]{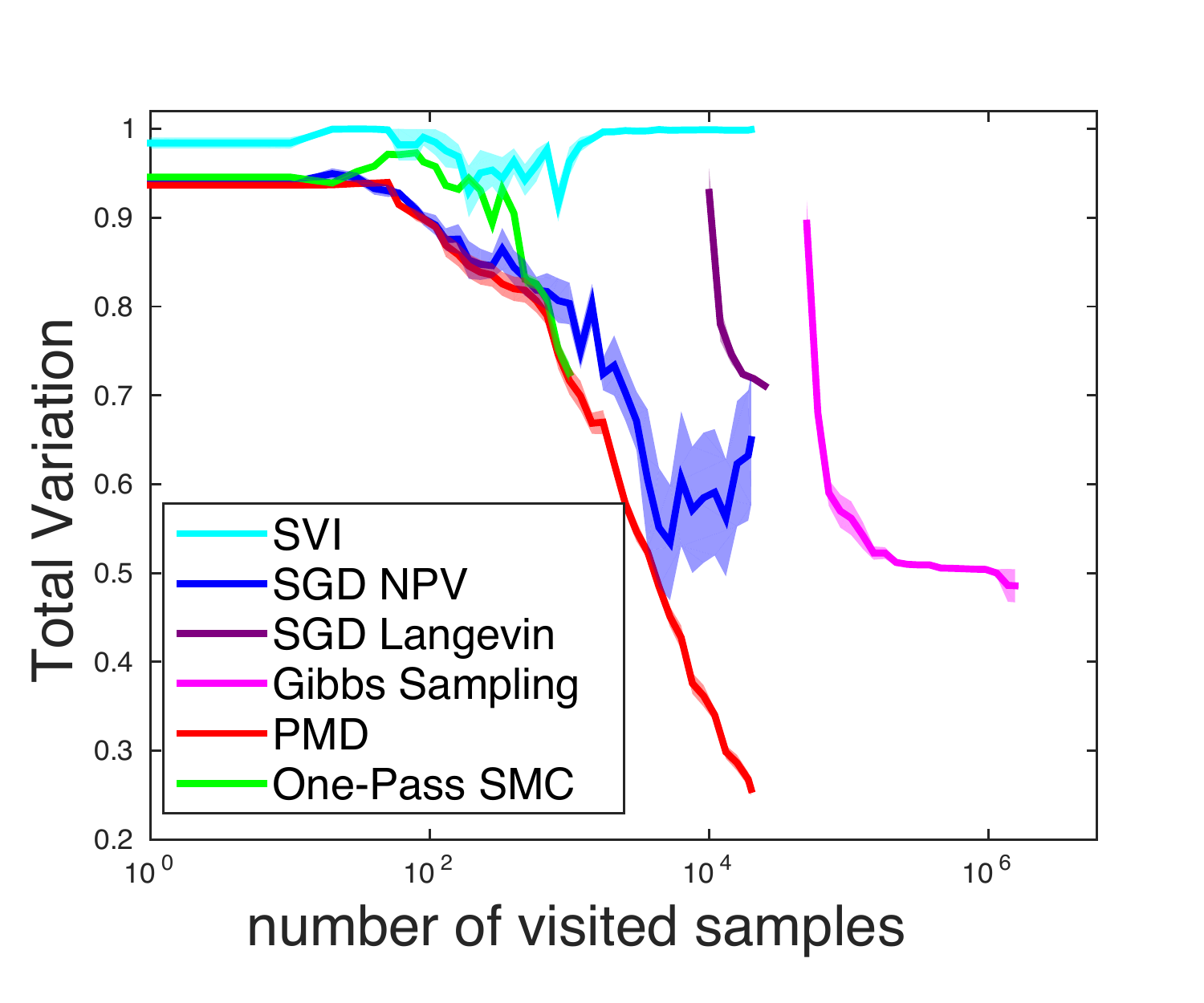} & 
  \includegraphics[width=0.25\textwidth]{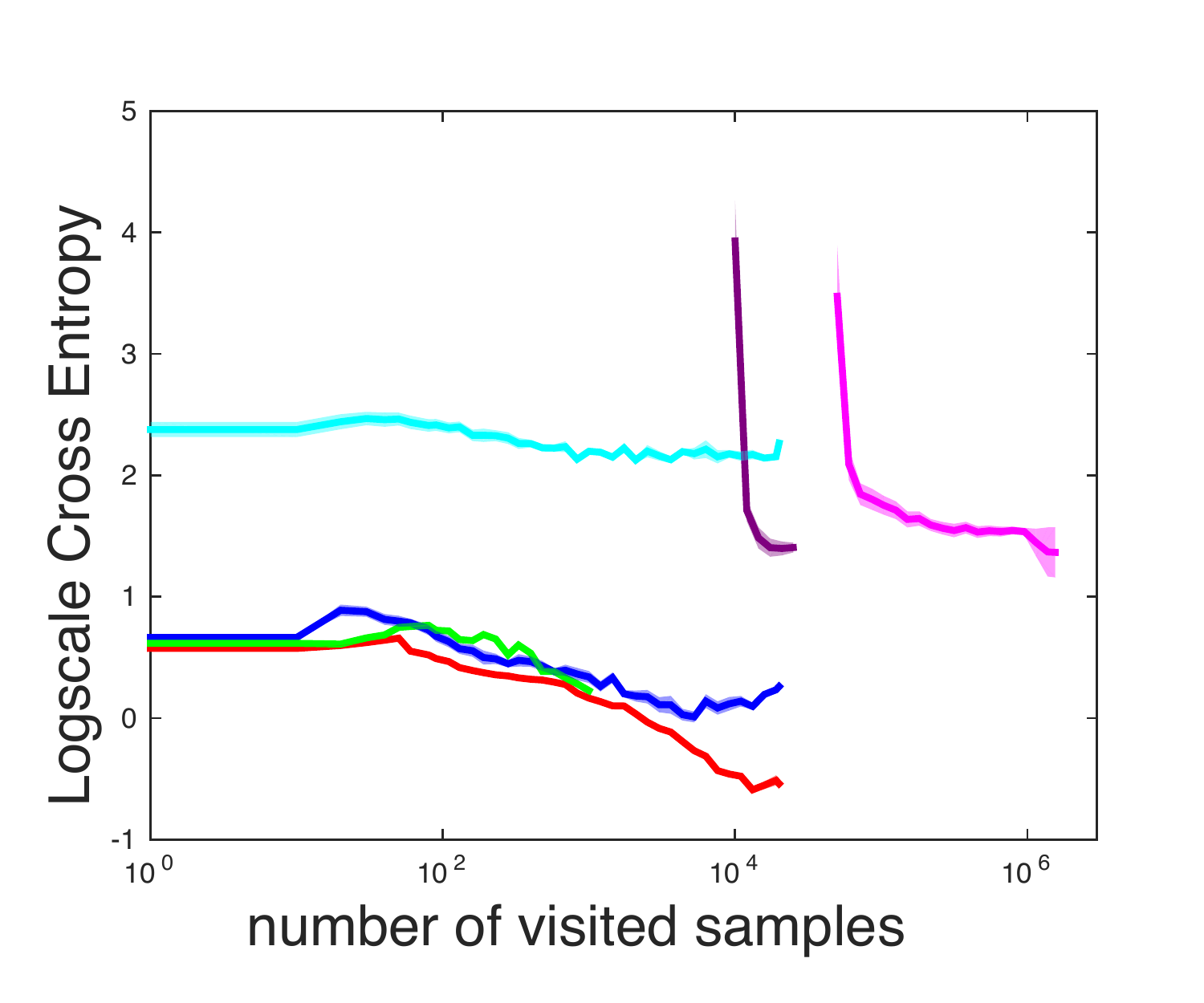}\\
  (1) True Posterior & (2) Estimated Posterior & (3) Total Variation & (4) Cross Entropy
  \end{tabular}
  \vspace{-3mm}
  \caption{Experimental results for mixture model on synthetic dataset.}
  \label{fig:syn_data}
\end{figure*}
\begin{figure*}[ht]
\centering
\setlength{\tabcolsep}{0pt}
\small
  \begin{tabular}{ccc}
  \hspace{-3mm}
  \includegraphics[width=0.32\textwidth]{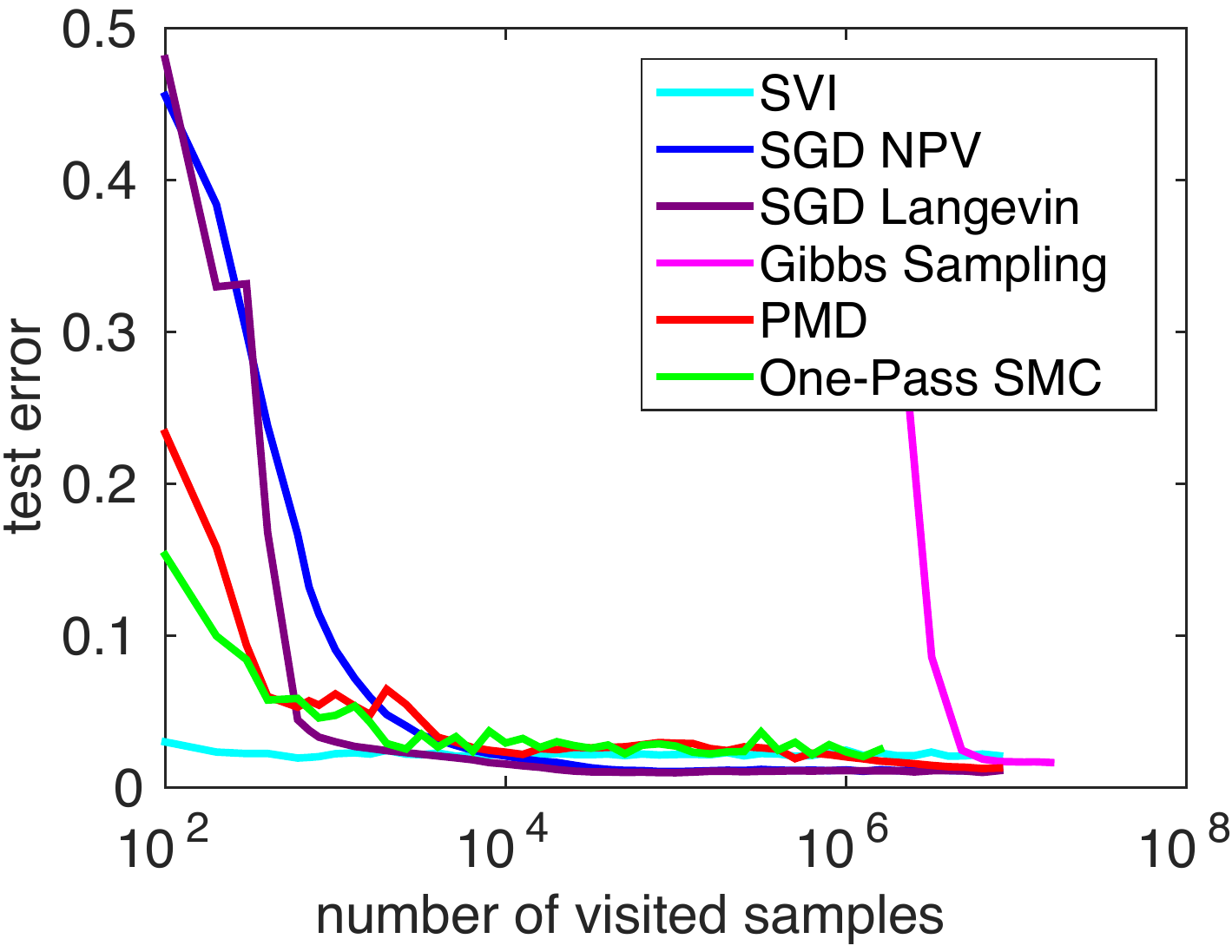} &
  \includegraphics[width=0.32\textwidth]{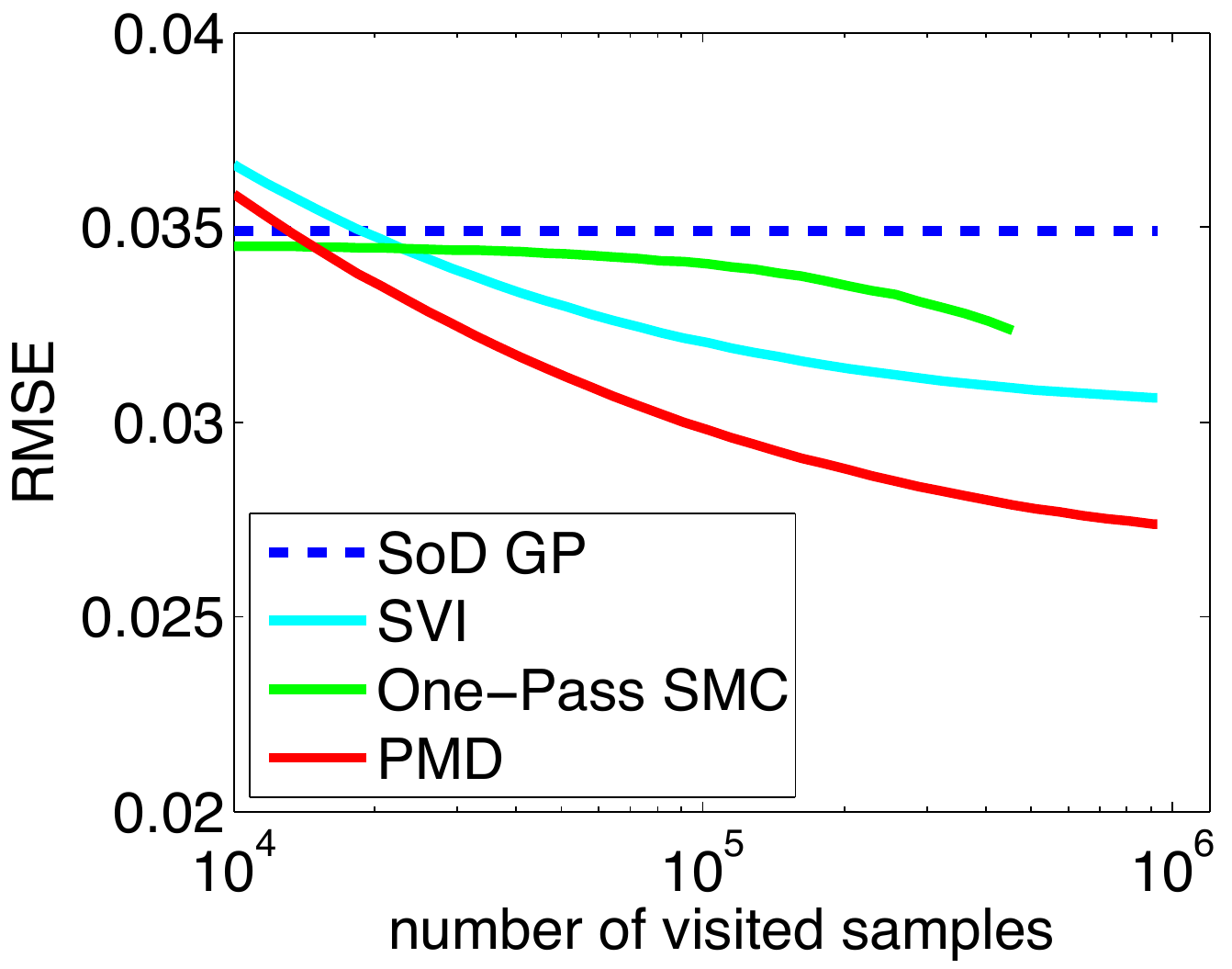} &
  \includegraphics[width=0.32\textwidth]{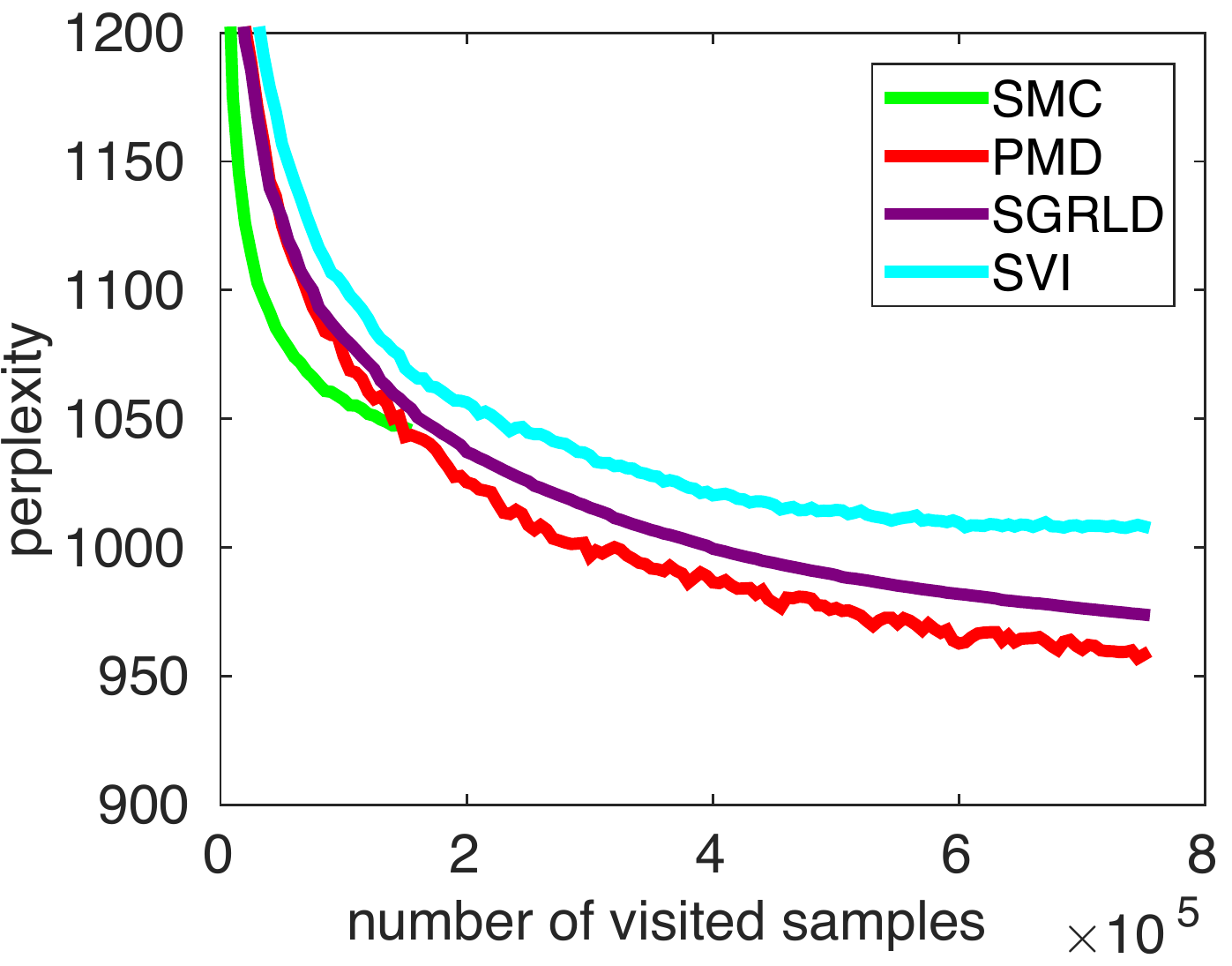} \\
  (1) Logistic regression on MNIST & (2) Sparse GP on music data & (3) LDA on wikipedia data
  \end{tabular}
  \vspace{-3mm}
  \caption{Experimental results on several different models for real-world datasets.}
  \vspace{-3mm}
  \label{fig:real_data}
\end{figure*}
We initialize all algorithms with prior on $(\theta_1, \theta_2)$. We repeat the experiments $10$ times and report the average results. We keep the same memory for all except SVI. The true posterior and the one generated by our method is illustrated in Figure~\ref{fig:syn_data} (1)(2). PMD fits both modes well and recovers nicely the posterior while other algorithms either miss a mode or fail to fit the multimodal density. For the competitors' results, please refer to Appendix~\ref{appendix:exp}. PMD achieves the best performance in terms of total variation and cross entropy as shown in Figure~\ref{fig:syn_data} (3)(4). This experiment clearly indicates our algorithm is able to take advantages of nonparametric model to capture multiple modes.

\vspace{-3mm}
\paragraph{Bayesian Logistic Regression.} We test our algorithm on logistic regression with non-conjugate prior for handwritten digits classification on the {MNIST8M 8 vs. 6} dataset. The dataset contains about $1.6$M training samples and $1932$ testing samples. We initialize all algorithms with same prior and terminate the stochastic algorithms after 5 passes through the dataset.  We keep $1000$ samples for Monte Carlo based algorithms, except Gibbs sampling whose computation cost is unaffordable. We repeat the experiments $10$ times and the results are reported in Figure~\ref{fig:real_data}(1). Obviously, Gibbs sampling~\cite{HolHel06}, which needs to scan the whole dataset, is not suitable for large-scale problem. In this experiment, SVI performs best at first, which is expectable because learning in the Gaussian family is simpler comparing to nonparametric density family. Our algorithm achieves comparable performance in nonparametric form after fed with enough data, $98.8\%$, to SVI which relies on carefully designed lower bound of the log-likelihood~\cite{JaaJor98b}. SGD NPV is flexible with mixture models family, however, its speed becomes the bottleneck. For SGD NPV, the speed is dragged down for the use of L-BFGS to optimize the second-order approximation of ELBO.

\vspace{-3mm}
\paragraph{Sparse Gaussian Processes.} We use sparse GPs models  to predict the year of songs~\cite{BerEllWhiLam11}. In this task, we compare to the SVI for sparse GPs~\cite{HofBleWanPai13,HenFusLaw13}and one-pass SMC. We also included subset of data approximation~(SoD)~\cite{QuiRas05} as baseline. The data contains about $0.5$M songs, each represented by $90$-dimension features. We terminate the stochastic algorithms after 2 passes of dataset. We use $16$ particles in both SMC and PMD. The number of inducing inputs in sparse GP is set to be $2^{10}$, and other hyperparameters of sparse GP are fixed for all methods. We run experiments $10$ times and results are reported in Figure.~\ref{fig:real_data}(2). Our algorithm achieves the best RMSE $0.027$, significantly better than one-pass SMC and SVI. 

\paragraph{Latent Dirichlet Allocation.} We compare to SVI~\cite{HofBleWanPai13}, stochastic gradient Riemannian Langevin dynamic~(SGRLD)~\cite{PatTeh2013}, and SMC specially designed for LDA~\cite{CanShiGri09} on Wikipedia dataset~\cite{PatTeh2013}. The dataset contains $0.15$M documents, about $2$M words and $8000$ vocabulary. Since we evaluate their performances in terms of perplexity, which is integral over posterior, we do not need to recover the posterior, and therefore, we follow the same setting in~\cite{AhmAlyGonNaretal12,MimHofBle12}, where one particle is used in SMC and PMD to save the cost. We set topic number to $100$ and fix other hyperparameters to be fair to all algorithms.  We stop the stochastic algorithms after 5 passes of dataset. The results are reported in Figure~\ref{fig:real_data}(3). The top words from several topics found by our algorithm are illustrated in Appendix~\ref{appendix:exp}. Our algorithm achieves the best perplexity, significantly better than SGRLD and SVI. In this experiment, SMC performs well at the beginning  since it treats each documents equally and updates with full likelihood. However, SMC only uses each datum once, while the stochastic algorithms, \eg, SGRLD, SVI and our PMD, could further refine the solution by running the dataset multiple times.  

\section{Conclusion}\label{sec:discussion}
Our work contributes towards achieving better trade-off between \emph{efficiency}, \emph{flexibility} and \emph{provability} in approximate Bayesian inference from \emph{optimization perspective}. The proposed algorithm, \emph{Particle Mirror Descent}, successfully combines stochastic mirror descent and nonparametric density approximation.  Theoretically,  the algorithm enjoys a rate $O(1/\sqrt{m})$ in terms of both integral approximation and $KL$-divergence, with $O(m)$ particles. Practically, the algorithm achieves competitive performance to existing state-of-the-art inference algorithms in mixture models, logistic regression, sparse Gaussian processes and latent Dirichlet analysis on several large-scale datasets.

\section*{Acknowledgements}

We thank the anonymous referees for their valuable suggestions. This project was supported in part by NSF/NIH BIGDATA 1R01GM108341, ONR N00014-15-1-2340, NSF IIS-1218749, and NSF CAREER IIS-1350983.


\bibliographystyle{plainnat}
{

}


\clearpage
\newpage

\appendix
\addcontentsline{toc}{part}{\appendixname}

\textbf{\huge Appendix}
\section{Strong convexity}

As we discussed, the posterior from Bayes's rule could be viewed as the optimal of an optimization problem in Eq~\eq{eq:bayes_opt}. We will show that the objective function is strongly convex w.r.t $KL$-divergence.\\

\emph{Proof for Lemma \ref{lem:strongconvexity}. }
The lemma directly results from the generalized Pythagaras theorem for Bregman divergence.
Particularly, for $KL$-divergence, we have
\begin{eqnarray*}
KL(q_1||q) &=& KL(q_1||q_2) + KL(q_2||q) - \langle q_1 - q_2, \nabla \phi(q) -  \nabla \phi(q_2)\rangle_2
\end{eqnarray*}
where $\phi(q)$ is the entropy of $q$. 

Notice that $L(q) = KL(q||q^*)-\log Z$, where $q^* = \frac{p(\theta)\Pi_i^Np(x_i |\theta)}{Z}, \quad Z =\int p(\theta)\Pi_i^Np(x_i |\theta) $, we have
\begin{eqnarray*}
&&KL(q_1||q^*) - KL(q_2||q^*) - \langle q_1 - q_2, \nabla \phi(q_2) - \nabla \phi(q^*) \rangle_2 = KL(q_1||q_2)\\
&\Rightarrow& KL(q_1||q^*) - KL(q_2||q^*) - \langle q_1 - q_2, \log q_2 - \log q^* \rangle_2 = KL(q_1||q_2)\\
&\Rightarrow& KL(q_1||q^*) - KL(q_2||q^*) - \langle q_1 - q_2, \log q_2 - \log \big(p(\theta)\Pi_i^Np(x_i |\theta)\big)\rangle_2+\underbrace{\langle q_1 - q_2, \log Z \rangle_2}_{0} = KL(q_1||q_2)\\
&\Rightarrow& L(q_1) - L(q_2) - \langle q_1 - q_2, \nabla L(q_2)\rangle_2= KL(q_1||q_2)
\end{eqnarray*}
\hfill$\blacksquare$

\section{Finite Convergence of Stochastic Mirror Descent with Inexact Prox-Mapping in Density Space}\label{appendix:inexact_prox}

Since the prox-mapping of stochastic mirror descent is intractable when directly being applied to the optimization problem~\eq{eq:bayes_opt}, we propose the \emph{$\epsilon$-inexact prox-mapping} within the stochastic mirror descent framework in Section~\ref{sec:algorithm}. Instead of solving the prox-mapping exactly, we approximate the solution with $\epsilon$ error. In this section, we will show as long as the approximation error is tolerate, the stochastic mirror descent algorithm still converges.

{\bf Theorem~\ref{thm:main1} }
\emph{Denote $q^*=\argmin_{q\in\mathcal{P}} L(q)$, the stochastic mirror descent with inexact prox-mapping after $T$ steps gives
\begin{enumerate}[noitemsep,nolistsep] 
\item[(a)] the recurrence: $\forall t \leqslant T$, $\EE[KL(q^* ||  \qtil_{t+1})] \leqslant \epsilon_t+(1-\gamma_t) \EE[KL(q^* ||  \qtil_t)] + \frac{\gamma_t^2\EE\|g_t\|^2_\infty}{2}$
\item[(b)] the sub-optimality: 
  $
  \EE[KL( \bar q_{T}||q^* )] \leqslant  \EE[L(\bar q_{T})-L(q^*)] \leqslant \frac{\mathcal{M}^2\cdot\frac{1}{2}\sum_{t=1}^T\gamma_t^2+\sum_{t=1}^T\epsilon_t+D_1}{\sum_{t=1}^T\gamma_t}
  $
  where $\bar q_T=\sum_{t=1}^T\gamma_t \qtil_t/\sum_{t=1}^T\gamma_t$ and $D_1=KL(q^*|| \qtil_1)$ and $\mathcal{M}^2:=\max_{1\leq t\leq T}\EE\|g_t\|_\infty^2$.
\end{enumerate}
}

\noindent{\bf Remark.} Based on~\cite{NemJudLanSha09}, one can immediately see that, to guarantee the usual rate of convergence, the error $\epsilon_t$ can be of order $O(\gamma_t^2)$. The first recurrence implies an overall $O(1/T)$ rate of convergence for the $KL$-divergence when the stepsize $\gamma_t$ is as small as $O(1/t)$ and error $\epsilon_t$ is as small as $O(1/t^2)$. The second result implies an overall $O(1/\sqrt{T})$ rate of convergence for objective function when larger stepsize $\gamma_t=O(1/\sqrt{T})$ and larger error $\epsilon_t=O(1/t)$ are adopted.

\emph{Proof for Theorem \ref{thm:main1}. }
(a) By first-order optimality condition, $\qtil_{t+1}\in P_{\qtil_t}^{\epsilon_t}(\gamma_t g_t)$ is equivalent as
$$\langle \gamma_t g_t+\log(\qtil_{t+1})-\log(\qtil_t),\qtil_{t+1}-q\rangle_{L_2}\leq\epsilon_t,\forall q\in\mathcal{P}, $$
which implies that
\begin{eqnarray*}
\langle \gamma_t g_t, \qtil_{t+1}-q\rangle_{2}\leq \langle \log(\qtil_t)-\log(\qtil_{t+1}), \qtil_{t+1}-q\rangle_{2}+\epsilon_t
= KL(q||\qtil_t)-KL(q||\qtil_{t+1})-KL(\qtil_{t+1}||\qtil_t)+\epsilon_t
\end{eqnarray*}
Hence, 
\begin{eqnarray}\label{eq:fact1}
\langle \gamma_t g_t, \qtil_{t}-q\rangle_{2}\leq KL(q||\qtil_t)-KL(q||\qtil_{t+1})-KL(\qtil_{t+1}||\qtil_t)+\langle\gamma_t g_t, \qtil_{t}-\qtil_{t+1}\rangle_{2}+\epsilon_t.
\end{eqnarray}
By Young's inequality, we have
\begin{eqnarray}\label{eq:fact2}
\langle\gamma_t g_t, \qtil_{t}-\qtil_{t+1}\rangle_{2}\leq \frac{1}{2}\|\qtil_t-\qtil_{t+1}\|_{1}^2+\frac{\gamma_t^2}{2}\|g_t\|_{\infty}^2.
\end{eqnarray}
Also, from Pinsker's inequality, we have 
\begin{eqnarray}\label{eq:fact3}
KL(\qtil_{t+1}||\qtil_t)\geq \frac{1}{2}\|\qtil_t-\qtil_{t+1}\|_{1}^2.
\end{eqnarray}
Therefore, combining (\ref{eq:fact1}), (\ref{eq:fact2}), and (\ref{eq:fact3}), we have $\forall q\in \mathcal{P}$
\begin{eqnarray*}\label{eq:fact4}
\langle \gamma_t g_t, \qtil_{t}-q\rangle_{2}\leq \epsilon_t+{KL(q||\qtil_t)}-{KL(q||\qtil_{t+1})} +\frac{\gamma_t^2}{2}\|g_t\|_{\infty}^2
\end{eqnarray*}
Plugging $q^*$ and taking expectation on both sides, the LHS becomes
\begin{eqnarray*}
\EE_x\bigg[\langle \qtil_t - q^*, \gamma_t g_t\rangle\bigg] &=& \EE_x\bigg[\langle \qtil_{t} - q^*, \gamma_t\EE[ g_t]\rangle\bigg|x_{[t-1]}\bigg] = \EE_x\bigg[\langle \qtil_{t} - q^*, \gamma_t\nabla L(\qtil_t)\rangle\bigg],
\end{eqnarray*}
Therefore, we have
\begin{eqnarray}\label{eq:main}
\EE_x\bigg[\langle \qtil_{t} - q^*, \gamma_t\nabla L(\qtil_t)\rangle\bigg]\leq\epsilon_t+\EE_x\big[{KL(q^*||\qtil_t)}\big]
-\EE_x\big[{KL(q^*||\qtil_{t+1})}\big] +\frac{\gamma_t^2}{2}\EE_x\|g_t\|_{\infty}^2
\end{eqnarray}
Because the objective function is $1$-strongly convex w.r.t. $KL$-divergence, 
\begin{eqnarray*}
\langle q' - q, \nabla L(q') - \nabla L(q)\rangle = KL(q'||q) + KL(q || q'),
\end{eqnarray*}
and the optimality condition, we have
\begin{eqnarray*}
\langle \qtil_t - q^*, \nabla L(\qtil_t) \rangle \ge KL(q^* || \qtil_t)
\end{eqnarray*}
we obtain the recursion with inexact prox-mapping,
\begin{eqnarray*}
\EE_x[KL(q^* || \qtil_{t+1})] \le \epsilon_t+(1-\gamma_t) \EE_x[KL(q^* || \qtil_t)] + \frac{\gamma_t^2}{2}\mathcal{M}^2
\end{eqnarray*}

(b) Summing over $t=1,\ldots, T$ of equation (\ref{eq:main}), we get
\begin{eqnarray*}
\sum_{t=1}^T\EE_x[\langle \qtil_{t} - q^*, \gamma_t\nabla L(\qtil_t)\rangle]\leq\sum_{t=1}^T\epsilon_t+{KL(q^*||\qtil_1)}+\sum_{t=1}^T\frac{\gamma_t^2}{2}\mathcal{M}^2
\end{eqnarray*}
By convexity and optimality condition, this leads to
\begin{eqnarray*}
\left(\sum_{t=1}^T\gamma_t\right)\EE_x[L(\bar q_T)-L(q^*)]\leq\EE_x\left[\sum_{t=1}^T\gamma_t( L(\qtil_t)-L(q^*))\right]\leq\sum_{t=1}^T\epsilon_t+{KL(q^*||\qtil_1)}+\sum_{t=1}^T\frac{\gamma_t^2}{2}\mathcal{M}^2
\end{eqnarray*}
Furthermore, combined with the $1$-strongly-convexity, it immediately follows that 
\begin{eqnarray*}
\EE_x[KL(\bar q_{T} ||q^* )] &\leq& \EE_x[L(\bar q_T)-L(q^*)]\leq \frac{\frac{1}{2}\sum_{t=1}^T\gamma_t^2\mathcal{M}^2+\sum_{t=1}^T\epsilon_t+D_1}{\sum_{t=1}^T\gamma_t}.
\end{eqnarray*}
\hfill$\blacksquare$

\section{Convergence Analysis for Integral Approximation}\label{appendix:integral_convgence}

In this section, we provide the details of the convergence analysis of the proposed algorithm in terms of integral approximation w.r.t. the true posterior using a good initialization.

Assume that the prior $p(\theta)$ has support $\Omega$ cover true posterior distribution $q^*(\theta)$, then, we could represent 
$$
q^*(\theta) \in \Fcal = \bigg\{q(\theta) = \alpha(\theta)p(\theta), \int \alpha(\theta)p(\theta)d\theta = 1,  0\le \alpha(\theta)\le C \bigg\}.
$$ 
Therefore, one can show 

\begin{lemma}\label{lemma:finite_approximation}
$\forall q\in \Fcal$, let $\{\theta_i\}_{i=1}^m$ is \iid\, sampled from $p(\theta)$, we could construct $\hat q(\theta) = \sum_{i=1}^m \frac{\alpha(\theta_i)\delta(\theta_i)}{\sum_i^m\alpha(\theta_i)} $, such that $\forall f(\theta): \RR^d\rightarrow \RR$ bounded and integrable,
$$
\EE{\bigg[\bigg|\int \hat q(\theta)f(\theta)d\theta - \int q(\theta)f(\theta)d\theta\bigg| \bigg]}\le \frac{2\sqrt{C}\|f\|_\infty}{\sqrt{m}}.
$$
\end{lemma}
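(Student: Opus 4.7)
The plan is to rewrite $\int \hat q(\theta) f(\theta)\,d\theta$ as the ratio $B_m/A_m$ with
\[
A_m := \frac{1}{m}\sum_{i=1}^m \alpha(\theta_i), \qquad B_m := \frac{1}{m}\sum_{i=1}^m \alpha(\theta_i) f(\theta_i),
\]
observe that by construction $\EE[A_m] = \int \alpha(\theta) p(\theta)\,d\theta = 1$ and $\EE[B_m] = \int q(\theta) f(\theta)\,d\theta =: \mu_f$, and then decompose the deviation $|B_m/A_m - \mu_f|$ in a way that sidesteps having to control $1/A_m$ in isolation.

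The key decomposition I would use is
\[
\left|\tfrac{B_m}{A_m} - \mu_f\right| \;\le\; \left|\tfrac{B_m}{A_m} - B_m\right| \;+\; |B_m - \mu_f|,
\]
paired with the observation that $|B_m| \le \|f\|_\infty A_m$, so that the first summand collapses cleanly to $\|f\|_\infty\,|1 - A_m|$. This move is the heart of the argument: it trades the potentially unbounded factor $1/A_m$ for a quantity whose mean is easy to control. After that, the two remaining error terms are centered averages of i.i.d.\ bounded random variables, and I would bound their $L^1$ norms by $\sqrt{\mathrm{Var}}$ via Jensen's inequality. The hypothesis $0 \le \alpha \le C$ together with $\EE_p[\alpha] = 1$ yields $\EE[\alpha^2] \le C\,\EE_p[\alpha] = C$ and $\EE[\alpha^2 f^2] \le C\|f\|_\infty^2$, hence
\[
\EE|1 - A_m| \le \sqrt{\mathrm{Var}(A_m)} \le \sqrt{C/m}, \qquad \EE|B_m - \mu_f| \le \sqrt{\mathrm{Var}(B_m)} \le \|f\|_\infty \sqrt{C/m}.
\]
Adding the two pieces recovers precisely $2\sqrt{C}\|f\|_\infty/\sqrt{m}$, matching the constant in the statement.

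The main obstacle is recognizing the right decomposition. The more naive route of handling the ratio directly, e.g.\ by conditioning on the good event $\{A_m \ge 1/2\}$, using $|B_m/A_m - \mu_f| \le 2|B_m - \mu_f A_m|$ there, and controlling the complementary event by Chebyshev, does produce the correct $m^{-1/2}$ rate but leaves an additive $O(1/m)$ term and a constant strictly larger than $2$. Inserting $\pm B_m$ and exploiting the deterministic bound $|B_m|/A_m \le \|f\|_\infty$ is what cleanly avoids the usual self-normalized importance sampling headache and gives the sharp constant; once the decomposition is in hand, the remaining variance computations are standard moment calculations for sums of i.i.d.\ random variables.
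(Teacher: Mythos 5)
Your proof is correct and follows essentially the same route as the paper: the paper also inserts the unnormalized estimator (your $B_m$) as the intermediate quantity, bounds the normalization term by the deterministic inequality $\bigl|\sum_i \alpha_i f(\theta_i)\bigr|/\sum_i\alpha_i \le \|f\|_\infty$ times $|1 - \frac{1}{m}\sum_i\alpha_i|$, and controls both centered averages via the second moment $\EE[\alpha^2 f^2]\le C\|f\|_\infty^2$ and Jensen's inequality. The constants and the final bound $2\sqrt{C}\|f\|_\infty/\sqrt{m}$ match exactly.
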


\begin{proof}

Given $q(\theta)$, we sample \iid $\{\theta_i\}_{i=1}^m$ from $p(\theta)$, and construct a function 
\begin{eqnarray*}
\hat q(\theta) = \frac{1}{m}\sum_{i=1}^m \alpha(\theta_i)\delta(\theta_i, \theta).
\end{eqnarray*}
It is obviously that
\begin{eqnarray*}
\EE_\theta[\hat q(\theta)] = \EE_\theta\bigg[\frac{1}{m}\sum_{i=1}^m \alpha(\theta_i)\delta(\theta_i, \theta)\bigg] = \frac{1}{m}\sum_{i=1}^{m}\EE_\theta\bigg[\alpha(\theta_i)\delta(\theta_i, \theta) \bigg] = q(\theta)
\end{eqnarray*}
and 
\begin{eqnarray*}
\EE_\theta\bigg[\int \hat q(\theta) f(\theta)d\theta\bigg] = \EE_\theta\bigg[\frac{1}{m}\sum_{i=1}^m \alpha(\theta_i)f(\theta_i)\bigg] = \frac{1}{m}\sum_{i=1}^{m}\EE_\theta\bigg[\alpha(\theta_i)f(\theta_i) \bigg] = \int q(\theta)f(\theta) d\theta
\end{eqnarray*}

Then, 
\begin{eqnarray*}
\EE_\theta\bigg[\bigg|\int \hat q(\theta) f(\theta)d\theta  - \int q(\theta) f(\theta)d\theta\bigg|^2\bigg] = \EE_\theta\bigg[\bigg|\int \hat q(\theta)f(\theta)d\theta - \EE_\theta\bigg[\int \hat q(\theta) f(\theta)d\theta\bigg]\bigg|^2\bigg] \\
= \frac{1}{m}\bigg(\EE_\theta\|\alpha(\theta_i) f(\theta_i)\|_2^2 - \|\EE_\theta[\alpha(\theta_i) f(\theta_i)]\|_2^2\bigg)\le \frac{1}{m}\EE_\theta\|\alpha(\theta_i) f(\theta_i)\|_2^2 = \frac{1}{m}\int \alpha(\theta)^2f(\theta)^2\pi(\theta)d\theta \\
 =\frac{1}{m}\int \alpha(\theta)f(\theta)^2 q(\theta)d\theta \le \frac{{C}}{m}\|f(\theta)\|_\infty^2\int\alpha(\theta)q(\theta)d\theta\le \frac{{C}}{m}\|f(\theta)\|_\infty^2\|\alpha(\theta)\|_\infty
\end{eqnarray*}
By Jensen's inequality, we have
\begin{eqnarray*}
\EE_\theta\bigg[\bigg|\int \hat q(\theta) f(\theta)d\theta  - \int q(\theta) f(\theta)d\theta\bigg|\bigg] \le \sqrt{\EE_\theta\bigg[\bigg|\int \hat q(\theta) f(\theta)d\theta  - \int q(\theta) f(\theta)d\theta\bigg|^2 \bigg]}\le  \frac{\sqrt{C}\|f(\theta)\|_\infty}{\sqrt{m}} 
\end{eqnarray*}

Apply the above conclusion to $f(\theta) = 1$, we have
\begin{eqnarray*}
\EE\bigg[\bigg|\frac{1}{m}\sum_i^m \alpha_i - 1\bigg|\bigg] \le \frac{\sqrt{C}}{\sqrt{m}}
\end{eqnarray*}

Let $\tilde q(\theta) = \frac{\sum_i^m \alpha(\theta_i)\delta(\theta_i, \cdot)}{\sum_i^m \alpha(\theta_i)} $, then $\sum_i^m  \frac{\alpha_i}{\sum_i^m \alpha_i}= 1$, and 
\begin{eqnarray*}
&&\EE_\theta\bigg[\bigg|\int \tilde q(\theta)f(\theta)d\theta - \int \hat q(\theta)f(\theta)d\theta\bigg|\bigg] =\EE_\theta\bigg[\bigg|\frac{1}{\sum_i^m \alpha(\theta_i)}\sum_i^m \alpha(\theta_i)f(\theta_i) - \frac{1}{m}\sum_i^m \alpha(\theta_i)f(\theta_i)\bigg| \bigg]\\
&=& \EE_\theta\bigg[\bigg|1 - \frac{\sum_i^m \alpha(\theta_i)}{m}\bigg|\bigg\|\frac{1}{\sum_i^m \alpha(\theta_i)}\sum_i^m \alpha(\theta_i)f(\theta_i)\bigg\|\bigg]\\
&=&\EE_\theta\bigg[\bigg|1 - \frac{\sum_i^m \alpha(\theta_i)}{m}\bigg|\frac{1}{\sum_i^m \alpha(\theta_i)}\sum_i^m \alpha(\theta_i)|f(\theta_i)|\bigg]\le \EE\bigg[\bigg|1 - \frac{\sum_i^m \alpha_i}{m}\|f(\theta)\|_\infty\bigg|\bigg] \le \frac{\sqrt{C}\|f(\theta)\|_\infty}{\sqrt{m}}
\end{eqnarray*}

Then, we have achieve our conclusion that 
\begin{eqnarray*}
&&\EE_\theta\bigg[\bigg|\int \tilde q(\theta)f(\theta)d\theta - \int q(\theta)f(\theta)d\theta\bigg|\bigg]\\
&\le& \EE_\theta\bigg[\bigg|\int \hat q(\theta) f(\theta)d\theta  - \int q(\theta) f(\theta)d\theta\bigg|\bigg]  + \EE_\theta\bigg[\bigg|\int \tilde q(\theta)f(\theta)d\theta - \int \hat q(\theta)f(\theta)d\theta\bigg|\bigg]\\
&\le& \frac{2\sqrt{C}\|f(\theta)\|_\infty}{\sqrt{m}}
\end{eqnarray*}
\end{proof}

With the knowledge of $p(\theta)$ and $q(\theta)$, we set $q_t(\theta) = \alpha_t(\theta)p(\theta)$, the PMD algorithm will reduce to adjust $\alpha(\theta_i)$ for samples $\{\theta_i\}_{i=1}^m\sim \pi(\theta)$ according to the stochastic gradient. Plug the gradient formula into the exact update rule, we have
\begin{eqnarray*}
q_{t+1}(\theta) = \frac{q_t(\theta)\exp(-\gamma_t g_t(\theta))}{Z} = \frac{\alpha_t(\theta)\exp(-\gamma_t g_t(\theta))p(\theta)}{Z} = {\alpha_{t+1}(\theta)p(\theta)} 
\end{eqnarray*}
where $\alpha_{t+1}(\theta) = \frac{\alpha_t(\theta)\exp(-\gamma_t g_t(\theta))}{Z}$. Since $Z$ is constant, ignoring it will not effect the multiplicative update. 

Given the fact that the objective function, $L(q)$, is 1-\emph{strongly convex} w.r.t. the $KL$-divergence, we can immediately arrive at the following convergence results as appeared in~\citet{NemJudLanSha09}, if we are able to compute the prox-mapping in~Eq.\eq{eq:solve_exact_prox} exactly.
\begin{lemma}\label{lemma:exact_prox_mapping}
  One prox-mapping step~Eq.\eq{eq:solve_exact_prox} reduces the error by
  \begin{eqnarray*}
    \EE[KL(q^* || q_{t+1})] \leqslant (1-\gamma_t) \EE[KL(q^* || q_t)] + \frac{\gamma_t^2\EE\|g_t\|^2_\infty}{2}.
  \end{eqnarray*}
  With stepsize $\gamma_t = \frac{\eta}{t}$, it implies 
  $$\EE[KL(q^*||q_{T})] \le \max\bigg\{KL(q^*||q_1), \frac{\eta^2\EE\|g\|^2_\infty}{2\eta - 1}\bigg\}\frac{1}{T}
  $$
\end{lemma}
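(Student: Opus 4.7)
The plan is to obtain the recurrence as a corollary of the error-tolerant analysis already established in Theorem~\ref{thm:main1}, and then convert the recurrence into the advertised $O(1/T)$ bound via a standard induction tailored to the stepsize $\gamma_t = \eta/t$.

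First, I would note that the exact prox-mapping $q_{t+1} = \mathbf{P}_{q_t}(\gamma_t g_t)$ is the \emph{optimal} solution of $\min_{\hat q \in \mathcal{P}} \{\langle g,\hat q\rangle_{L_2} + KL(\hat q\|q_t)/\gamma_t\}$, so it trivially lies in $\mathbf{P}_{q_t}^{0}(\gamma_t g_t)$. Therefore the stochastic mirror descent update with exact prox-mapping is a special instance of the error-tolerant update with $\epsilon_t = 0$ for every $t$. Plugging $\epsilon_t = 0$ into Theorem~\ref{thm:main1}(a) yields the recurrence
\[
\EE[KL(q^*\|q_{t+1})] \leq (1-\gamma_t)\EE[KL(q^*\|q_t)] + \tfrac{\gamma_t^2}{2}\EE\|g_t\|_\infty^2,
\]
which is exactly the first part of the lemma; nothing further is needed here.

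Second, to get the explicit rate, I would write $R_t := \EE[KL(q^*\|q_t)]$, $M^2 := \EE\|g_t\|_\infty^2$ (bounded uniformly in $t$ by hypothesis), and $D := \max\{KL(q^*\|q_1),\, \eta^2 M^2/(2\eta - 1)\}$, and prove $R_t \leq D/t$ by induction on $t$. The base case $R_1 \le D$ is immediate from the definition of $D$. For the inductive step, I substitute $\gamma_t = \eta/t$ into the recurrence and use the inductive hypothesis:
\[
R_{t+1} \leq \Bigl(1-\tfrac{\eta}{t}\Bigr)\tfrac{D}{t} + \tfrac{\eta^2 M^2}{2t^2}.
\]
The key algebraic step is to invoke $\eta^2 M^2 \leq (2\eta - 1)D$, which rewrites the noise term as $\eta^2 M^2/(2t^2) \leq (\eta - 1/2) D/t^2$. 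Collecting terms and bounding $(t-1/2)/t^2 \leq 1/(t+1)$ (valid for $t \ge 1$), one obtains $R_{t+1} \leq D/(t+1)$, closing the induction.

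The conceptual content is minimal once the recurrence is in hand: the whole task reduces to picking the correct inductive invariant. The one place that requires attention is the threshold condition $\eta > 1/2$, which is exactly what makes the coefficient $2\eta - 1$ in $D$ well-defined and lets the noise term be absorbed into the contraction factor; I would flag this as the only nontrivial algebraic point, and would verify the tightness of the inequality $(t-1/2)/t^2 \le 1/(t+1)$ rather than re-proving it from scratch.
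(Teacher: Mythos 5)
Your first step --- observing that the exact prox-mapping is the $\epsilon_t=0$ instance of the inexact one and reading the recurrence off Theorem~\ref{thm:main1}(a) --- is exactly what the paper does, and it is correct. The gap is in the second half. The inequality you propose to ``verify rather than re-prove,'' namely $(t-1/2)/t^2\le 1/(t+1)$, is \emph{false} for every $t\ge 2$: it is equivalent to $(t-1/2)(t+1)\le t^2$, i.e.\ $t/2\le 1/2$, i.e.\ $t\le 1$. So after you collect terms and reach $R_{t+1}\le D\,(t-1/2)/t^2$, the induction does not close, and the argument as written proves nothing beyond the base case.

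The reason is that with the noise term $\gamma_t^2 M^2/2$ and contraction factor $(1-\gamma_t)$, absorbing the noise via $\eta^2M^2\le(2\eta-1)D$ only cancels \emph{half} of a $D/t^2$, whereas closing the induction requires cancelling a full $D/t^2$: one needs $R_{t+1}\le D/t-D/t^2=D(t-1)/t^2$, since $(t-1)/t^2\le 1/(t+1)$ \emph{is} true ($t^2-1\le t^2$). Writing out the requirement exactly, $D/t-\eta D/t^2+\eta^2M^2/(2t^2)\le D/(t+1)$ is equivalent to $\eta^2M^2/2\le D\bigl(\eta-\tfrac{t}{t+1}\bigr)$, whose right-hand side decreases to $D(\eta-1)$; hence the induction goes through for all $t$ only if $\eta>1$ and $D\ge \eta^2M^2/\bigl(2(\eta-1)\bigr)$, a strictly larger constant than the $\eta^2M^2/(2\eta-1)$ appearing in the lemma (which appears to come from transcribing the constant $\theta^2M^2(2c\theta-1)^{-1}$ of \citet{NemJudLanSha09} with the wrong strong-convexity normalization). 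To repair your proof you should either (i) prove the bound with $D=\max\{KL(q^*\|q_1),\,\eta^2M^2/(2(\eta-1))\}$ under $\eta>1$, which is what the NJLS-style induction actually yields for this recursion, or (ii) note that for $1/2<\eta\le 1$ the unrolled recursion only gives the slower rate $O(T^{-\eta})$ (resp.\ $O(\log T/T)$ at $\eta=1$), so the $O(1/T)$ claim genuinely requires $\eta>1$. Since the paper's own proof is a one-line pointer to \citet{NemJudLanSha09}, your attempt is the more explicit one --- but it is precisely in making the step explicit that the stated constant fails.
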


\begin{proof}
We could obtain the recursion directly from Theorem~\ref{thm:main1} by setting $\epsilon= 0$, which means solving the prox-mapping exactly, and the rate of convergence rate could be obtained by solving the recursion as stated in~\cite{NemJudLanSha09}. 
\end{proof}

\begin{lemma}\label{lemma:exact_prox_integral}
Let $q_t$ is the exact solution of the prox-mapping at $t$-step, then $\forall f(\theta): \RR^d\rightarrow \RR$, which is bounded and integrable, we have
$$
\EE\bigg[\bigg|\int q_t(\theta) f(\theta)d\theta - \int q(\theta) f(\theta)d\theta\bigg|\bigg]\le \max\bigg\{\sqrt{KL(q^*||q_1)}, \frac{\eta\EE\|g\|_\infty}{\sqrt{2\eta - 1}}\bigg\}\frac{\|f\|_\infty}{\sqrt{t}}.
$$
\end{lemma}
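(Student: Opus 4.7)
The plan is to reduce the integral approximation error to a $KL$-divergence bound and then directly invoke Lemma~\ref{lemma:exact_prox_mapping}. The chain of inequalities is short and uses only standard tools.

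First, for any bounded integrable $f$, I would bound the integral error by an $L_1$ distance:
\begin{equation*}
\bigg|\int q_t(\theta) f(\theta)\,d\theta - \int q^*(\theta) f(\theta)\,d\theta\bigg| \;\le\; \|q_t - q^*\|_1 \, \|f\|_\infty.
\end{equation*}
Next, I would apply Pinsker's inequality to pass from $L_1$ to $KL$: $\|q_t - q^*\|_1 \leq \sqrt{2\,KL(q^*\|q_t)}$. (This is the natural direction since Lemma~\ref{lemma:exact_prox_mapping} gives a bound on $\EE[KL(q^*\|q_t)]$.) Taking expectations and applying Jensen's inequality to the concave map $\sqrt{\cdot}$ yields
\begin{equation*}
\EE\bigg[\bigg|\int(q_t-q^*)f\,d\theta\bigg|\bigg] \;\le\; \|f\|_\infty\, \EE\!\left[\sqrt{2KL(q^*\|q_t)}\right] \;\le\; \|f\|_\infty\sqrt{2\,\EE[KL(q^*\|q_t)]}.
\end{equation*}

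Finally, I would plug in the bound from Lemma~\ref{lemma:exact_prox_mapping}, namely
$\EE[KL(q^*\|q_t)] \le \max\{KL(q^*\|q_1),\, \eta^2\EE\|g\|_\infty^2/(2\eta-1)\}/t$, and push the square root inside the maximum to obtain the claimed $1/\sqrt{t}$ rate. The structural content—strong convexity of $L$ with respect to $KL$ and the recurrence it produces for the exact prox-mapping—has already been established; this lemma is essentially a corollary that translates that convergence into the weaker integral-approximation metric via Pinsker.

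There is essentially no mathematical obstacle; the only subtlety to flag is a cosmetic mismatch in constants. The route above produces the factor $\sqrt{\EE\|g\|_\infty^2}$ rather than $\EE\|g\|_\infty$ appearing in the statement, and carries an extra factor of $\sqrt{2}$ from Pinsker. These discrepancies can be absorbed into the constants or by assuming $\|g_t\|_\infty$ is deterministic (as is typical when the log-prior and log-likelihood are assumed uniformly bounded), so the claim holds as stated up to absolute constants. I would note this explicitly when writing the proof but would not let it complicate the main argument.
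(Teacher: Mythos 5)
Your proof follows essentially the same route as the paper's: bound the integral error by $\|q_t - q^*\|_1\,\|f\|_\infty$, pass to $KL$ via Pinsker's inequality, apply Jensen to move the expectation inside the square root, and plug in the recurrence bound from Lemma~\ref{lemma:exact_prox_mapping}. Your flag about the constants is warranted but harmless --- the paper itself is loose here (it writes $\|q_t - q^*\|_1 \le \sqrt{\tfrac{1}{2}KL(q^*\|q_t)}$, which is the total-variation form of Pinsker rather than the $L_1$ form, and silently replaces $\sqrt{\EE\|g\|_\infty^2}$ by $\EE\|g\|_\infty$) --- so the claim holds as stated only up to these absolute constants in either writeup.
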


\begin{proof}
\begin{eqnarray*}
&&\EE\bigg[\bigg|\int q_t(\theta) f(\theta)d\theta - \int q^*(\theta) f(\theta)d\theta\bigg|\bigg] = 
\EE\|\langle q_t(\theta) - q^*(\theta)\, , f(\theta)\rangle_{L_2}\|_2 \\
&&\le \EE[\| q_t(\theta) - q^*(\theta)\|_1\|f\|_\infty] \le \|f\|_\infty \EE[\| q_t(\theta) - q^*(\theta)\|_1]\le \|f\|_\infty \EE\bigg[\sqrt{\frac{1}{2}KL(q^*||q_t)}\bigg]\\
&&\le \max\bigg\{\sqrt{KL(q^*||q_1)}, \frac{\eta\EE\|g\|_\infty}{\sqrt{2\eta - 1}}\bigg\}\frac{\|f\|_\infty}{\sqrt{t}}
\end{eqnarray*}
The second last inequality comes from Pinsker's inequality.
\end{proof}

{\bf Theorem}~\ref{thm:integral_convergence}
\emph{Assume the particle proposal prior $p(\theta)$ has the same support as the true posterior $q^*(\theta)$, \ie, $0\le q^*(\theta)/p(\theta)\le C$. With further condition about the model $\|p(x|\theta)^N\|_\infty\le \rho, \forall x$, then $\forall f(\theta): \RR^d\rightarrow \RR$ bounded and integrable, with stepsize $\gamma_t = \frac{\eta}{t}$, the PMD algorithm return $m$ weighted particles after $T$ iteration such that 
\begin{eqnarray*}
&&\EE\sbr{\abr{\int \qtil_t(\theta)f(\theta)d\theta - \int q^*(\theta)f(\theta)d\theta}}\\
&\le& \frac{2\sqrt{\max\{C, \rho\exp(\|g(\theta)\|_\infty)\}}\|f\|_\infty}{\sqrt{m}} + \max\bigg\{\sqrt{KL(q^*||\pi)}, \frac{\eta\EE\|g\|_\infty}{\sqrt{2\eta - 1}}\bigg\}\frac{\|f\|_\infty}{\sqrt{T}}.
\end{eqnarray*}
}

\emph{Proof for Theorem \ref{thm:integral_convergence}. }

We first decompose the error into optimization error and finite approximation error.
\begin{eqnarray*}
&&\EE\sbr{\abr{\int \qtil_t(\theta)f(\theta)d\theta - \int q^*(\theta)f(\theta)d\theta}}\\
&\le& \underbrace{\EE\sbr{\abr{\int \qtil_t(\theta)f(\theta)d\theta - \int q_t(\theta)f(\theta)d\theta}}}_{\text{finite approximation error }\epsilon_1}+ 
\underbrace{\EE\sbr{\abr{\int q_t(\theta)f(\theta)d\theta - \int q^*(\theta)f(\theta)d\theta}}}_{\text{optimization error }\epsilon_2}
\end{eqnarray*}

For the optimization error, by lemma~\ref{lemma:exact_prox_integral}, we have 
$$
\epsilon_2 \le \max\bigg\{\sqrt{KL(q^*||q_1)}, \frac{\eta\EE\|g\|_\infty}{\sqrt{2\eta - 1}}\bigg\}\frac{\|f\|_\infty}{\sqrt{t}}.
$$ 

Recall that 
\begin{eqnarray*}
&& q_{t}(\theta) = \frac{q_{t-1}(\theta)\exp(-\gamma_{t-1} g_{t-1}(\theta))}{Z}\\
&=& \frac{\alpha_{t-1}(\theta)\pi(\theta)(\alpha_{t-1}^{-\gamma_{t-1}}(\theta)p(x|\theta)^{N{\gamma_{t-1}}})}{Z} = \alpha_{t-1}^{1-\gamma_{t-1}}(\theta)\pi(\theta)\frac{p(x|\theta)^{N{\gamma_{t-1}}}}{Z}
\end{eqnarray*}
which results the update $\alpha_t(\theta) = \frac{\alpha_{t-1}^{1-\gamma_{t-1}}(\theta)p(x|\theta)^{N{\gamma_{t-1}}}}{Z}$. Notice $Z=\int q_{t}(\theta)\exp(-\gamma_t g_t(\theta))d\theta$, we have $\exp(-\gamma_t \|g_t(\theta)\|_\infty) \leqslant Z \leqslant \exp(\gamma_t \|g_t(\theta)\|_\infty)$. By induction, it can be show that $\|\alpha_t\|_\infty\le \max\{C, \rho\exp(\|g_t(\theta)\|_\infty)\}\le \max\{C, \rho\exp(\|g(\theta)\|_\infty)\}$. Therefore, by lemma~\ref{lemma:finite_approximation}, we have 
$$
\epsilon_1 \le \frac{2\sqrt{\max\{C, \rho\exp(\|g(\theta)\|_\infty)}\}\|f\|_\infty}{\sqrt{m}}.
$$

Combine $\epsilon_1$ and $\epsilon_2$, we achieve the conclusion.
\hfill$\blacksquare$

\textbf{Remark:} Simply induction without the assumption from the update of $\alpha_t(\theta)$ will result the upper bound of sequence $\|\alpha_t\|_\infty$ growing. The growth of sequence $\|\alpha_t\|_\infty$ is also observed in the proof~\cite{CriDou02} for sequential Monte Carlo on dynamic models. To achieve the uniform convergence rate for SMC of inference on dynamic system, \citet{CriDou02, GlaOud04} require the models should satisfy i), $\epsilon \nu(\theta_i)\le p(x_i| \theta_i)p(\theta_i|\theta_{i-1})\leq \epsilon^{-1} \nu(\theta_i)$, $\forall x$ where $\nu(\theta)$ is a positive measure, and ii), $\frac{\sup_{\theta} p(x|\theta)}{\inf_{\mu\in \Pcal}\langle \mu(\theta)p(\cdot|\theta) p(x|\cdot)\rangle}\le \rho$. Such rate is only for SMC on dynamic system. For static model, the trandistiion distribution is unknown, and therefore, no guarantee is provided yet.  With much simpler and more generalized condition on the model, \ie, $\|p(x|\theta)^N\|_\infty\le \rho$, we also achieve the uniform convergence rate for static model. There are plenties of models satisfying such condition. We list several such models below. 
\begin{enumerate}
\item logistic regression, $p(y |x, w) = \frac{1}{1 + \exp(-yw^\top x)}$, and $\|p(y | x, w)\|_\infty\le 1$.
\item probit regression, $p(y = 1| x, w) = \Phi(w^\top x)$ where $\Phi(\cdot)$ is the cumulative distribution function of normal distribution. $\|p(y| x, w)\|_\infty\le 1$. 
\item multi-category logistic regression, $p(y = k|x, W) = \frac{\exp(w_k^\top x)}{\sum_{i=1}^K \exp(w_k^\top x)}$, and $\|p(y | x, W)\|_\infty\le 1$.
\item latent Dirichlet allocation, 
\begin{eqnarray*}
p(x_d |  \theta_d, \Phi) &=& \EE_{z_d\sim p(z_d|\theta_d) }[p(x_d|z_d, \Phi)]\\ 
p(x_d|z_d, \Phi) &=& \prod_{n=1}^{N_d}\prod_{w=1}^{W}\prod_{k=1}^{K} \Phi_{kw}^{z_{dnk}x_{dnw}}\\
p(z_d|\theta_d) &=& \prod_{n=1}^{N_d}\prod_{k=1}^{K} \theta_{dk} ^{z_{dnk}} 
\end{eqnarray*}
and $\|p(x_d | \theta_d, \Phi)\|_\infty \le \max_{z_d}\|p(x_d|z_d, \Phi)\|_\infty\le 1$.
\item linear regression, $p(y|w, x) = \frac{1}{\sigma\sqrt{2\pi}}\exp(-(y - w^\top x)^2/2\sigma^2)$, and $\|p(y|w, x)\|_\infty\le \frac{1}{\sigma\sqrt{2\pi}}$. 
\item Gaussian model and PCA, 
$p(x|\mu, \Sigma) = (2\pi\det(\Sigma))^{-\frac{1}{d}}\exp\bigg(-\frac{1}{2}(x - \mu)^\top \Sigma(x - \mu)\bigg)$, and $\|p(x|\mu, \Sigma)\|_\infty\le (2\pi\det(\Sigma))^{-\frac{1}{d}}$. 
\end{enumerate}

\section{Error Bound of Weighted Kernel Density Estimator}\label{appendix:bound_weighted_kde}

Before we start to prove the finite convergence in general case, we need to characterize the error induced by weighted kernel density estimator. In this section, we analyze the error in terms of both $L_1$ and $L_2$ norm, which are used for convergence analysis measured by $KL$-divergence in Appendix~\ref{appendix:density_convergence} .

\subsection{$L_1$-Error Bound of Weighted Kernel Density Estimator}

We approximate the density function $q(\theta)$ using the weighted kernel density estimator $\qtil(\theta)$ and would like to bound the $L_1$ error, i.e. $\|\qtil(\theta)-q(\theta)\|_1$ both in expectation and with high probability. We consider an unnormalized kernel density estimator as the intermediate quantity
\begin{align*}
  \varrho_m(\theta) = \frac{1}{m}\sum_{i=1}^m \omega(\theta_i)K_h(\theta,\theta_i)
\end{align*}
Note that $\EE[\varrho_m(\theta)] =\EE_{\theta_i}[\omega(\theta_i)K_h(\theta,\theta_i)]=q\star K_h$.
Then the error can be decomposed into three terms as 
\begin{equation*}
  \hspace{-3mm}
  \begin{array}{rcll}
    \epsilon := \EE \nbr{\qtil(\theta) - q(\theta)}_1 \leqslant \underbrace{\EE\nbr{\qtil(\theta) - \varrho_m(\theta)}_1}_{\text{normalization error}} + \underbrace{\EE\nbr{\varrho_m(\theta) - \EE\,\varrho_m(\theta)}_1 }_\text{sampling error (variance)} 
    + \underbrace{\nbr{\EE\,\varrho_m(\theta) - q(\theta)}_1}_\text{approximation error (bias)}
  \end{array}
\end{equation*}
We now present the proof for each of these error bounds.

To formally show that, we begin by giving the definition of a special class of kernels and H\"{o}lder classes of densities that we consider. 
\begin{definition}[$(\beta;\mu,\nu,\delta)$-valid density kernel]
We say a kernel function $K(\cdot)$ is a $(\beta;\mu,\nu)$-valid  density kernel, if $K(\theta, \theta) = K(\theta - \theta)$ is a bounded, 
compactly supported kernel such that
\begin{enumerate}[noitemsep,nolistsep]
\item[(i)] $\int K(z)dz = 1$
\item[(ii)] $\int |K(z)|^r dz\leq \infty$ for any $r\geq 1$, particularly, $\int K(z)^2\,dz\leq \mu^2$ for some $\mu>0$.
\item[(iii)]  $\int z^s K(z)dz=0$, for any $s=(s_1,\ldots,s_d)\in\mathbb{N}^d$ such that $1\leq |s|\leq \lfloor{\beta}\rfloor$. In addition, 
$\int \|z\|^\beta|K(z)|dz\leq\nu$  for some $\nu>0$. 
\end{enumerate}

\end{definition}
For simplicity, we sometimes call $K(\cdot)$ as a $\beta$-valid density kernel if the constants $\mu$ and $\nu$ are not specifically given.  Notice that all spherically symmetric compactly supported probability density and product kernels based on compactly supported symmetric univariate densities satisfy the  conditions. For instance, the kernel $K(\theta) = (2\pi)^{-d/2}\exp(-\nbr{\theta}^2/2)$ satisfies the conditions with $\beta = \infty$, and it is used through out our experiments. 
Furthermore, we will focus on a class of smooth densities 
\begin{definition}[$(\beta;\mathcal{L})$-H\"{o}lder density function]
We say a density function $q(\cdot)$ is a $(\beta;\mathcal{L})$-H\"{o}lder density function  if  function $q(\cdot)$ is $\lfloor \beta\rfloor$-times continuously differentiable on its support $\Omega$ and satisfies
\begin{enumerate}[noitemsep,nolistsep]
\item[(i)] for any $z_0$, there exists $L(z_0)>0$ such that
$$ |q(z)-q_{z_0}^{(\beta)}(z)|\leq L(z_0)\|z-z_0\|^{\beta},\forall z\in \Omega $$
where $q_{z_0}^{(\beta)} $ is the $\lfloor \beta\rfloor$-order Taylor approximation, i.e.
$$q_{z_0}^{(\beta)}(z):=\sum_{s=(s_1,\ldots,s_d): |s|\leq\lfloor\beta\rfloor}\frac{(z-z_0)^s}{s!}D^s q(z_0);$$
\item[(ii)] in addition, the integral $\int L(z) dz \leq \mathcal{L}$.
\end{enumerate}
$f\in C^{\beta}_\mathcal{L}(\Omega)$ means $f$ is $(\beta;\mathcal{L})$-H\"{o}lder density function.
\end{definition}

Then given the above setting for the kernel function and the smooth densities, we can characterize the error of the weighted kernel density estimator as follows.

\subsubsection{KDE error due to bias}

\begin{lemma}[Bias] \label{lem:bias}
If  $q(\cdot)\in C^{\beta}_{\mathcal{L}}(\Omega)$ and $K$ is a $(\beta;\mu,\nu)$-valid density kernel, then
  $$\nbr{q(\theta) - \EE[\varrho_m(\theta)]}_1 \leqslant  \nu \mathcal{L}h^{\beta}.$$
\end{lemma}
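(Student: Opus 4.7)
The plan is to bound the bias $\|q - q\star K_h\|_1$ pointwise first and then integrate, exploiting the vanishing-moments property of the $\beta$-valid kernel together with the local Taylor approximation guaranteed by the H\"older class.

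First, I would write the pointwise bias in a convenient convolution form. After the change of variables $u = (\theta - y)/h$,
\begin{align*}
(q\star K_h)(\theta) - q(\theta) = \int K_h(\zeta)\bigl[q(\theta-\zeta) - q(\theta)\bigr]\,d\zeta,
\end{align*}
where I used $\int K_h = 1$ to subtract $q(\theta)$ inside the integral. This reduces the problem to controlling a weighted local increment of $q$.

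Next I would exploit the matched order of smoothness of $q$ and the vanishing moments of $K$. The $\lfloor\beta\rfloor$-order Taylor polynomial $q_\theta^{(\beta)}(\theta-\zeta) = \sum_{1\leq|s|\leq\lfloor\beta\rfloor}\frac{(-\zeta)^s}{s!}D^s q(\theta) + q(\theta)$ satisfies, for every fixed $\theta$,
\begin{align*}
\int K_h(\zeta)\bigl[q_\theta^{(\beta)}(\theta-\zeta) - q(\theta)\bigr]\,d\zeta = 0,
\end{align*}
because condition (iii) of a $(\beta;\mu,\nu)$-valid kernel (after rescaling $u=\zeta/h$) annihilates every monomial of degree between $1$ and $\lfloor\beta\rfloor$. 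Subtracting this zero from the previous display gives
\begin{align*}
(q\star K_h)(\theta) - q(\theta) = \int K_h(\zeta)\bigl[q(\theta-\zeta) - q_\theta^{(\beta)}(\theta-\zeta)\bigr]\,d\zeta.
\end{align*}

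Now I can invoke the $(\beta;\mathcal{L})$-H\"older hypothesis, which yields $|q(\theta-\zeta) - q_\theta^{(\beta)}(\theta-\zeta)| \leq L(\theta)\|\zeta\|^\beta$. Combined with the moment bound $\int \|u\|^\beta |K(u)|\,du \leq \nu$ and the scaling $\int K_h(\zeta)\|\zeta\|^\beta\,d\zeta = h^\beta \int |K(u)|\,\|u\|^\beta du$, this gives the pointwise estimate
\begin{align*}
\bigl|(q\star K_h)(\theta) - q(\theta)\bigr| \leq \nu\, L(\theta)\, h^\beta.
\end{align*}
Integrating in $\theta$ and using $\int L(\theta)\,d\theta \leq \mathcal{L}$ yields the desired $\nu\mathcal{L} h^\beta$ bound. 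The only delicate part of the argument is verifying the moment-cancellation step cleanly, since one must correctly track the $h$-rescaling of the kernel's moments and ensure the Taylor polynomial is expanded about the evaluation point $\theta$ rather than the integration point; apart from that the proof is routine.
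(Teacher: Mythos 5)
Your proposal is correct and follows essentially the same route as the paper's proof: both write the bias as a convolution increment, insert the $\lfloor\beta\rfloor$-order Taylor polynomial about $\theta$, kill the polynomial part using the vanishing-moment condition of the $\beta$-valid kernel, bound the remainder pointwise by $\nu L(\theta)h^\beta$ via the H\"older condition, and integrate $L(\theta)$ to obtain $\nu\mathcal{L}h^\beta$. The only differences are cosmetic (sign convention in the change of variables and whether the annihilated term is written as an added-and-subtracted zero or as a second summand shown to vanish).
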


\emph{Proof}
The proof of this lemma follows directly from Chapter 4.3 in \cite{WanJon95}. 
\begin{eqnarray*}
|\EE[\varrho_m(\theta)]-q(\theta)|  &=& |q\star K_h(\theta)-q(\theta)|\\
&=& \int \frac{1}{h^d}K(\frac{z-\theta}{h})q(z)dz -q(\theta)\\
&=& \int \frac{1}{h^d}K(\frac{z}{h})[q(\theta+z)-q(\theta)]dz\\
&=&\int K(z)[q(\theta+hz)-q(\theta)]dz\\
&\leq&\left|\int K(z)[q(\theta+hz)-q^{(\beta)}_{\theta}(\theta+hz)]dz\right|+\int \left|K(z)[q^{(\beta)}_{\theta}(\theta+hz)-q(\theta)]dz\right|\\
&\leq&L(\theta)\int|K(z)\|hz\|^\beta dz+\left|\int K(z)[q^{(\beta)}_{\theta}(\theta+hz)-q(\theta)]dz\right|\\
\end{eqnarray*}
Note that $q^{(\beta)}_{\theta}(\theta+hz)-q(\theta)$ is a polynomial of degree at most $\lfloor\beta\rfloor$ with no constant, by the definition of $(\beta;\mu,\nu)$-valid density kernel, the second term is zero. Hence, we have $|\EE[\varrho_m(\theta)]-q(\theta)| \leq \nu L(\theta)h^{\beta}$, and therefore 
$$\|\EE[\varrho_m(\theta)]-q(\theta)\|_1\leq\nu h^\beta\int L(\theta)d\theta \leq \nu\mathcal{L} h^\beta.$$
\hfill$\blacksquare$

\subsubsection{KDE error due to variance}
The variance term can be bounded using similar techniques as in \cite{DevGyo85}.

\begin{lemma}[Variance] \label{lem:variance}
Assume $\omega\sqrt{p}\in L_1$ with bounded support, then
  $$\EE\nbr{\varrho_m(\theta)-\EE[\varrho_m(\theta)]}_1 \leqslant  \frac{\mu}{\sqrt{m}h^{\frac{d}{2}}}\int \omega\sqrt{p}\,d\theta +o((mh^d)^{-\frac{1}{2}}).$$
\end{lemma}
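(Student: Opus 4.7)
} The plan is to reduce the $L_1$ expectation to a pointwise variance computation via Fubini and Jensen, exploit the i.i.d.\ structure of the $\theta_i$'s to obtain an exact formula, and finally apply a change of variables to extract the leading $h^{-d/2}$ factor.

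First, I would use Fubini's theorem (all integrands are nonnegative) together with Jensen's inequality to pass from $L_1$-expectation to a pointwise second-moment bound:
\begin{equation*}
\EE\,\|\varrho_m(\theta)-\EE[\varrho_m(\theta)]\|_1 \;=\; \int \EE\,|\varrho_m(\theta)-\EE[\varrho_m(\theta)]|\,d\theta \;\leq\; \int \sqrt{\mathrm{Var}(\varrho_m(\theta))}\,d\theta.
\end{equation*}
Next, since $\varrho_m(\theta)=\tfrac{1}{m}\sum_{i=1}^m \omega(\theta_i)K_h(\theta,\theta_i)$ is an average of i.i.d.\ random variables indexed by $\theta_i\sim p$, the pointwise variance satisfies
\begin{equation*}
\mathrm{Var}(\varrho_m(\theta)) \;\leq\; \tfrac{1}{m}\EE\bigl[\omega(\theta_1)^2 K_h(\theta,\theta_1)^2\bigr] \;=\; \tfrac{1}{m}\int \omega(z)^2 K_h(\theta,z)^2\,p(z)\,dz.
\end{equation*}

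Now I would perform the standard change of variables $u=(z-\theta)/h$, $dz=h^d\,du$, which together with $K_h(\theta,z)=h^{-d}K((z-\theta)/h)$ yields
\begin{equation*}
\mathrm{Var}(\varrho_m(\theta)) \;\leq\; \frac{1}{mh^d}\int \omega(\theta+hu)^2\, p(\theta+hu)\, K(u)^2\,du.
\end{equation*}
Taking square roots and integrating over $\theta$, I obtain
\begin{equation*}
\EE\,\|\varrho_m-\EE[\varrho_m]\|_1 \;\leq\; \frac{1}{\sqrt{m}\,h^{d/2}}\int\!\!\sqrt{\int \omega(\theta+hu)^2 p(\theta+hu) K(u)^2\,du}\;d\theta.
\end{equation*}

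Finally, to extract the leading constant $\mu\int \omega\sqrt{p}\,d\theta$, the idea is that as $h\to 0$ the inner integral converges to $\omega(\theta)^2 p(\theta)\int K(u)^2 du \leq \mu^2\omega(\theta)^2 p(\theta)$. Concretely, I would split
$$\sqrt{\!\int \omega(\theta+hu)^2 p(\theta+hu) K(u)^2 du} \;\leq\; \mu\,\omega(\theta)\sqrt{p(\theta)} + r_h(\theta),$$
where $r_h(\theta)$ is the discrepancy term. The bounded support and integrability assumption $\omega\sqrt{p}\in L_1$ lets me invoke dominated convergence (the compact support of $K$ localizes the integral in $u$, and continuity of $\omega^2 p$ almost everywhere combined with boundedness of the support give pointwise convergence $r_h(\theta)\to 0$ dominated by an integrable function). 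This gives $\int r_h(\theta)\,d\theta = o(1)$ as $h\to 0$, which after multiplication by the $(\sqrt{m}h^{d/2})^{-1}$ prefactor yields the stated $o((mh^d)^{-1/2})$ remainder.

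The main obstacle is the last step: making the $o(1)$ dominated-convergence argument rigorous, since $\omega$ need not be bounded and $p$ need not be continuous. The bounded-support assumption combined with $\omega\sqrt{p}\in L_1$ is precisely what is needed to construct an integrable envelope uniformly in $h$ (for all sufficiently small $h$), so I would devote the technical heart of the proof to checking this dominating function explicitly.
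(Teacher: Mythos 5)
Your skeleton matches the paper's proof up to the last step: both arguments pass from the $L_1$ expectation to $\int\sqrt{\mathrm{Var}(\varrho_m(\theta))}\,d\theta$ via Jensen, use the i.i.d.\ structure to bound the pointwise variance by $\frac{1}{m}\int\omega^2(z)K_h^2(\theta,z)p(z)\,dz$, and extract the leading term $\frac{\mu}{\sqrt{m}h^{d/2}}\int\omega\sqrt{p}$. (The paper phrases the rescaling via the normalized kernel $K^+=K^2/\mu(K)^2$ rather than your explicit change of variables, but these are the same computation.) The divergence is in how the remainder is shown to be $o((mh^d)^{-1/2})$, and here your plan has a genuine gap.

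You propose to bound $\sqrt{\int\omega(\theta+hu)^2p(\theta+hu)K(u)^2du}\leq\mu\,\omega(\theta)\sqrt{p(\theta)}+r_h(\theta)$ and conclude $\int r_h\to 0$ by dominated convergence, asserting that bounded support plus $\omega\sqrt{p}\in L_1$ suffices to build an integrable envelope uniform in $h$. Neither ingredient of dominated convergence is actually available under these hypotheses: (i) pointwise a.e.\ convergence of $\omega^2p(\theta+hu)$ to $\omega^2p(\theta)$ fails for a general integrable function (it holds only in the averaged sense at Lebesgue points), and (ii) an envelope of the form $\sup_{|v|\leq hR}\omega(\theta+v)\sqrt{p(\theta+v)}$ is a maximal-function-type object, and the Hardy--Littlewood maximal function of an $L_1$ function is in general only in weak $L_1$, not $L_1$ --- bounded support does not rescue this. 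So the step you correctly identify as the technical heart cannot be closed by the mechanism you propose. The paper avoids both issues by staying in $L_1$: it writes $\sqrt{A}\leq\sqrt{B}+\sqrt{|A-B|}$ with $B=\mu^2\omega^2p$, applies Cauchy--Schwarz over the bounded support $\Omega$ to get $\int\sqrt{|A-B|}\,d\theta\leq\sqrt{|\Omega|}\cdot\sqrt{\|(\omega^2p)\star K_h^+-\omega^2p\|_1}$, and invokes the classical $L_1$ approximate-identity theorem (Theorem 2.1 of Devroye--Gy\"orfi), which gives $\|(\omega^2p)\star K_h^+-\omega^2p\|_1=o(1)$ for any integrable $\omega^2p$ with no continuity assumption. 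Replacing your dominated-convergence step with this $\sqrt{|A-B|}$ plus Cauchy--Schwarz plus $L_1$-continuity-of-convolution argument repairs the proof.
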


\emph{Proof}
For any $\theta$, we have
\begin{align*}
\sigma^2(\theta):&=\EE\left[(\varrho_m(\theta)-\EE[\varrho_m(\theta)])^2\right]\\
&= \frac{1}{m}\sum_{i=1}^m\EE[\omega^2(\theta_i)K_h^2(\theta,\theta_i)]- (q\star K_h)^2\leq\frac{(\omega^2q)\star K_h^2}{m}
\end{align*}
Denote $\mu(K):=\sqrt{\int K(\theta)^2\,d\theta}$ and kernel $K^+(\theta)=\frac{K^2(\theta)}{\mu(K)^2}$, then $\mu(K)\leq\mu$, $\int K^+d\theta=1$ and
$$K^+_h(\theta)=\frac{1}{h^d}K^+(\theta/d)=\frac{1}{h^d}\frac{K(\theta/h)K(\theta/h)}{\mu^2(K)}=\frac{h^d}{\mu^2(K)}K_h^2(\theta)$$
Hence,
$$\sigma^2(\theta)\leq \frac{\mu^2(K)(\omega^2p)\star K_h^+}{mh^d}\leq\frac{\mu^2[(\omega^2p)\star K_h^+-\omega^2p]}{mh^d}+\frac{\mu^2( \omega^2p)}{mh^d}.$$
Note that $\sigma(\theta)=\sqrt{\EE\left[(\varrho_m(\theta)-\EE[\varrho_m(\theta)])^2\right]}\geq \EE|\varrho_m(\theta)-\EE[\varrho_m(\theta)]|$, hence
\begin{align*}
&\quad\; \EE\nbr{\varrho_m(\theta)-\EE[\varrho_m(\theta)]}_1 \\
&=\int\EE|\varrho_m(\theta)-\EE[\varrho_m(\theta)]|\, d\theta\leq \int\sigma(\theta)\,d\theta\\
&\leq \int\sqrt{\frac{\mu^2(\omega^2p)\star K_h^+-\omega^2p]}{mh^d}}+\sqrt{\frac{\mu^2( \omega^2p)}{mh^d}}\,d\theta\\
&\leq \frac{\mu}{\sqrt{m}h^{d/2}}\left[\int \sqrt{\omega^2p}\,d\theta+\int \sqrt{ (\omega^2p)\star K_h^+-\omega^2p}\,d\theta\right]\\
&\leq \frac{\mu}{\sqrt{m}h^{d/2}}\bigg[\int \omega \sqrt{p}\,d\theta+  \sqrt{|\Omega|}\cdot\sqrt{\int \left|(\omega^2p)\star K_h^+-\omega^2p\right|\,d\theta}\bigg]
\end{align*}
From Theorem 2.1 in \cite{DevGyo85}, we have $\int \left|(\omega^2p)\star K_h^+-\omega^2p\right|\,d\theta=o(1)$. 
Therefore, we conclude that
  $$\EE\nbr{\varrho_m(\theta)-\EE[\varrho_m(\theta)]}_1 \leqslant  \frac{\mu}{\sqrt{m}h^{d/2}}\|\omega\sqrt{p}\|_1+o((mh^d)^{-\frac{1}{2}}).$$
\hfill$\blacksquare$

\subsubsection{KDE error due to normalization}
The normalization error term can be easily derived based on the variance. 

\begin{lemma}[Normalization error] \label{lem:normalization} 
Assume $\omega\sqrt{p}\in L_2$
  $$\EE\nbr{\qtil(\theta) - \varrho_m(\theta)}_1 \leqslant  \frac{1}{\sqrt{m}}\rbr{\int\omega^2(\theta)p(\theta)\,d\theta}^{1/2}.$$
  \vspace{-3mm}
\end{lemma}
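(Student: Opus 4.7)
The plan is to reduce the normalization error to the deviation of a scalar sample mean, and then apply Cauchy--Schwarz. The key observation is that $\qtil$ and $\varrho_m$ are collinear as functions of $\theta$: writing $\bar\omega := \frac{1}{m}\sum_{i=1}^m \omega(\theta_i)$, the normalized estimator is exactly
$$\qtil(\theta) \;=\; \frac{\sum_i \omega(\theta_i)K_h(\theta,\theta_i)}{\sum_j \omega(\theta_j)} \;=\; \frac{\varrho_m(\theta)}{\bar\omega}, \qquad \qtil(\theta)-\varrho_m(\theta) \;=\; \frac{1-\bar\omega}{\bar\omega}\,\varrho_m(\theta).$$
Since $K_h$ is a nonnegative density kernel, $\|\varrho_m\|_1 = \bar\omega$ almost surely, and this collinearity collapses the $L_1$ distance to the scalar identity $\|\qtil-\varrho_m\|_1 = |1-\bar\omega|$. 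This exact cancellation is what removes any dependence of this term on the bandwidth $h$ or the dimension $d$.

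Next, recall that in the weighted KDE construction $p$ and $\omega$ are coupled so that $\omega \cdot p$ is a probability density on $\Omega$; hence $\EE[\omega(\theta_i)] = \int \omega(\theta)p(\theta)\,d\theta = 1$ and therefore $\EE[\bar\omega]=1$. Applying Jensen/Cauchy--Schwarz to the \iid\ sample mean then gives
$$\EE|1-\bar\omega| \;\leq\; \sqrt{\EE(1-\bar\omega)^2} \;=\; \sqrt{\tfrac{1}{m}\,\mathrm{Var}(\omega(\theta_1))} \;\leq\; \sqrt{\tfrac{1}{m}\,\EE[\omega^2(\theta_1)]} \;=\; \frac{1}{\sqrt{m}}\rbr{\int \omega^2(\theta)p(\theta)\,d\theta}^{1/2},$$
which combined with the identity above delivers the claimed bound.

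The only subtle point is verifying $\EE[\bar\omega]=1$, which relies on recalling how $\omega$ and the sampling density $p$ are defined in the weighted KDE (the weights are built so that $\omega p$ integrates to one); once that is clear, the remainder is a routine second-moment calculation and the assumption $\omega\sqrt{p}\in L_2$ ensures the right-hand side is finite. I do not expect a serious obstacle here. As a minor refinement, if one wishes to accommodate higher-order $\beta$-valid kernels for which $K$ may take negative values, the identity $\|\varrho_m\|_1 = \bar\omega$ weakens to $\|\varrho_m\|_1 \leq \bar\omega \int|K|$, and the same argument yields the same bound up to a multiplicative constant depending only on the kernel.
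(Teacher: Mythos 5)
Your proof is correct and follows essentially the same route as the paper's: both exploit the collinearity $\qtil-\varrho_m=(1-\bar\omega)\,\qtil$ (equivalently your $\frac{1-\bar\omega}{\bar\omega}\varrho_m$), reduce the $L_1$ distance to $|1-\bar\omega|$ via $\|K_h\|_1=1$, and finish with Jensen and the second moment of $\omega(\theta_1)$. Your closing remark on signed $\beta$-valid kernels is a sensible refinement the paper glosses over, but it does not change the argument.
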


\emph{Proof}
Denote $\omega_i:=\omega(\theta_i)$, then $\EE[\omega_i]=\int \omega(\theta)p(\theta)\,d\theta=1$ and $\EE[\omega_i^2]=\int\omega^2(\theta)p(\theta)\,d\theta$, for any $i=1,\ldots, m$. Hence, 
$$\EE|\frac{1}{m}\sum_{i=1}^m\omega_i-1|^2= \frac{1}{m}\int \omega^2(\theta)p(\theta)\,d\theta.$$
Recall that $\qtil(\theta) =\frac{1}{{\sum_{i=1}^m\omega_i}}\sum_{i=1}^m\omega_iK_h(\theta,\theta_i)$ and $\varrho_m(\theta)=\frac{1}{m}\sum_{i=1}^m\omega_iK_h(\theta,\theta_i)$. 
\begin{align*}
&\quad\; \EE\nbr{\qtil(\theta) - \varrho_m(\theta)}_1\\
&\leq \EE\nbr{\frac{1}{{\sum_{i=1}^m\omega_i}}\sum_{i=1}^m\omega_iK_h(\theta,\theta_i)-    \frac{1}{m}\sum_{i=1}^m\omega_iK_h(\theta,\theta_i) }_1\\
&\leq \EE\nbr{\left|1-\frac{\sum_{i=1}^m\omega_i}{m}\right| \frac{1}{{\sum_{i=1}^m\omega_i}}\sum_{i=1}^m\omega_iK_h(\theta,\theta_i) }_1\\
&\leq  \EE\left|1-\frac{\sum_{i=1}^m\omega_i}{m}\right|\cdot\|K_h(\theta)\|_1
\end{align*}
Since $\|K_h\|_1=\int \frac{1}{h^d}K(\theta/h)\,d\theta=\int K(\theta)\,d\theta=1$, we have 
\begin{eqnarray*}
\EE\nbr{\qtil(\theta) - \varrho_m(\theta)}_1\leq \frac{1}{\sqrt{m}}\sqrt{\int\omega^2(\theta)p(\theta)\,d\theta}= \frac{1}{\sqrt{m}}\|w\sqrt{p}\|_2
\end{eqnarray*}
\hfill$\blacksquare$

\subsubsection{KDE error in expectation and with high probability}
Based on the above there lemmas, namely, Lemma \ref{lem:bias} -  \ref{lem:normalization}, we can immediately arrive at the bound of the $L_1$ error in expectation as stated in Theorem \ref{thm:error1}. We now provide the proof for the high probability bound as stated below.\\
\begin{corollary}[Overall error in high probability]\label{cor:error2}
Besides the above assumption, let us also assume that $\omega(\theta)$ is bounded, i.e. there exists $0< B_1\leq B_2<\infty$ such that $B_1\leq \omega(\theta)\leq B_2, \forall \theta$. Then, with probability at least $1-\delta$,
\begin{eqnarray*}
\|\qtil(\theta) - q(\theta)\|_1\leq  \nu \mathcal{L}h^{\beta} + \frac{\mu}{\sqrt{m}h^{d/2}}\|\omega\sqrt{p}\|_1 + \frac{1}{\sqrt{m}}\|\omega\sqrt{p}\|_2
+\frac{1}{\sqrt{m}}\sqrt{8B_1B_2\log(1/\delta)}+o((mh^d)^{-\frac{1}{2}}).
\end{eqnarray*}
\end{corollary}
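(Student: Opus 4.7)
The strategy is to combine the three expectation bounds from Lemmas~\ref{lem:bias}–\ref{lem:normalization} (which exactly account for the three non-probabilistic terms appearing in the corollary) with a bounded-differences concentration inequality (McDiarmid) to convert the random components to a high-probability statement; the additional term $\frac{1}{\sqrt{m}}\sqrt{8B_1B_2\log(1/\delta)}$ is the McDiarmid deviation. Concretely, I would first write
\begin{equation*}
\|\qtil - q\|_1 \;\le\; \underbrace{\|\qtil - \varrho_m\|_1 + \|\varrho_m - \EE\varrho_m\|_1}_{=:F(\theta_1,\ldots,\theta_m)} + \underbrace{\|\EE\varrho_m - q\|_1}_{\text{deterministic}},
\end{equation*}
observe that the deterministic term is bounded by $\nu\mathcal{L}h^\beta$ via Lemma~\ref{lem:bias}, and that $\EE F$ is bounded by the sum of the variance and normalization terms via Lemmas~\ref{lem:variance} and \ref{lem:normalization}.

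Next, I would view $F$ as a function of $m$ i.i.d.\ samples $\theta_1,\ldots,\theta_m \sim p$ and verify a bounded-differences property. Replacing a single $\theta_i$ by $\theta_i'$ perturbs $\varrho_m$ by at most $\|\omega_iK_h(\cdot,\theta_i)-\omega_i'K_h(\cdot,\theta_i')\|_1/m \le 2B_2/m$ (using $\|K_h\|_1=1$), and it perturbs $\qtil$ by a comparable amount once we account for the change in the normalization $S=\sum_j\omega_j$, which is stable because $S \ge mB_1 > 0$ by the lower bound $B_1>0$. A careful accounting of both perturbations—numerator change bounded using $B_2$, denominator stability using $B_1$—will produce bounded-differences constants $c_i$ whose sum of squares evaluates to $\sum_{i=1}^m c_i^2 \le 8 B_1 B_2/m$. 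McDiarmid then yields $F \le \EE F + \sqrt{\tfrac{1}{2}\sum_i c_i^2 \log(1/\delta)} = \EE F + \frac{1}{\sqrt{m}}\sqrt{8 B_1 B_2 \log(1/\delta)}$ with probability at least $1-\delta$. Combining with the three expectation bounds finishes the proof.

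The main obstacle I anticipate is the bounded-differences computation for $\|\qtil-\varrho_m\|_1$, since $\qtil$ is normalized by $S=\sum_j\omega_j$ and thus every summand in $\qtil$ is perturbed when a single $\theta_i$ changes. The naive bound gives a factor of $B_2/B_1$ rather than the claimed $\sqrt{B_1B_2}$, so obtaining the stated constant requires simultaneously using the numerator perturbation in $\varrho_m$ (scaled by $B_2$) and the denominator stability of $\qtil-\varrho_m$ (scaled by $1/B_1$), and noticing that the symmetric combination collapses to $\sqrt{B_1 B_2}$. The crucial role of the assumption $B_1>0$ is precisely to keep $S$ bounded away from zero with probability one, so that the perturbation of the ratio $\qtil = \varrho_m \cdot (m/S)$ remains $O(1/m)$ sample-by-sample rather than blowing up; this is the only step that uses the new two-sided-boundedness hypothesis beyond what was already assumed in Theorem~\ref{thm:error1}.
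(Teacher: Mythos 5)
Your overall strategy coincides with the paper's: the expectation of the $L_1$ error is controlled by Lemmas \ref{lem:bias}--\ref{lem:normalization}, and McDiarmid's bounded-differences inequality supplies the high-probability deviation term. (The paper applies McDiarmid directly to $f(\Theta)=\nbr{\qtil-q}_1$ rather than to your $F$, but since the bias term is deterministic this is the same argument.)

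The step you flag as the anticipated obstacle is, however, a genuine one, and the resolution you hope for does not exist. The normalized estimator $\qtil=\sum_i\omega_iK_h(\cdot,\theta_i)/\sum_i\omega_i$ is invariant under the rescaling $\omega\mapsto c\,\omega$, so any valid bounded-differences constant for $\nbr{\qtil_{\Theta}-\qtil_{\tilde\Theta}}_1$ can depend on $B_1,B_2$ only through the ratio $B_2/B_1$; no ``symmetric combination'' can produce $\sqrt{B_1B_2}$, which is not scale-invariant. Indeed the naive computation you already describe is essentially tight: using $\tilde S=\sum_i\tilde\omega_i\geq mB_1$ one gets a per-coordinate perturbation of order $B_2/(mB_1)$, and a configuration with disjointly supported kernels shows this order cannot be improved. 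Consequently McDiarmid yields a deviation of order $(B_2/B_1)\sqrt{\log(1/\delta)/m}$, not $\sqrt{B_1B_2\log(1/\delta)/m}$. You should be aware that the paper's own proof suffers from the same defect: its intermediate bound $\nbr{(\tilde\omega_j-\omega_j)/\sum_i\tilde\omega_i}_\infty\leq 2B_1B_2/m$ should read $2B_2/(mB_1)$, and even the tail $\exp\{-m\epsilon^2/(8B_1^2B_2^2)\}$ it derives would give a deviation proportional to $B_1B_2$ rather than $\sqrt{B_1B_2}$, so the constant in the statement of Corollary~\ref{cor:error2} is not correct as written. With the honest constant, your argument closes and yields the corollary with the last term replaced by $O\big((B_2/B_1)\sqrt{\log(1/\delta)/m}\big)$; since this preserves the $O(m^{-1/2})$ order, the downstream uses of the corollary (Lemma~\ref{lem:error_bound}, Theorem~\ref{thm:final}) are unaffected.
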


\emph{Proof}
We use McDiarmid's inequality to show that the  function $f(\Theta)= \nbr{\qtil(\theta) - q(\theta)}_1$,  defined on the random data $\Theta=(\theta_1,\ldots,\theta_m)$, is concentrated on the mean.  Let $\tilde\Theta=(\theta_1,\ldots,\tilde\theta_j,\ldots,\theta_m)$.  We denote $\omega=(\omega(\theta_1),\ldots,\omega(\theta_m))$ and $\tilde \omega=(\omega(\theta_1),\ldots,\omega(\tilde\theta_j),\ldots,\omega(\theta_m))$. Denote $k=(K_h(\theta,\theta_1),\ldots, K_h(\theta,\theta_m))$ and $\tilde k=(K_h(\theta,\theta_1),\ldots,K_h(\theta,\theta'_j),\ldots, K_h(\theta,\theta_m))$.
We first show that $|f(\Theta)-f(\Theta')|$ is bounded.  
\begin{eqnarray*}
&&|f(\Theta)-f(\Theta')\\
&=&\big|\nbr{\qtil_{\Theta}(\theta) - q(\theta)}_1-\nbr{\qtil_{\tilde\Theta}(\theta) - q(\theta)}_1\big|\\
&\leq& \nbr{\qtil_{\Theta}(\theta) - \qtil_{\tilde\Theta}(\theta)}_1\\
&=& \nbr{\frac{\sum_{i=1}^m\omega_ik_i}{\sum_{i=1}^m\omega_i}-\frac{\sum_{i=1}^m\tilde\omega_i\tilde k_i}{\sum_{i=1}^m\tilde\omega_i}}_1\\
&\leq&\nbr{\frac{(\tilde\omega_j-\omega_j)\cdot(\sum_{i=1}^m\omega_i k_i)
-(\sum_{i=1}^m\omega_i)(\tilde \omega_i \tilde k_i-\omega_j k_j)}
{(\sum_{i=1}^m\omega_i)\cdot(\sum_{i=1}^m\tilde\omega_i)}}_1\\
&\leq &\nbr{\frac{\tilde\omega_j-\omega_j}{(\sum_{i=1}^m\tilde\omega_i)}}_\infty+ \nbr{\frac{\tilde \omega_i \tilde k_i-\omega_j k_j}
{\sum_{i=1}^m\tilde\omega_i}}_1\\
&\leq& \frac{2B_1B_2}{m}+\frac{2B_1B_2}{m}\leq \frac{4B_1B_2}{m}
\end{eqnarray*}
Invoking the McDiamid's inequality, we have
\begin{eqnarray*}
\text{Pr} \left(f(\Theta)-\EE_{\Theta}[f(\Theta)]\geq \epsilon\right)\leq \exp\left\{-\frac{m\epsilon^2}{8B_1^2B_2^2}\right\}, \forall \epsilon>0
\end{eqnarray*}
which implies the corollary. \\

\hfill$\blacksquare$

\subsection{$L_2$-Error  Bound of  Weighted Kernel Density Estimator}

Following same argument yields also similar $L_2$-error bound of the weighted kernel density estimator, i.e. $\|\qtil(\theta)-q(\theta)\|_2$. For completeness and also for future reference, we provide the exact statement of the bound below in line with  Theorem \ref{thm:error1} and  Corollary \ref{cor:error2}.

\begin{theorem}[$L_2$-error in expectation]\label{thm:error3}
Let  $q=\omega p\in C^{\beta}_{\mathcal{L}}(\Omega)$ and $K$ be a $(\beta;\mu,\nu)$-valid density kernel. Assume that  $\omega^2p\in  L_2$ and has bounded support. Then
\begin{eqnarray*}
\EE \nbr{\qtil(\theta) - q(\theta)}_2^2 \leqslant  2(\nu h^{\beta}\mathcal{L})^2+ \frac{8\mu^2}{mh^d}\|\omega\sqrt{p}\|_2^2+ o((mh^d)^{-1}). 
\end{eqnarray*}
\end{theorem}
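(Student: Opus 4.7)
The plan is to mirror the three-part decomposition of the $L_1$ proof of Theorem~\ref{thm:error1} but work everywhere in the $L_2$ norm. Write
$$\qtil - q = (\qtil - \varrho_m) + (\varrho_m - \EE\varrho_m) + (\EE\varrho_m - q),$$
and use the two-step application of $\|a+b\|_2^2 \le 2\|a\|_2^2 + 2\|b\|_2^2$ to obtain
$$\|\qtil-q\|_2^2 \le 4\|\qtil - \varrho_m\|_2^2 + 4\|\varrho_m - \EE\varrho_m\|_2^2 + 2\|\EE\varrho_m - q\|_2^2,$$
which is exactly the split that produces the stated constants $2$, $8$ (after the variance and normalization pieces are combined). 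I would then bound each of the three terms in turn.

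For the bias I would reuse the pointwise estimate from the interior of the proof of Lemma~\ref{lem:bias}, namely $|\EE\varrho_m(\theta) - q(\theta)| \le \nu L(\theta) h^\beta$, and square and integrate rather than integrate directly as in the $L_1$ case. Under the (implicitly used) assumption that the local Lipschitz function $L$ satisfies $\|L\|_2 \le \mathcal{L}$ on the bounded support $\Omega$, this immediately gives $\|\EE\varrho_m - q\|_2^2 \le (\nu \mathcal{L} h^\beta)^2$, contributing the $2(\nu h^\beta \mathcal{L})^2$ term. For the variance I would compute $\sigma^2(\theta) := \text{Var}(\varrho_m(\theta)) = \tfrac{1}{m}\text{Var}(\omega(\theta_1) K_h(\theta-\theta_1)) \le \tfrac{1}{m}\EE[\omega^2(\theta_1)K_h^2(\theta-\theta_1)]$, rewrite the squared kernel as $K_h^2 = \mu^2(K) h^{-d} K_h^+$ with $K^+ = K^2/\mu^2(K)$ a bona fide probability density, and integrate:
$$\EE\|\varrho_m - \EE\varrho_m\|_2^2 = \int \sigma^2(\theta)\,d\theta \le \frac{\mu^2}{mh^d}\int (\omega^2 p \star K_h^+)(\theta)\,d\theta = \frac{\mu^2 \|\omega\sqrt{p}\|_2^2}{mh^d} + o((mh^d)^{-1}),$$
where the error term accounts for the asymptotic equality $\int \omega^2 p \star K_h^+ \to \int \omega^2 p$ from Theorem~2.1 in \cite{DevGyo85}.

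The main obstacle, as in the $L_1$ case, is the normalization error $\EE\|\qtil - \varrho_m\|_2^2$, since $\qtil = \varrho_m/\bar\omega$ with $\bar\omega := m^{-1}\sum_i \omega(\theta_i)$ is a nonlinear function of the samples. The clean trick is to exploit the identity $\qtil - \varrho_m = \qtil\,(1 - \bar\omega)$, so that $\|\qtil - \varrho_m\|_2^2 = (1-\bar\omega)^2 \|\qtil\|_2^2$. Since $\qtil$ is a convex combination of shifted copies of $K_h$, Jensen's inequality yields the \emph{deterministic} bound $\|\qtil\|_2 \le \|K_h\|_2 = \mu h^{-d/2}$; hence
$$\EE\|\qtil - \varrho_m\|_2^2 \le \frac{\mu^2}{h^d}\,\EE(1-\bar\omega)^2 = \frac{\mu^2}{h^d}\,\text{Var}(\bar\omega) \le \frac{\mu^2}{mh^d}\EE[\omega(\theta_1)^2] = \frac{\mu^2 \|\omega\sqrt{p}\|_2^2}{mh^d},$$
using $\EE\bar\omega = \int \omega p = 1$ in the middle step.

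Combining these three estimates inside the decomposition yields the claimed bound
$$\EE\|\qtil - q\|_2^2 \le 2(\nu h^\beta \mathcal{L})^2 + \frac{8\mu^2}{mh^d}\|\omega\sqrt{p}\|_2^2 + o((mh^d)^{-1}),$$
with the constants $2$ and $8$ arising precisely from the two-stage triangle inequality. The only nontrivial step, as described above, is handling the nonlinear normalization via the identity $\qtil - \varrho_m = \qtil(1-\bar\omega)$ together with the deterministic $L_2$ control of $\qtil$ through Jensen's inequality.
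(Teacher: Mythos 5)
Your proposal is correct and follows essentially the same route as the paper's proof: the identical $2a^2+4b^2+4c^2$ decomposition, squaring and integrating the pointwise bias bound (with the same implicit strengthening $\int L^2 \leq \mathcal{L}^2$ of the H\"{o}lder condition that the paper also uses silently), the $K_h^2 = \mu^2(K)h^{-d}K_h^+$ rewriting for the variance, and the factorization $\qtil-\varrho_m=(1-\bar\omega)\qtil$ with the deterministic bound $\|\qtil\|_2\le\|K_h\|_2$ for the normalization term. No substantive differences.
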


\emph{Proof for Theorem \ref{thm:error3}. }
The square $L_2$-error  can also be decomposed into three terms. 
\begin{equation*}
\EE \nbr{\qtil(\theta) - q(\theta)}_2^2 \leqslant  4\underbrace{\EE\nbr{\qtil(\theta) - \varrho_m(\theta)}_2}_{ \text{normalization error}}  + 4\underbrace{\EE\nbr{\varrho_m(\theta) - \EE\,\varrho_m(\theta)}_2^2 }_{\text{sampling error (variance)}} + 2\underbrace{ \nbr{\EE\,\varrho_m(\theta) - q(\theta)}_2^2}_{\text{approximation error (bias)}}
\end{equation*}
This uses the inequality $(a+b+c)^2\leq 2a^2+4b^2+4c^2$ for any $a,b,c$.  From Lemma \ref{lem:bias}, we already have  $|\EE[\varrho_m(\theta)]-q(\theta)| \leq L(\theta)\int|K(z)\|hz\|^\beta dz,\forall \theta$.
Hence,  
\begin{eqnarray}\label{eq:L2bias}
\|\EE[\varrho_m(\theta)]-q(\theta)\|_2^2 \leq \nu^2 h^{2\beta}\int L^2(\theta)d\theta\leq (\nu h^{\beta}\mathcal{L})^2.
\end{eqnarray}
From proof for  Lemma \ref{lem:variance}, we have 
\begin{eqnarray}\label{eq:L2var}
&\quad\; \EE\nbr{\varrho_m(\theta)-\EE[\varrho_m(\theta)]}_2^2=\int\EE|\varrho_m(\theta)-\EE[\varrho_m(\theta)]|^2\, d\theta\leq \int\sigma^2(\theta)\,d\theta\\
&\leq \int \frac{\mu^2[(\omega^2p)\star K_h^+-\omega^2p]}{mh^d}+\frac{\mu^2( \omega^2p)}{mh^d}\,d\theta\leq \frac{\mu^2}{mh^{d}}\|\omega\sqrt{p}\|_2^2+o((mh^d)^{-1})
\end{eqnarray}
In addition, we have for the normalization error term, 
\begin{eqnarray}\label{eq:L2norm}
& \EE\nbr{\qtil(\theta) - \varrho_m(\theta)}_2^2\leq \EE\nbr{\left(1-\frac{\sum_{i=1}^m\omega_i}{m}\right) \frac{\sum_{i=1}^m\omega_iK_h(\theta,\theta_i) }{{\sum_{i=1}^m\omega_i}}}_2^2\\ \nonumber
&\leq \EE\left|1-\frac{\sum_{i=1}^m\omega_i}{m}\right|^2\cdot\|K_h\|_2^2\leq \frac{\mu^2}{mh^d}\|\omega\sqrt{p}\|_2^2
\end{eqnarray}
Combining equation (\ref{eq:L2bias}) , (\ref{eq:L2var}) and (\ref{eq:L2norm}), it follows that
\begin{eqnarray*}
\EE \nbr{\qtil(\theta) - q(\theta)}_2^2 \leqslant  2(\nu h^{\beta}\mathcal{L})^2+ \frac{8\mu^2}{mh^d}\|\omega\sqrt{p}\|_2^2+ o((mh^d)^{-1}). 
\end{eqnarray*}
\hfill$\blacksquare$

\begin{corollary}[$L_2$-error in high probability]\label{cor:error4}
Besides the above assumption, let us also assume that $\omega(\theta)$ is bounded, i.e. there exists $0< B_1\leq B_2<\infty$ such that $B_1\leq \omega(\theta)\leq B_2, \forall \theta$. Then, with probability at least $1-\delta$, 
\begin{eqnarray*}
\|\qtil(\theta) - q(\theta)\|_2^2\leq  2(\nu h^{\beta}\mathcal{L})^2+ \frac{8\mu^2}{mh^d}\|\omega\sqrt{p}\|_2^2+ o((mh^d)^{-1})+\frac{16B_1B_2\mu^2}{m}\sqrt{\log(1/\delta)}.
\end{eqnarray*}
\end{corollary}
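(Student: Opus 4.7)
The plan is to combine the $L_2$ expected-error bound in Theorem~\ref{thm:error3} with a McDiarmid concentration argument, in exact analogy with how Corollary~\ref{cor:error2} is proved on top of Theorem~\ref{thm:error1}. Specifically, I would view the deviation from the mean as a function $g(\Theta) = \|\qtil_\Theta(\theta) - q(\theta)\|_2^2$ of the i.i.d.\ sample $\Theta=(\theta_1,\dots,\theta_m)$, establish a bounded-differences property, apply McDiarmid's inequality, and then add in the deterministic expectation bound from Theorem~\ref{thm:error3}.

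The first step is to quantify $|g(\Theta)-g(\tilde\Theta)|$ when $\Theta$ and $\tilde\Theta$ differ in a single coordinate $\theta_j\mapsto\tilde\theta_j$. Using the identity $a^2-b^2=(a-b)(a+b)$ applied to $L_2$ norms,
\begin{eqnarray*}
|g(\Theta)-g(\tilde\Theta)| \;\leq\; \bigl(\|\qtil_\Theta-q\|_2+\|\qtil_{\tilde\Theta}-q\|_2\bigr)\cdot\|\qtil_\Theta-\qtil_{\tilde\Theta}\|_2 .
\end{eqnarray*}
The first factor is uniformly bounded because both $\qtil$ and $q$ are densities supported on $\Omega$ with $\|\qtil\|_\infty \le \|K_h\|_\infty$ and $\|q\|_\infty$ finite under the stated assumptions, giving $\|\qtil-q\|_2=O(\mu/h^{d/2})$. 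For the second factor I would reuse exactly the algebraic decomposition employed in the proof of Corollary~\ref{cor:error2},
\begin{eqnarray*}
\qtil_\Theta-\qtil_{\tilde\Theta} \;=\; \frac{(\tilde\omega_j-\omega_j)\sum_i\omega_i k_i - (\sum_i\omega_i)(\tilde\omega_j\tilde k_j-\omega_j k_j)}{(\sum_i\omega_i)(\sum_i\tilde\omega_i)},
\end{eqnarray*}
but measure the numerator in $L_2$ rather than $L_1$. Using $B_1\le\omega\le B_2$ and $\|K_h\|_2\le \mu/h^{d/2}$, each of the two pieces contributes $O(B_1 B_2 \mu/(m h^{d/2}))$, so that $\|\qtil_\Theta-\qtil_{\tilde\Theta}\|_2 \le c B_1 B_2 \mu/(mh^{d/2})$ for an explicit constant $c$. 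Multiplying the two factors produces the bounded-differences constant $c_j\le C B_1 B_2 \mu^2/m$ (up to $h$-dependent terms that can be absorbed).

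With $c_j$ in hand, McDiarmid's inequality yields
\begin{eqnarray*}
\Pr\bigl(g(\Theta)-\EE[g(\Theta)]\geq\epsilon\bigr) \;\leq\; \exp\!\Bigl(-\tfrac{2\epsilon^2}{\sum_{j=1}^m c_j^2}\Bigr),
\end{eqnarray*}
which, inverted at confidence $\delta$, gives a deviation of size $\frac{16 B_1 B_2 \mu^2}{m}\sqrt{\log(1/\delta)}$ as in the statement. Adding this tail term to the expectation bound from Theorem~\ref{thm:error3} yields the claimed high-probability bound.

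The main obstacle, as in the $L_1$ case, is the careful bookkeeping of the bounded-differences constant: in particular, verifying that the uniform $L_2$ bound on $\qtil-q$ can be absorbed into the constants so that $c_j$ scales like $1/m$ (and not worse) after using $\|K_h\|_2^2\le \mu^2/h^d$ and the normalization $\sum_i\omega_i\ge mB_1$. Once this is done, plugging into McDiarmid and combining with Theorem~\ref{thm:error3} is routine.
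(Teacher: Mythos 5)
Your overall route---treat $\|\tilde q_\Theta(\theta)-q(\theta)\|_2^2$ as a function of the i.i.d.\ sample, establish bounded differences, apply McDiarmid, and add the expectation bound of Theorem~\ref{thm:error3}---is exactly what the paper does: its proof of Corollary~\ref{cor:error4} is a one-line pointer to the McDiarmid argument of Corollary~\ref{cor:error2}. So there is no divergence in approach. The issue is that your bookkeeping, as written, does not actually deliver the stated tail term, and the place where it breaks is precisely the step you flag as "absorbable."

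Concretely: your factorization gives $|g(\Theta)-g(\tilde\Theta)|\leq(\|\tilde q_\Theta-q\|_2+\|\tilde q_{\tilde\Theta}-q\|_2)\cdot\|\tilde q_\Theta-\tilde q_{\tilde\Theta}\|_2$, where the first factor is $O(\mu h^{-d/2})$ and the second is $O(B_1B_2\mu/(mh^{d/2}))$, so the bounded-difference constant is $c_j=O(B_1B_2\mu^2/(mh^{d}))$. The factor $h^{-d}$ does not disappear: with the prescribed bandwidth $h=m^{-1/(d+2\beta)}$ it grows as $m^{d/(d+2\beta)}$, so it cannot be "absorbed" into an $m$-free constant such as $16B_1B_2\mu^2$. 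Moreover, even granting $c_j=C/m$, McDiarmid inverts to a deviation $\epsilon=\sqrt{\tfrac{1}{2}\sum_j c_j^2\log(1/\delta)}=O(C\sqrt{\log(1/\delta)}/\sqrt{m})$, i.e.\ a $1/\sqrt{m}$ tail, whereas the statement claims $16B_1B_2\mu^2\sqrt{\log(1/\delta)}/m$, a $1/m$ tail; that scaling would require $c_j=O(m^{-3/2})$, which your estimate does not give. To close the gap you need either to track the $h^{-d}$ factor explicitly in the tail term (accepting a bound of the form $O\bigl(B_1B_2\mu^2 h^{-d}m^{-1/2}\sqrt{\log(1/\delta)}\bigr)$, which is what the McDiarmid route honestly yields and which then needs to be compared against the variance term $8\mu^2\|\omega\sqrt p\|_2^2/(mh^d)$), or to find a genuinely different concentration argument; as it stands, the claimed constant is asserted rather than derived. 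To be fair, the paper's own one-line proof leaves the same details unverified, so this looseness is inherited from the source---but a complete proof must resolve it.
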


\emph{Proof for Theorem \ref{cor:error4}. } 
Use McDiarmid's inequality similar as proof for Corollary \ref{cor:error2}. 
\hfill$\blacksquare$

\section{Convergence Analysis for Density Approximation}\label{appendix:density_convergence}

In this section, we consider the rate of convergence for the entire density measured by $KL$-divergence. We start with the following lemma that show the renormalization does not effect the optimization in the sense of optimal, and we show the importance weight $\omega_t(\theta) = \frac{\exp(-\gamma_t g_t(\theta))}{Z}$ at each step are bounded under proper assumptions.  Moreover, the error of the prox-mapping at each step incurred by the weighted density kernel density estimation is bounded. 

\begin{lemma}\label{lem:renormalization}
Let $\zeta = \int_{\setminus\Omega} \qtil_{t}d\theta$, $\qhat_{t} = \frac{\qtil_{t}}{1-\zeta}$ is a valid density on $\Omega$, then, $\qtil_t^+ = \qhat_t^+$, where $\qtil_{t}^+ := \argmin_{q\in \Pcal(\Omega)} F_t(q; \qtil_t)$, $\qhat_{t}^+ := \argmin_{q\in \Pcal(\Omega)} F_t(q; \qhat_t)$, and $F_t(q; q') := \inner{q}{\gamma_t g}_{L_2} + KL(q\| q')$.
\end{lemma}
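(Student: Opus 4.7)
The plan is to show that for any candidate density $q\in\Pcal(\Omega)$, the two objectives $F_t(q;\qtil_t)$ and $F_t(q;\qhat_t)$ differ only by a constant (independent of $q$), so they share the same minimizer. This constant will turn out to be $\log(1-\zeta)$, arising from the factor relating $\qhat_t$ and $\qtil_t$.

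First I would write out the $KL$ term for $\qhat_t$. For any $q\in\Pcal(\Omega)$,
\[
  KL(q\,\|\,\qhat_t)
  =\int_\Omega q(\theta)\log\frac{q(\theta)}{\qhat_t(\theta)}\,d\theta
  =\int_\Omega q(\theta)\log\frac{q(\theta)(1-\zeta)}{\qtil_t(\theta)}\,d\theta
  =KL(q\,\|\,\qtil_t)+\log(1-\zeta),
\]
where the last equality uses $\int_\Omega q\,d\theta=1$ and pulls the constant $\log(1-\zeta)$ out of the integral. Hence
\[
  F_t(q;\qhat_t)=\langle q,\gamma_t g\rangle_{L_2}+KL(q\,\|\,\qhat_t)
  =F_t(q;\qtil_t)+\log(1-\zeta),
\]
which holds uniformly over $q\in\Pcal(\Omega)$.

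Since the two functionals on $\Pcal(\Omega)$ differ by the $q$-independent constant $\log(1-\zeta)$, their $\argmin$ sets coincide, proving $\qtil_t^+=\qhat_t^+$.

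There is essentially no hard step: the only thing worth being careful about is verifying that $\qhat_t$ is indeed a valid normalized density on $\Omega$ (which follows from $\int_\Omega \qtil_t\,d\theta=1-\zeta$), and that $\zeta<1$ so that $\log(1-\zeta)$ is finite—both of which are implicit in the statement. The lemma's content is really just that the KL prox-term is insensitive to rescaling of its reference argument, once the candidate $q$ is constrained to be a probability density supported on $\Omega$.
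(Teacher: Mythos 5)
Your proof is correct, and it takes a somewhat different route from the paper's. The paper proves the lemma by writing out the \emph{closed-form} solution of each prox-mapping, $\qhat_t^+ \propto \big(\tfrac{1}{1-\zeta}\qtil_t\big)^{1-\gamma_t}p(\theta)^{\gamma_t}p(x_t|\theta)^{N\gamma_t}$, and observing that the constant $\tfrac{1}{1-\zeta}$ cancels in the normalization, so the two minimizers coincide as explicit densities. You instead never touch the minimizer: you show that the two objectives satisfy $F_t(q;\qhat_t)=F_t(q;\qtil_t)+\log(1-\zeta)$ uniformly over $q\in\Pcal(\Omega)$, since $KL(q\,\|\,\qtil_t/(1-\zeta))=KL(q\,\|\,\qtil_t)+\log(1-\zeta)$ once $\int_\Omega q=1$, and conclude that the $\argmin$ sets are identical. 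Your version is more general and arguably cleaner --- it does not rely on the specific form of $g_t$ or on having already derived the exponential-family closed form of the prox-mapping, and it would apply verbatim to any objective of the form ``linear term plus $KL$ against a reference rescaled by a constant.'' The paper's version buys a concrete identity between the two explicit densities, which it then reuses in the subsequent error analysis. The one point worth being explicit about (and which you do flag) is that $KL(q\,\|\,\qtil_t)$ here is just the integral $\int_\Omega q\log(q/\qtil_t)$ applied to an unnormalized reference, which is exactly how the paper uses it, and that $\zeta<1$ so the additive constant is finite.
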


\emph{Proof for Lemma \ref{lem:renormalization}. }
The minima of prox-mapping is not effected by the renormalization. Indeed, such fact can be verified by comparing to $\qtil_{t}^+ = \argmin F_t(q; \qtil_t)$ and $\qhat_{t}^+ = \argmin F_t(q; \qhat_t)$, respectively. 
\begin{eqnarray*}
\qhat_t^+ = \frac{(\frac{1}{1-\zeta}\qtil_t)^{1-\gamma_t}p(\theta)^\gamma_tp(x_t|\theta)^{N\gamma_t}}{\int (\frac{1}{1-\zeta}\qtil_t)^{1-\gamma_t}p(\theta)^\gamma_tp(x_t|\theta)^{N\gamma_t}d\theta} = \frac{\qtil_t^{1-\gamma_t}p(\theta)^\gamma_tp(x_t|\theta)^{N\gamma_t}}{\int \qtil_t^{1-\gamma_t}p(\theta)^\gamma_tp(x_t|\theta)^{N\gamma_t}d\theta} = \qtil_t^+ 
\end{eqnarray*}

\hfill$\blacksquare$

Due to the fact, we use $\qtil_t^+$ following for consistency. Although the algorithm updates based on $\qtil_t$, it is implicitly doing renoramlization after each update. We will show that $\qhat_{t+1}$ is an $\epsilon$-inexact prox-mapping. 

\begin{lemma}
\label{lem:ratio_bound}
Assume for all mini-batch of examples $\|g_t(\theta)\|^2_\infty \leqslant M^2$, then we have
\begin{enumerate}
\item[(a)]$\exp(-2\gamma_t M)\leqslant\omega_t(\theta)=\frac{\qtil^+_{t}(\theta)}{\qhat_t(\theta)}\leqslant \exp(2\gamma_t M),$
\item[(b)] $\|\nabla F_t(\qtil^+_t; \qhat_t)\|_\infty \leqslant 3\gamma_tM.$
\end{enumerate}
\end{lemma}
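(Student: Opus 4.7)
The plan is to exploit the closed-form solution of the prox-mapping and then derive both bounds from direct boundedness of $\exp(-\gamma_t g_t)$.

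First I would write down explicitly the minimizer. Since $F_t(q;\qhat_t)=\langle q,\gamma_t g_t\rangle_{L_2}+KL(q\|\qhat_t)$ is strictly convex over the density simplex on $\Omega$, the first-order optimality condition (as in the derivation of equation (\ref{eq:solve_exact_prox}) and the argument recalled in the proof of Theorem~\ref{thm:main1}) gives the closed form
\[
\qtil_t^+(\theta)\;=\;\frac{\qhat_t(\theta)\exp(-\gamma_t g_t(\theta))}{Z_t},\qquad Z_t:=\int_\Omega \qhat_t(\theta)\exp(-\gamma_t g_t(\theta))\,d\theta,
\]
so that the importance ratio is $\omega_t(\theta)=\exp(-\gamma_t g_t(\theta))/Z_t$.

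For part (a), I would use the uniform bound $\|g_t\|_\infty\leqslant M$ on both the exponential and the normalizer. Pointwise one has $\exp(-\gamma_t M)\leqslant \exp(-\gamma_t g_t(\theta))\leqslant \exp(\gamma_t M)$, and integrating against the density $\qhat_t$ preserves the same two-sided bound on $Z_t$. Taking the ratio yields
\[
\exp(-2\gamma_t M)\;\leqslant\;\omega_t(\theta)\;=\;\frac{\exp(-\gamma_t g_t(\theta))}{Z_t}\;\leqslant\;\exp(2\gamma_t M),
\]
which is exactly (a). In particular, $|\log \omega_t(\theta)|\leqslant 2\gamma_t M$.

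For part (b), I would compute the functional gradient of $F_t(\,\cdot\,;\qhat_t)$ in the same normalization convention used in the proof of Theorem~\ref{thm:main1}, namely $\nabla F_t(q;\qhat_t)=\gamma_t g_t+\log q-\log\qhat_t$. Evaluating at $q=\qtil_t^+$ gives
\[
\nabla F_t(\qtil_t^+;\qhat_t)(\theta)\;=\;\gamma_t g_t(\theta)+\log\omega_t(\theta).
\]
Then I would apply the triangle inequality in $L_\infty$ together with the bound $|\log \omega_t|\leqslant 2\gamma_t M$ from (a) and $\|g_t\|_\infty\leqslant M$ to conclude
\[
\|\nabla F_t(\qtil_t^+;\qhat_t)\|_\infty\;\leqslant\;\gamma_t\|g_t\|_\infty+\|\log\omega_t\|_\infty\;\leqslant\;\gamma_t M+2\gamma_t M\;=\;3\gamma_t M.
\]

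There is no real obstacle here beyond being careful that the identities proved hold on the restricted support $\Omega$; by Lemma \ref{lem:renormalization} renormalizing $\qtil_t$ to $\qhat_t$ on $\Omega$ does not change the prox-map, so the closed-form above is valid on $\Omega$ and the two bounds follow directly.
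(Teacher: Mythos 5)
Your proposal is correct and follows essentially the same route as the paper's proof: the closed form $\qtil_t^+ = \qhat_t\exp(-\gamma_t g_t)/Z$ gives $\omega_t = \exp(-\gamma_t g_t)/Z$, the two-sided bound $\exp(-\gamma_t M)\leqslant Z\leqslant \exp(\gamma_t M)$ yields (a), and writing $\nabla F_t(\qtil_t^+;\qhat_t)=\gamma_t g_t+\log\omega_t$ plus the triangle inequality yields (b). Your added remarks on the support $\Omega$ and Lemma \ref{lem:renormalization} are a harmless extra precaution, not a different argument.
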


\emph{Proof for Lemma \ref{lem:ratio_bound}. } Let $Z:=\int q_{t}(\theta)\exp(-\gamma_t g_t(\theta))d\theta$. We have $\exp(-\gamma_t M) \leqslant Z \leqslant \exp(\gamma_t M)$.\\
(a) Since $\|g_t(\theta)\|^2_\infty \leqslant M^2$, we have
\begin{eqnarray*}
 \exp(-2\gamma_t M)\leqslant\omega_t(\theta)=\frac{\qtil^+_{t}(\theta)}{\qhat_t(\theta)} =  \frac{\exp(-\gamma_t g_t(\theta))}{Z} \leq  \exp(2\gamma_t M).
\end{eqnarray*}
(b) Also, because $\nabla F_t(q_t^+)=\gamma_tg_t+\log\frac{\qtil^+_t}{\qhat_t}=\gamma_tg_t+\log(\omega_t)$, it immediately follows
\begin{eqnarray*}
\|\nabla F_t(\qtil^+_t; \qhat_t)\|_\infty&=&\|\gamma_t g_t + \log(\omega_t)\|_\infty \le \gamma_t\|g_t\|_\infty + \|\log(\omega_t)\|_\infty\le \gamma_t M + (2\gamma_tM)=3\gamma_tM.
\end{eqnarray*}
\hfill$\blacksquare$

\begin{lemma}\label{lem:error_bound}
Let $\epsilon_t:=F_t(\qhat_{t+1}; \qhat_t)-F_t(\qtil_t^+;\qhat_t)$, which implies $\qhat_{t+1}\in P_{\qtil_t}^{\epsilon_t}(\gamma_t g_t)$. Let the bandwidth at step $t$ satisfies
$$h_t=O(1)m_t^{-1/(d+2\beta)},$$  
one can guarantee that
\begin{eqnarray*}
\EE_{\theta}  [\epsilon_t|x_{[t-1]},\theta_{[t-1]}]\leq& O(1)(\mu^2+\nu^2\mathcal{L}^2)\mu^2\Delta m_t^{-\frac{2\beta}{d+2\beta}}+O(1)M(\mu+\nu\mathcal{L})\gamma_t m_t^{-\frac{\beta}{d+2\beta}}
\end{eqnarray*}
In addition, with probability at least $1-2\delta$ in $\theta_t|x_{[t-1]},\theta_{[t-1]}$, we have
 \begin{eqnarray*}
\epsilon_t\leq O(1)(\mu^2\sqrt{\log(1/\delta)}+\nu^2\mathcal{L}^2)\mu^2\Delta m_t^{-\frac{2\beta}{d+2\beta}}+O(1)M(\mu+\nu\mathcal{L}+\sqrt{\log(1/\delta)})\gamma_t m_t^{-\frac{\beta}{d+2\beta}}
\end{eqnarray*}
where $O(1)$ is some constant. 
\end{lemma}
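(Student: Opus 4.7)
My plan is to first reduce the inexactness $\epsilon_t$ to a pure KL-divergence $KL(\qhat_{t+1}\|\qtil_t^+)$ between the weighted KDE iterate and the exact prox-mapping, and then invoke the weighted-KDE error bounds of Theorems~\ref{thm:error1} and \ref{thm:error3}.

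For the first step, I note that the exact prox-mapping has closed form $\qtil_t^+ \propto \qhat_t\exp(-\gamma_t g_t)$, so $\log(\qtil_t^+/\qhat_t) = -\gamma_t g_t - \log Z$. Using this in the three-point Bregman identity
\[
KL(\qhat_{t+1}\|\qhat_t) - KL(\qtil_t^+\|\qhat_t) = KL(\qhat_{t+1}\|\qtil_t^+) + \langle \log(\qtil_t^+/\qhat_t),\,\qhat_{t+1}-\qtil_t^+\rangle_{L_2},
\]
and observing that $\epsilon_t = \gamma_t\langle g_t,\qhat_{t+1}-\qtil_t^+\rangle_{L_2} + [KL(\qhat_{t+1}\|\qhat_t) - KL(\qtil_t^+\|\qhat_t)]$, the $\gamma_t g_t$ contributions cancel while the constant $-\log Z$ pairs to zero against the mean-zero difference $\qhat_{t+1}-\qtil_t^+$ of two densities on $\Omega$. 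This yields the clean identity $\epsilon_t = KL(\qhat_{t+1}\|\qtil_t^+)$.

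Next, I convert this KL into a quantity controllable by the KDE error bounds. Using $KL(p\|q)\le\chi^2(p\|q) = \int(p-q)^2/q\,d\theta$ together with the Section~\ref{sec:theory} assumption that $\qtil_t^+ \ge \Delta^{-1}$ almost surely on $\Omega$, one obtains the quadratic estimate $\epsilon_t \le \Delta\,\|\qhat_{t+1}-\qtil_t^+\|_2^2$. A complementary linear estimate, obtained by bounding the first-order Taylor contribution in the original decomposition via the supremum-norm bound on $\nabla F_t(\qtil_t^+)$ from Lemma~\ref{lem:ratio_bound}(b), reads $\epsilon_t \lesssim \gamma_t M\,\|\qhat_{t+1}-\qtil_t^+\|_1$; retaining both estimates accounts for the two summands in the statement.

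Finally, I apply Theorems~\ref{thm:error1} and \ref{thm:error3} to the weighted KDE, specialized to base density $p=\qhat_t$, target $q=\qtil_t^+$, and importance weights $\omega_t = \exp(-\gamma_t g_t)/Z$. Lemma~\ref{lem:ratio_bound}(a) gives $\omega_t \in [e^{-2\gamma_t M}, e^{2\gamma_t M}]$, so $\|\omega_t\sqrt{\qhat_t}\|_r = O(1)$ for $r\in\{1,2\}$; the standing H\"{o}lder regularity of the prior and likelihood, together with the $(\beta;\mu,\nu)$-validity of $K$, verify the remaining hypotheses. Setting $h_t = m_t^{-1/(d+2\beta)}$ balances bias and variance, yielding $\EE\|\qhat_{t+1}-\qtil_t^+\|_1 = O((\mu+\nu\mathcal{L})\,m_t^{-\beta/(d+2\beta)})$ and $\EE\|\qhat_{t+1}-\qtil_t^+\|_2^2 = O((\mu^2+\nu^2\mathcal{L}^2)\,m_t^{-2\beta/(d+2\beta)})$. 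Substituting these produces the stated in-expectation bound; the high-probability version follows the same route with Corollaries~\ref{cor:error2} and \ref{cor:error4} in place of the in-expectation counterparts. The main delicate point is ensuring the prefactors $\|\omega_t\sqrt{\qhat_t}\|_r$ stay uniform in $t$, which Lemma~\ref{lem:ratio_bound}(a) delivers so long as $\gamma_t M$ stays bounded, consistent with the stepsize regime $\gamma_t \le \Delta/(M m_t^{\beta/(d+2\beta)})$ adopted later in Theorem~\ref{thm:final}.
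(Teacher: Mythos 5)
Your proposal is correct and follows essentially the same route as the paper's proof: expand $\epsilon_t$ via the three-point identity into $KL(\qhat_{t+1}\|\qtil_t^+)$ plus a first-order term, bound the $KL$ by $\Delta\|\qhat_{t+1}-\qtil_t^+\|_2^2$ using the lower bound $\qtil_t^+\geq\Delta^{-1}$, bound the first-order term by $O(\gamma_t M)\|\qhat_{t+1}-\qtil_t^+\|_1$ via Lemma~\ref{lem:ratio_bound}(b), and then invoke Theorems~\ref{thm:error1} and \ref{thm:error3} (resp.\ Corollaries~\ref{cor:error2} and \ref{cor:error4}) with the importance weights controlled by Lemma~\ref{lem:ratio_bound}(a) and $h_t=m_t^{-1/(d+2\beta)}$. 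Two remarks. First, your observation that the first-order term vanishes exactly (the gradient of $F_t$ at the exact prox-mapping is a constant, and $\qhat_{t+1}-\qtil_t^+$ integrates to zero over $\Omega$) is correct and in fact sharper than the paper, which keeps the $3\gamma_t M\|\cdot\|_1$ estimate; since you re-insert the linear estimate anyway, you land on the same final bound, but your write-up should pick one story --- either the exact identity (which would drop the second summand) or the paper's looser Bregman expansion (which produces it). Second, you gloss over the fact that Theorems~\ref{thm:error1}/\ref{thm:error3} control $\|\qtil_{t+1}-\qtil_t^+\|$ for the raw self-normalized KDE, not $\|\qhat_{t+1}-\qtil_t^+\|$ for its renormalization to $\Omega$; the paper bridges this with the $\zeta$-bookkeeping (using $\zeta\leq\|\qtil_{t+1}-\qtil_t^+\|_1$ and $\|\qtil_t^+\|_2^2\leq e^{4\gamma_t M}\mu^2$), which changes only constants but needs to be said.
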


\emph{Proof for Lemma \ref{lem:error_bound}. } 

Note that since $\qtil^+_{t}(\theta)=\qtil_{t}(\theta) \exp(-\gamma_t g_t(\theta)) / Z$, where $\qtil_{t}(\theta)=\sum_{i=1}^{m_{t}}\alpha_i K_{h_t}(\theta-\theta_i)$, and $g_t(\theta)=\log(\qtil_t)-\log(p)-N\log(p(x_t|\theta))$. By our assumption, we have $\qtil_t\in C^{\beta}_{\mathcal{L}}(\Omega)$ and $\exp(-\gamma_tg_t)\in C^{\beta}_{\mathcal{L}}(\Omega)$; hence, $\qtil^+_{t}\in C^{\beta}_{\mathcal{L}}(\Omega)$. Invoking the definition of function $F_t(\cdot; \, \qhat_t)$, we have
\begin{eqnarray*}
F_t(\qhat_{t+1}; \, \qhat_t)-F_t(\qtil_t^+; \, \qhat_t)&=&KL(\qhat_{t+1}||\qtil^+_t)+\langle\nabla F_t(\qtil^+_t; \, \qhat_t), \qhat_{t+1}-\qtil^+_t\rangle_{L_2}\\
&\leq&KL(\qhat_{t+1}||\qtil^+_t)+3\gamma_tM \|\qtil^+_t-\qhat_{t+1}\|_1 \\
&\leq&\int \frac{(\qhat_{t+1}-\qtil_t^+)^2}{\qtil_{t}^+}d\theta+3\gamma_tM \|\qtil^+_t-\qhat_{t+1}\|_1 \\
&\leq&{\Delta}\|\qhat_{t+1}-\qtil_t^+\|_2^2+3\gamma_tM \|\qtil^+_t-\qhat_{t+1}\|_1 
\end{eqnarray*}

Based on the definition of $\qhat_{t+1}$, we have 
\begin{eqnarray*}
\|\qtil^+_t-\qhat_{t+1}\|_1 &=& \bigg\|\frac{1}{1-\zeta} \qtil_{t+1} - \qtil_t^+\bigg\|_1 =  \frac{1}{1-\zeta}\|\qtil_{t+1} - \qtil_t^+ + \zeta\qtil_t^+\|_1\le \frac{1}{1-\zeta}\|\qtil_{t+1} - \qtil_t^+ \|_1 + \frac{\zeta}{1-\zeta}\\
&=& \|\qtil_{t+1} - \qtil_t^+ \|_1 +\zeta + o(\zeta  + \|\qtil_{t+1} - \qtil_t^+ \|_1).
\end{eqnarray*}
Similarly, 
\begin{eqnarray*}
\|\qtil^+_t-\qhat_{t+1}\|_2^2 &=&\bigg\|\frac{1}{1-\zeta}(\qtil_{t+1} - \qtil_t^+) + \frac{\zeta}{1 - \zeta}\qtil_t^+\bigg\|_2^2 \\
&\le& \frac{2}{(1-\zeta)^2}\|\qtil_{t+1} - \qtil_t^+\|_2^2 + \frac{2\zeta^2}{(1-\zeta)^2}\|\qtil_t^+\|_2^2\\
&\le& 2(1+\zeta)^2\|\qtil_{t+1} - \qtil_t^+\|_2^2 + 2\zeta^2\|\qtil_t^+\|_2^2 + o(\zeta^2\|\qtil_t^+\|_2^2+\zeta^2\|\qtil_{t+1} - \qtil_t^+\|_2^2)
\end{eqnarray*}

Recall $\zeta = 1 - \int_\Omega \qtil_{t+1} =\langle 1, \qtil_t^+ - \qtil_{t+1} \rangle \le \|\qtil_{t+1} - \qtil_t^+ \|_1$, we can simplify the $L_1$ and $L_2$ error as
\begin{eqnarray*}
\|\qhat_{t+1} - \qtil_t^+\|_1 &=& 2\|\qtil_{t+1} - \qtil_t^+ \|_1 + o(\|\qtil_{t+1} - \qtil_t^+ \|_1),\\
\|\qtil^+_t-\qhat_{t+1}\|_2^2 &\le & 2\|\qtil_{t+1} - \qtil_t^+\|_2^2 + 2\|\qtil_t^+\|_2^2\|\qtil_{t+1} - \qtil_t^+ \|_1^2 + o(\|\qtil_{t+1} - \qtil_t^+\|_2^2 + \|\qtil_t^+\|_2^2\|\qtil_{t+1} - \qtil_t^+ \|_1^2 ) \\
&\le & (2 + 2\|\qtil_t^+\|_2^2)\|\qtil_{t+1} - \qtil_t^+ \|_2^2 + o(\|\qtil_{t+1} - \qtil_t^+\|_2^2).
\end{eqnarray*}
The last inequality for $L_2$ error comes from Jensen's inequality. We argue that $\|\qtil_t^+\|_2^2$ is finite. Indeed, 
\begin{eqnarray*}
\|\qtil_t^+\|_2^2 &=& \int(\qtil_t^+)^2d\theta = \int \frac{\qtil_t^2\exp(-2\gamma_tg_t)}{Z^2}d\theta \le \bigg\|\frac{\exp(-2\gamma_tg_t)}{Z^2}\bigg\|_\infty\int\qtil_t^2d\theta\\
&\le&\exp(4\gamma_tM)\bigg(\sum_{i, j}\alpha_i^t\alpha_j^t \int K_h(\theta- \theta_i)K_h(\theta- \theta_j)d\theta\bigg)\\
&\le&\exp(4\gamma_tM)\bigg(\sum_{i, j}\alpha_i^t\alpha_j^t \|K_h(\theta- \theta_i)\|_2\|K_h(\theta- \theta_j)\|_2\bigg)
\le \exp(4\gamma_tM)\mu^2\|\alpha^t\|_1\|\alpha^t\|_\infty \le \exp(4\gamma_tM)\mu^2
\end{eqnarray*}

Therefore, we have
\begin{eqnarray*}
\epsilon_t &\le& (2\Delta+ 2\Delta \mu^2\exp(4\gamma_t M)) \|\qtil_{t+1} - \qtil_t^+\|_2^2 + 6\gamma_tM\|\qtil_{t+1} - \qtil_t^+\|_1\\
&&+ o(\|\qtil_{t+1} - \qtil_t^+\|_2^2 + \gamma_t\|\qtil_{t+1} - \qtil_t^+\|_1)
\end{eqnarray*}

Applying the result of Theorem \ref{thm:error1} and \ref{thm:error3} for $\qhat_{t+1}$ and $\qtil_t^+$ we have
\begin{eqnarray*}
&\EE_{\theta}  [\epsilon_t|x_{[t-1]},\theta_{[t-1]}]\leq& (2{\Delta} + 2\Delta \mu^2\exp(4\gamma_t M))\left[2(\nu h_t^{\beta}\mathcal{L})^2+ \frac{8\mu^2}{m_th_t^d}\|\omega_t\sqrt{\qtil_t}\|_2^2+ o((m_th_t^d)^{-1})\right]\\
&&+6\gamma_tM\left[ \nu \mathcal{L}h_t^{\beta} + \frac{\mu}{\sqrt{m_t}h_t^{d/2}}\|\omega_t\sqrt{\qtil_t}\|_1 + \frac{1}{\sqrt{m_t}}\|\omega_t\sqrt{\qtil_t}\|_2+o((m_th_t^d)^{-\frac{1}{2}})\right] \\ 
&& + o\bigg(2(\nu h_t^{\beta}\mathcal{L})^2+ \frac{8\mu^2}{m_th_t^d}\|\omega_t\sqrt{\qtil_t}\|_2^2 + \gamma_t [\nu \mathcal{L}h_t^{\beta} + \frac{\mu}{\sqrt{m_t}h_t^{d/2}}\|\omega_t\sqrt{\qtil_t}\|_1 + \frac{1}{\sqrt{m_t}}\|\omega_t\sqrt{\qtil_t}\|_2]\bigg)
\end{eqnarray*}
Under the Assumption C, we already proved that $|\omega_t|_\infty\leq\exp(2\gamma_t M)$, hence, $\|\omega_t\sqrt{\qtil_t}\|_2^2\leq \exp(4\gamma_t M)$. Without loss of generality, we can assume $\int \sqrt{\qtil_t(\theta)}d\theta\leq O(1)$ and $\gamma_tM\leq O(1)$  for all $t$, then we can simply write $\|\omega_t\sqrt{\qtil_t}\|_1\leq O(1)$ and $\|\omega_t\sqrt{\qtil_t}\|_2^2\leq O(1)$.
When $h_t=O(1)m_t^{-1/(d+2\beta)}$, the above result can be simplified as 
\begin{eqnarray*}
\EE_{\theta}  [\epsilon_t|x_{[t-1]},\theta_{[t-1]}]\leq& O(1)(\mu^2+\nu^2\mathcal{L}^2 )\mu^2\Delta m_t^{-\frac{2\beta}{d+2\beta}}+O(1)M(\mu+\nu\mathcal{L})\gamma_t m_t^{-\frac{\beta}{d+2\beta}}
\end{eqnarray*}
Similarly, combining the results of Corollary \ref{cor:error2} and \ref{cor:error4}, we have with probability at least $1-2\delta$,
 \begin{eqnarray*}
&\epsilon_t\leq & (2{\Delta} + 2\Delta \mu^2\exp(4\gamma_t M))\left[2(\nu h_t^{\beta}\mathcal{L})^2+ \frac{8\mu^2}{m_th_t^d}\|\omega_t\sqrt{\qtil_t}\|_2^2+ o((m_th_t^d)^{-1})+\frac{16B_1B_2\mu^2}{m_t}\sqrt{\log(1/\delta)}\right]\\
&&+6\gamma_tM\left[ \nu \mathcal{L}h^{\beta} + \frac{\mu}{\sqrt{m_t}h_t^{d/2}}\|\omega_t\sqrt{\qtil_t}\|_1 + \frac{1}{\sqrt{m_t}}\|\omega_t\sqrt{\qtil_t}\|_2+\frac{1}{\sqrt{m_t}}\sqrt{8B_1B_2\log(1/\delta)}+o((m_th_t^d)^{-\frac{1}{2}})\right]\\
&& + o\bigg(2(\nu h_t^{\beta}\mathcal{L})^2+ \frac{8\mu^2}{m_th_t^d}\|\omega_t\sqrt{\qtil_t}\|_2^2 + \gamma_t [\nu \mathcal{L}h_t^{\beta} + \frac{\mu}{\sqrt{m_t}h_t^{d/2}}\|\omega_t\sqrt{\qtil_t}\|_1 + \frac{1}{\sqrt{m_t}}\|\omega_t\sqrt{\qtil_t}\|_2]\bigg)
\end{eqnarray*}
which leads to the lemma. \\

\hfill$\blacksquare$

Our main Theorem \ref{thm:final} follows immediately by applying the results in the above lemma to Theorem \ref{thm:main1}.

\emph{Proof of Theorem \ref{thm:final}. } 
We first notice that 
\begin{eqnarray*}
\EE[KL(q^* || \qtil_T)] &=& \EE\left[\int q^*\log\frac{q^*}{\qtil_T}d\theta\right] = \EE\left[\int q^*\log\frac{q^*}{\qhat_T}d\theta + \int q^*\log\frac{\qhat_T}{\qtil_T}d\theta\right]\\
&=& \EE[KL(q^* || \qhat_T)] + \EE\left[\int q^*\log\frac{\qhat_T}{\qtil_T}d\theta\right].
\end{eqnarray*}

For the second term,
\begin{eqnarray*}
\EE\left[\int q^*\log\frac{\qhat_T}{\qtil_T}d\theta\right]&=&\EE\left[\langle q^*,  \log\frac{\frac{1}{1-\zeta_T}\qtil_T}{\qtil_T}\rangle\right] = \EE\left[\langle q^*, -\log(1-\zeta_T)\right]\\
&=&\EE[-\log(1-\zeta_T)]\le \zeta_T + o(\zeta_T)\le \EE\|\qtil_T - \qtil_{T-1}^+\|_1 + o(\EE\|\qtil_T - \qtil_{T-1}^+\|_1)
\end{eqnarray*}
By Theorem \ref{thm:error1} and setting $h_t=O(1)m_t^{-1/(d+2\beta)}$, we achieve the error bound 
$$
\EE\left[\int q^*\log\frac{\qhat_T}{\qtil_T}d\theta\right]\le \Ccal_2m_t^{-\frac{\beta}{d+2\beta}},
$$
where $\Ccal_2 := O(1)M(\mu+\nu\mathcal{L})$.

When setting $\gamma_t =\min\{\frac{2}{t+1}, \frac{\Delta}{Mm_t^{\beta/(d+2\beta)}}\}$
invoking the above lemma, we have
$$\EE_{\theta}  [\epsilon_t|x_{[t-1]},\theta_{[t-1]}]\leq \mathcal{C}_1m_t^{-2\beta/(d+2\beta)},$$
where $\mathcal{C}_1:=O(1)(\mu+\nu\mathcal{L})^2\mu^2\Delta$. 
Expanding the result from Theorem \ref{thm:main1}, it follows that 
  \begin{eqnarray*}
  \EE_{x,\theta}[KL(q^* ||  \qhat_{t+1})] \leqslant (1-\gamma_t) \EE_{x,\theta}[KL(q^* ||  \qhat_t)]+\mathcal{C}_1m_t^{-2\beta/(d+2\beta)} +\frac{\gamma_t^2}{2}M^2
  \end{eqnarray*} 
The above recursion leads to the convergence result for the second term, 
\begin{eqnarray*}
\EE[KL(q^* || \qhat_T)] &\leq& \frac{2\max\left\{D_1, M^2\right\}}{T} + \mathcal{C}_1\frac{\sum_{t=1}^Tt^2 m_t^{-\frac{2\beta}{d+2\beta}}}{T^2}.
\end{eqnarray*} 

Combine these two results, we achieve the desired result
\begin{eqnarray*}
\EE[KL(q^* || \qtil_T)] &\leq& \frac{2\max\left\{D_1, M^2\right\}}{T} + \mathcal{C}_1\frac{\sum_{t=1}^Tt^2 m_t^{-\frac{2\beta}{d+2\beta}}}{T^2} + \Ccal_2m_t^{-\frac{\beta}{d+2\beta}}.
\end{eqnarray*} 

\hfill$\blacksquare$

{\bf Remark.} The convergence in terms of $KL$-divergence is measuring the entire density and much more stringent compared to integral approximation. For the last iterate, an overall $O(\frac{1}{T})$ convergence rate can be achieved when $m_t=O(t^{2+d/\beta})$. Similar to Lemma~\ref{lemma:exact_prox_integral}, with Pinsker's inequality, we could easily obtain the the rate of convergence in terms of integral approximation from Theorem~\ref{thm:final}. After $T$ steps, in general cases, the PMD algorithm converges in terms of integral approximation in rate $O(1/\sqrt{T})$ by choosing $O(1/t)$-decaying stepsizes and $O(t^{2+\frac{d}{2\beta}})$-growing samples.

\section{Derivation Details for Sparse Gaussian Processes and Latent Dirichlet Allocation}\label{appendix:GP_LDA}

We apply the Particle Mirror Descent algorithm to sparse Gaussian processes and latent Dirichlet allocation. For these two models, we decompose the latent variables and incorporate the structure of posterior into the algorithm. The derivation details are presented below.

\subsection{Sparse Gaussian Processes}

Given data $X = \{x_i\}_{i=1}^n,\quad x_i\in \RR^{d \times 1}$ and $y = \{y_i\}_{i=1}^n$. The sparse GP introduce a set of inducing variables, $Z = \{z_i\}_{i=1}^m,\quad z_i\in \RR^{d \times 1}$ and the model is specified as
\begin{eqnarray*}
p(y_n |\ub, Z) &=& \Ncal(y_n| K_{nm}K^{-1}_{mm}\ub, \tilde K)\\
p(\ub|Z) &=& \Ncal(\ub|\mathbf{0}, K_{mm}).
\end{eqnarray*}
where $K_{mm} = [k(z_i, z_j)]_{i, j = 1,\ldots, m}$, $K_{nm} = [k(x_i, z_j)]_{i=1,\ldots, n; j = 1,\ldots, m}$. For different $\tilde K$, there are different sparse approximations for GPs. Please refer~\cite{QuiRas05} for details. We test algorithms on the sparse GP model with $\tilde K = \beta^{-1}I$. We modify the stochastic variational inference for Gaussian processes~\cite{HenFusLaw13} for this model. We also apply our algorithm on the same model. However, it should be noticed that our algorithm could be easily extended to other sparse approximations~\cite{QuiRas05}.

We treat the inducing variables as the latent variables with uniform prior in sparse Gaussian processes. Then, the posterior of $Z, \ub$ could be thought as the solution to the optimization problem
\begin{eqnarray}\label{eq:sparse_gp}
\min_{q(Z, \ub)} \int q(Z, \ub)\log\frac{q(Z, \ub)}{p(Z)p(\ub)}\ub dZ - \sum_{i=1}^n \int q(Z, \ub)\log p (y_i | x_i, \ub, Z)d\ub dZ
\end{eqnarray}

The stochastic gradient of Eq.(\ref{eq:sparse_gp}) w.r.t. $q(Z,\ub)$ will be
$$
g(q(Z, \ub)) = \frac{1}{n}\log{q(Z, \ub)} - \frac{1}{n}\log{p(Z)p(\ub)} - \log p (y_i | x_i, \ub, Z)
$$
and therefore, the prox-mapping in $t$-step is
\begin{eqnarray*}
\min_{q(Z, \ub)} \int q(Z, \ub)\log\frac{q(Z, \ub)}{q_t(Z, \ub)^{1-\gamma_t/n}p(Z, \ub)^{\gamma_t/n}}\ub dZ - \gamma_t\int q(Z, \ub)\log p (y_i | x_i, \ub, Z)d\ub dZ
\end{eqnarray*}
which could be re-written as 
\begin{eqnarray*}
&&\min_{q(Z)q(\ub|Z)} \int q(Z)\bigg\{\log\frac{q(Z)}{q_t(Z)^{1-\gamma_t/n}p(Z)^{\gamma_t/n}} \\
&+& \underbrace{\int q(\ub|Z)\bigg[\log\frac{q(\ub|Z)}{q_t(\ub|Z)^{1-\gamma_t/n}p(\ub|Z)^{\gamma_t/n}} -\gamma_t\log p(y_i | x_i, \ub, Z) \bigg]d\ub}_{L(q(\ub|Z))}\bigg\}dZ
\end{eqnarray*}
We update $q_{t+1}(\ub|Z)$ to be the optimal of $L(q(\ub|Z))$ as
\begin{eqnarray*}
q_{t+1}(\ub|Z)&\propto& q_{t}(\ub|Z)^{1-\gamma_t/n}p(\ub|Z)^{\gamma_t/n}p(y_i|x_i, \ub, Z)^{\gamma_t}\\
&=& \Ncal(\ub | m_t, \delta_t^{-1})^{1-\gamma_t/n}\Ncal(\ub | \mathbf{0}, K_{mm})^{\gamma_t/n}\Ncal(y_i | K_{im}K_{mm}^{-1}\ub, \Gamma)^{\gamma_t}\\
&=&\Ncal(\ub|m_{t+1}, \delta_{t+1}^{-1})
\end{eqnarray*}
where $\Gamma = diag(\tilde K_{ii} - Q_{ii}) + \beta^{-1}I,\,\, Q_{ii} = K_{im}K_{mm}^{-1}K_{mi}$, 
\begin{eqnarray*}
\delta_{t+1} = (1 - \gamma_t/n)\delta_t + \gamma_t/n K_{mm}^{-1} + \gamma_t K_{im}K_{mm}^{-1}\Gamma^{-1}K_{mm}^{-1}K_{mi}\\
m_{t+1} = \delta_{t+1}^{-1}\bigg((1 - \gamma_t/n)\delta_{t}^{-1}m_t + \gamma_t/n K_{mm}^{-1}m_0 + \gamma_t K_{mm}^{-1}K_{mi}\Gamma^{-1}y\bigg)
\end{eqnarray*}

Plug this into the $L(q(\ub|Z))$, we have
\begin{eqnarray*}
L(q(u|Z)) = \int q(\ub|Z)\bigg[\log\frac{q(\ub|Z)}{q_t(\ub|Z)^{1-\gamma_t/n}p(\ub|Z)^{\gamma_t/n}} -\gamma_t\log p(y_i | x_i, \ub, Z) \bigg]d_\ub= -\log\tilde p(y_i|x_i, Z)
\end{eqnarray*}
where 
\begin{eqnarray*}
\tilde p(y_i|x_i, Z) &=& \int q_t(\ub|Z)^{1-\gamma_t/n}p(\ub|Z)^{\gamma_t/n}p(y_i|x_i, \ub, Z)^{\gamma_t}d\ub\\
&=& \int \Ncal(\ub | m_t, \delta_t^{-1})^{1 - \gamma_t / n}\Ncal(\ub|0, K_{mm})^{\gamma_t /n}\Ncal(y_i | K_{im}K^{-1}_{mm}\ub, \Gamma)^{\gamma_t}d\ub\\
&=& \Ncal(y_i | K_{im}K^{-1}_{mm}c, \Sigma)
\end{eqnarray*}
where 
\begin{eqnarray*}
\bar\delta_{t+1} &=& (1  - \gamma_t/n)\delta_t + \gamma_t /n K_{mm}^{-1}\\
c &=& \bar \delta_{t+1}^{-1}\bigg((1 - \gamma_t /n)\delta_tm_t + \gamma_t / n K_{mm}^{-1}m_0\bigg)\\
\Sigma &=& K_{im}K^{-1}_{mm}\bar \delta^{-1}_{t+1}K^{-1}_{mm}K_{mi} + \frac{1}{\gamma_t}\Gamma
\end{eqnarray*}

Solve
\begin{eqnarray*}
\min_{q(Z)}\int q(Z)\log\frac{q(Z)}{q_t(Z)^{1-\gamma_t/n}p(Z)^{\gamma_t /n }}dZ -\int q(Z)\log\tilde p(y_i|x_i, Z)dZ
\end{eqnarray*}
will result the update rule for $q(Z)$,
\begin{eqnarray*}
q_{t+1}(Z) \propto q_t(Z)^{1-\gamma_t/n}p(Z)^{\gamma_t/n}\tilde p(y_i|x_i, Z)
\end{eqnarray*}
We approximate the $q(Z)$ with particles, i.e., $q(Z) = \sum_{j=1}^l w^j\delta(Z^j)$. The update rule for $w^j$ is
$$
w_{t+1}^j = \frac{w_t^j\exp(-\gamma_t/n\log(w_t^j) + \gamma_t/n \log p(Z^j) + \log\tilde p(y_i|x_i, Z^j))}{\sum_j^l w_t^j\exp(-\gamma_t/n\log(w_t^j) + \gamma_t/n \log p(Z^j) + \log\tilde p(y_i|x_i, Z^j))
}
$$

\subsection{Latent Dirichlet Allocations}

In LDA, the topics $\Phi\in \RR^{K \times W}$ are $K$ distributions on the words $W$ in the text corpora. The text corpora contains $D$ documents, the length of the $d$-th document is $N_d$. The document is modeled by a mixture of topics, with the mixing proportion $\theta_d\in \RR^{1 \times K}$. The words generating process for $X_d$ is following: first drawing a topic assignment $z_{dn}$, which is $1$-by-$K$ indicator vector, \iid from $\theta_d$ for word $x_{dn}$ which is $1$-by-$W$ indicator vector, and then drawing the word $x_{dn}$ from the corresponding topic $\Phi_{z_{dn}}$. We denote $z_d = \{z_{dn}\}_{n=1}^{N_d} \in \RR^{N_d \times K}$, $ x_d = \{x_{dn}\}_{n=1}^{N_d}\in \RR^{N_d \times W}$ and $X = \{x_d\}_{d=1}^D$,$Z = \{Z_d\}_{d=1}^D$ . Specifically, the joint probability is 
\begin{eqnarray}
p(x_d, z_d, \theta_d, \Phi) &=& p(x_d|z_d, \Phi) p(z_d|\theta_d) p(\theta_d) p(\Phi)\\ \nonumber
p(x_d|z_d, \Phi) &=& \prod_{n=1}^{N_d}\prod_{w=1}^{W}\prod_{k=1}^{K} \Phi_{kw}^{z_{dnk}x_{dnw}}\\
p(z_d|\theta_d) &=& \prod_{n=1}^{N_d}\prod_{k=1}^{K} \theta_{dk} ^{z_{dnk}} \nonumber
\end{eqnarray}
The $p(\Phi)$ and $p(\theta)$ are the priors for parameters, $p(\theta_d|\alpha) = \frac{\Gamma(K\alpha)}{\Gamma(\alpha)^K}\prod_{k}^K \theta_{dk}^{\alpha-1}$ and $p(\Phi|\beta_0) = \prod_{k}^K \frac{\Gamma(W\beta_0)}{\Gamma(\beta_0)^W} \prod_{w}^W \Phi_{wk}^{\beta_0-1}$, both are Dirichlet distributions.

We incorporate the special structure into the proposed algorithm. Instead of modeling the $p(\Phi)$ solely, we model the $Z = \{Z\}_{d=1}^D$ and $\Phi$ together as $q(Z, \Phi)$. Based on the model, given $Z$, the $q(\Phi|Z)$ will be Dirichlet distribution and could be obtained in closed-form.

The posterior of $Z, \Phi$ is the solution to
\begin{eqnarray*}
\min_{q(Z, \Phi)} \frac{1}{D}\int q(Z, \Phi)\log\frac{q(Z, \Phi)}{p(Z|\alpha)p(\Phi|\beta)}dZd\Phi - \frac{1}{D}\sum_{d=1}^D\int q(Z, \Phi)\log p(x_d|z_d,\Phi)dZd\Phi
\end{eqnarray*}

We approximate the finite summation by expectation, then the objective function becomes
\begin{eqnarray}\label{eq:LDA}
\min_{q(Z, \Phi)} \frac{1}{D}\int q(Z, \Phi)\log\frac{q(Z, \Phi)}{p(Z|\alpha)p(\Phi|\beta)}dZd\Phi - \EE_x\bigg[\int q(Z, \Phi)\log p(x_d|z_d,\Phi)dZd\Phi\bigg]
\end{eqnarray}

We approximate the $q(Z) \approx \sum_{i=1}^m w^i\delta(Z^i)$ by particles, and therefore, $q(Z, \Phi) \approx \sum_{i=1}^m w^i P(\Phi|Z^i)$ where $P(\Phi|Z^i)$ is the Dirichlet distribution as we discussed. It should be noticed that from the objective function, we do not need to instantiate the $z_d$ until we visit the $x_d$. By this property, we could first construct the particles $\{Z^i\}_{i=1}^m$ `conceptually' and assign the value to $\{z_d^i\}_{i=1}^m$ when we need it. The gradient of Eq.(\ref{eq:LDA}) w.r.t. $q(\Phi, Z)$ is 
$$
g(q(Z, \Phi)) = \frac{1}{D}\log{q(Z, \Phi)}- \frac{1}{D}\log{p(\Phi)p(Z)}- \EE_x[\log p(x_d|\Phi, z_d)]
$$

Then, the SGD prox-mapping is 
\begin{eqnarray*}
\min_{q(Z, \Phi)} \int q(Z, \Phi)\log\frac{q(Z, \Phi)}{q_t(Z, \Phi)} + \gamma_t\int q(Z, \Phi) \bigg[\log{q_t(Z, \Phi)}/D- \log{p(\Phi)p(Z)}/D - \log p(x_d|\Phi, z_d) \bigg]dZd\Phi
\end{eqnarray*}

We rearrange the prox-mapping,
\begin{eqnarray*}
\min_{q(Z)q(\Phi|Z)}&& \int q(Z)q(\Phi|Z)\log\frac{q(Z)q(\Phi|Z)}{q_t(Z)^{1-\gamma_t/D}q_t(\Phi|Z)^{1-\gamma_t/D}} \\
&-& \gamma_t\int q(Z)q(\Phi|Z) \bigg[\log{p(\Phi)p(Z)}/D + \log p(x_d|\Phi, z_d) \bigg]dZd\Phi
\end{eqnarray*}

\begin{eqnarray*}
\min_{q(Z)q(\Phi|Z)} &&\int q(Z)\bigg\{\log\frac{q(Z)}{q_t(Z)^{1-\gamma_t/D}p(Z)^{\gamma_t/D}}\\
&+& \underbrace{\int q(\Phi|Z)\bigg[\log\frac{q(\Phi|Z)}{q_t(\Phi|Z)^{1-\gamma_t/D}p(\Phi)^{\gamma_t/D}} - \gamma_t \log p(x_d|\Phi, z_d) \bigg]d\Phi}_{L(q(\Phi|Z))} \bigg\}dZ
\end{eqnarray*}

The stochastic functional gradient update for $q(\Phi|Z^i)$ is 
\begin{eqnarray*}
q_{t+1}(\Phi|Z^i)\propto q_{t}(\Phi|Z^i)^{1-\gamma_t/D}p(\Phi)^{\gamma_t/D} p(x_d|\Phi, z_d)^{\gamma_t}
\end{eqnarray*}
Let $q_{t}(\Phi|Z^i) = \Dcal ir(\beta_t^i)$, then, the $q_{t+1}(\Phi|Z^i)$ is also Dirichlet distribution
\begin{eqnarray*}
q_{t+1}(\Phi|Z^i)\propto \Dcal ir(\beta_t^i)^{1-\tilde \gamma_t}\Dcal ir(\beta_0)^{\tilde \gamma_t} \bigg(\prod_k\prod_w \Phi_{kw}^{\sum_n^{N_d}\delta(z_{dnk}=1, x_{dnw}=1)}\bigg)^{D\tilde \gamma_t} = \Dcal ir(\beta_{t+1}^i)
\end{eqnarray*}
where $\tilde \gamma_t = \gamma_t / D$ and 
$$
[\beta_{t+1}^i]_{kw} = (1-\tilde \gamma_t)[\beta_t^i]_{kw} + \tilde \gamma_t\beta_0 + D\tilde \gamma_t \sum_n^{N_d}\delta(z_{dnk}=1, x_{dnw}=1).
$$
In mini-batch setting, the updating will be 
$$
[\beta_{t+1}^i]_{kw} = (1-\tilde\gamma_t)[\beta_t^i]_{kw} + \tilde \gamma_t\beta_0 + \frac{D}{B}\tilde \gamma_t \sum_{d=1}^B\sum_n^{N_d}\delta(z_{dnk}=1, x_{dnw}=1).
$$

Plug the $q_{t+1}(\Phi|Z^i)$ into prox-mapping, we have 
\begin{eqnarray*}
L(q(\Phi|Z)) &=& \int q(\Phi|Z)\bigg[\log\frac{q(\Phi|Z)}{q_t(\Phi|Z)^{1-\tilde \gamma_t}p(\Phi)^{\tilde\gamma_t}} - D\tilde \gamma_t \log p(x_d|\Phi, z_d) \bigg]d\Phi\\
&=& -\log\tilde p(x_d|z_d, Z)
\end{eqnarray*}
where $\tilde p(x_d|z_d, Z^i) = \int_\Phi q_t(\Phi|Z^i)^{1-\tilde \gamma_t}p(\Phi)^{\tilde \gamma_t} p(x_d|\Phi, z_d)^{D\tilde \gamma_t}d\Phi$ which have closed-form
\begin{eqnarray*}
\tilde p(x_d|z_d, Z^i) &=& \int_\Phi q_t(\Phi|Z^i)^{1-\tilde \gamma_t}p(\Phi)^{\tilde \gamma_t} p(x_d|\Phi, z_d)^{D\tilde \gamma_t}d\Phi\\
&=& \int \Dcal ir(\beta_{t}^i)^{1-\tilde \gamma_t}\Dcal ir(\beta_{0}^i)^{\tilde \gamma_t}\bigg(\prod_k\prod_w \Phi_{kw}^{\sum_n^{N_d}\delta(z_{dnk}=1, x_{dnw}=1)}\bigg)^{D\tilde \gamma_t}d\Phi\\
&=&\prod_k\bigg(\frac{\Gamma(\sum_w^W [\beta_t^i]_{kw})}{\prod_w\Gamma([\beta_t^i]_{kw})}\bigg)^{1-\tilde \gamma_t}\bigg(\frac{\Gamma(W\beta_0)}{\Gamma(\beta_0)^W}\bigg)^{\tilde \gamma_t}\frac{\prod_w \Gamma([\beta_{t+1}^i]_{kw})}{\Gamma(\sum_w[\beta_{t+1}^i]_{kw})}
\end{eqnarray*}
and
\begin{eqnarray*}
\log \tilde p(x_d|z_d, Z^i) &\propto& \sum_k\bigg((1-\tilde \gamma_t)\log\Gamma(\sum_w^W [\beta_t^i]_{kw}) + \sum_w\log\Gamma([\beta_{t+1}^i]_{kw}) \\
&-& \log \Gamma(\sum_w[\beta_{t+1}^i]_{kw})- (1-\tilde \gamma_t)\sum_w\log \Gamma([\beta_t^i]_{kw})\bigg)
\end{eqnarray*}
Then, we could update $q_t(Z) = \sum_i^m w^i\delta(Z^i)$ by 
\begin{eqnarray*}
q_{t+1}(Z^i) \propto q_{t}(Z^i)\exp\bigg(-\frac{\gamma_t}{D}\log q_{t}(Z^i) + \frac{\gamma_t}{D}\log p(Z^i|\alpha) + \log \tilde p(x_d|z_d, Z^i)\bigg)
\end{eqnarray*}
If we set $\alpha = 1$, $p(Z^i)$ will be uniformly distributed which has no effect to the update. For general setting, to compute $\log p(Z^i|\alpha)$, we need prefix all the $\{z_d^i\}_{d = 1}^D$. However, when $D$ is huge, the second term will be small and we could ignore it approximately.

Till now, we almost complete the algorithm except the how to assign $z_d$ when we visit $x_d$. We could assign the $z_d$ randomly. However, considering the requirement for the $z_d^i$ assignment that the $q(z^i_d|Z^i_{\setminus d})>0$, which means the assignment should be consistent, an better way is using the average or sampling proportional to $\int p(x_d|\Phi, z_d) q_t(\Phi |Z^i)p(z_d|Z^i_{1\ldots, d-1}) d\Phi$ where $p(z_d|Z^i_{1\ldots, d-1})  = \int p(z_d|\alpha)p(\alpha|Z^i_{1\ldots, d-1})d\alpha$, or $\int p(x_d|\Phi, z_d) q_t(\Phi |Z^i)p(z_d|\alpha) d\Phi$.

\section{More Related Work}\label{appendix:more_related_work}

Besides the most related two inference algorithms we discussed in Section~(\ref{sec:related_work}), \ie, stochastic variational inference~\cite{HofBleWanPai13} and static sequential Monte Carlo~\cite{Chopin02,BalMad06}, there are several other inference algorithms connect to the PMD from algorithm, stochastic approximation, or representation aspects, respectively.

From algorithmic aspect, our algorithm scheme shares some similarities to annealed importance sampling~(AIS)~\cite{Neal01} in the sense that both algorithms are sampling from a series of densities and reweighting the samples to approximate the target distribution. The most important difference is the way to construct the intermediate densities. In AIS, the density at each iteration is a weighted product of the joint distribution of \emph{all the data} and a \emph{fixed} proposal distribution, while the densities in PMD are a weighted product of \emph{previous step solution} and the stochastic functional gradient on \emph{partial data}. Moreover, the choice of the temperature parameter (fractional power) in AIS is heuristic, while in our algorithm, we have a principle way to select the stepsize with quantitative analysis. The difference in intermediate densities results the sampling step in these two algorithms is also different: the AIS might need MCMC to generate samples from the intermediate densities, while we only samples from a KDE which is more efficient. These differences make our method could handle large-scale dataset while AIS cannot.

Sequential Monte-Carlo sampler~\cite{DelDouJas06} provides a unified view of SMC in Bayesian inference by adopting different forward/backward kernels, including the variants proposed in~\cite{Chopin02,BalMad06} as special cases. There are subtle and important differences between the PMD and the SMC samplers. In the SMC samplers, the introduced finite forward/backward Markov kernels are used to construct a distribution over the auxiliary variables. To make the SMC samplers valid, it is required that the marginal distribution of the constructed density by integrating out the auxiliary variables must be the \emph{exact} posterior. However, there is no such requirement in PMD. In fact, the PMD algorithm only \emph{approaches} the posterior with controllable error by iterating the dataset \emph{many times}. Therefore, although the proposed PMD and the SMC sampler bare some similarities operationally, they are essentially different algorithms.

Stochastic approximation becomes a popular trick in extending the classic Bayesian inference methods to large-scale datasets recently. Besides stochastic variational inference, which incorporates stochastic gradient descent into variational inference, the stochastic gradient Langevin dynamics~(SGLD)~\citet{WelTeh11}, and its derivatives~\cite{AhnKorWel12, ChenFoxGue14, DinFanBabCheetal14} combine ideas from stochastic optimization and Hamiltonian Monte Carlo sampling. Although both PMD and the SGLD use the stochastic gradient information to guide next step sampling, the optimization variable in these two algorithms are different which results the completely different updates and properties. In PMD, we directly update the density utilizing \emph{functional gradient in density space}, while the SGLD perturbs the \emph{stochastic gradient in parameter space}. Because of the difference in optimization variables, the mechanism of these algorithms are totally different. The SGLD generates a trajectory of \emph{dependent} samples whose stationary distribution approximates the posterior, the PMD keeps an approximation of the posterior represented by \emph{independent} particles or their weighted kernel density estimator. In fact, their different properties we discussed in Table~\ref{table:methods_survey} solely due to this essential difference.

A number of generalized variational inference approaches are proposed trying to relax the constraints on the density space with flexible densities. Nonparametric density family is a natural choice\footnote{Although~\cite{SudIhlFreWil03, GerHofBle12} named their methods as ``nonparametric'' belief propagation and ``nonparametric'' variational inference, they indeed use mixture of Gaussians, which is still a parametric model.}. \citep{SonGreBicLowGue11} and \cite{IhlMcA09, LieTehDou15} extend the belief propagation algorithm with nonparametric models by kernel embedding and particle approximation, respectively. The most important difference between these algorithms and PMD is that they originate from different sources and are designed for different settings. Both the kernel BP~\citet{SonGreBicLowGue11} and particle BP~\citet{IhlMcA09, LieTehDou15} are based on belief propagation optimizing \emph{local objective} and designed for the problem with \emph{one sample} $X$ in which observations are highly dependent, while the PMD is optimizing the \emph{global objective}, therefore, more similar to mean-field inference, for the inference problems with \emph{many} \iid\, samples. 

After the comprehensive review about the similarities and differences between PMD and the existing related approximate Bayesian inference methods from algorithm, stochastic approximation and representation perspectives, we can see the position of the proposed PMD clearly. The PMD connects variation inference and Monte Carlo approximation, which seem two orthogonal paradigms in approximate Bayesian inference, and achieves a balance in trade-off between efficiency, flexibility and provability. 

\section{Experiments Details}\label{appendix:exp}

\subsection{Mixture Models}
\begin{figure*}[!ht]
\centering
  \begin{tabular}{ccc}
  \includegraphics[width=0.32\textwidth]{figures/true_posterior-crop} &
  \includegraphics[width=0.32\textwidth]{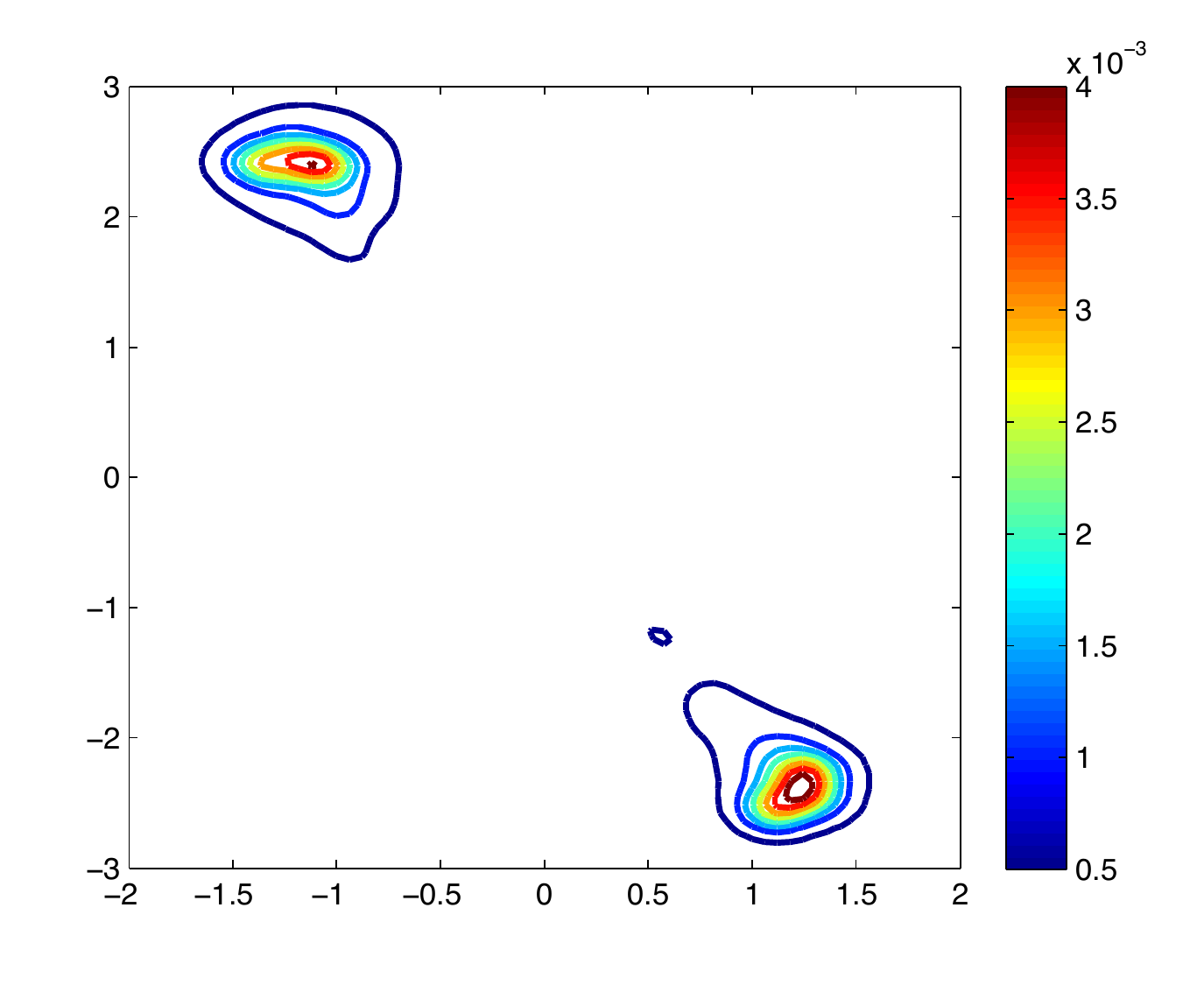} &
  \includegraphics[width=0.32\textwidth]{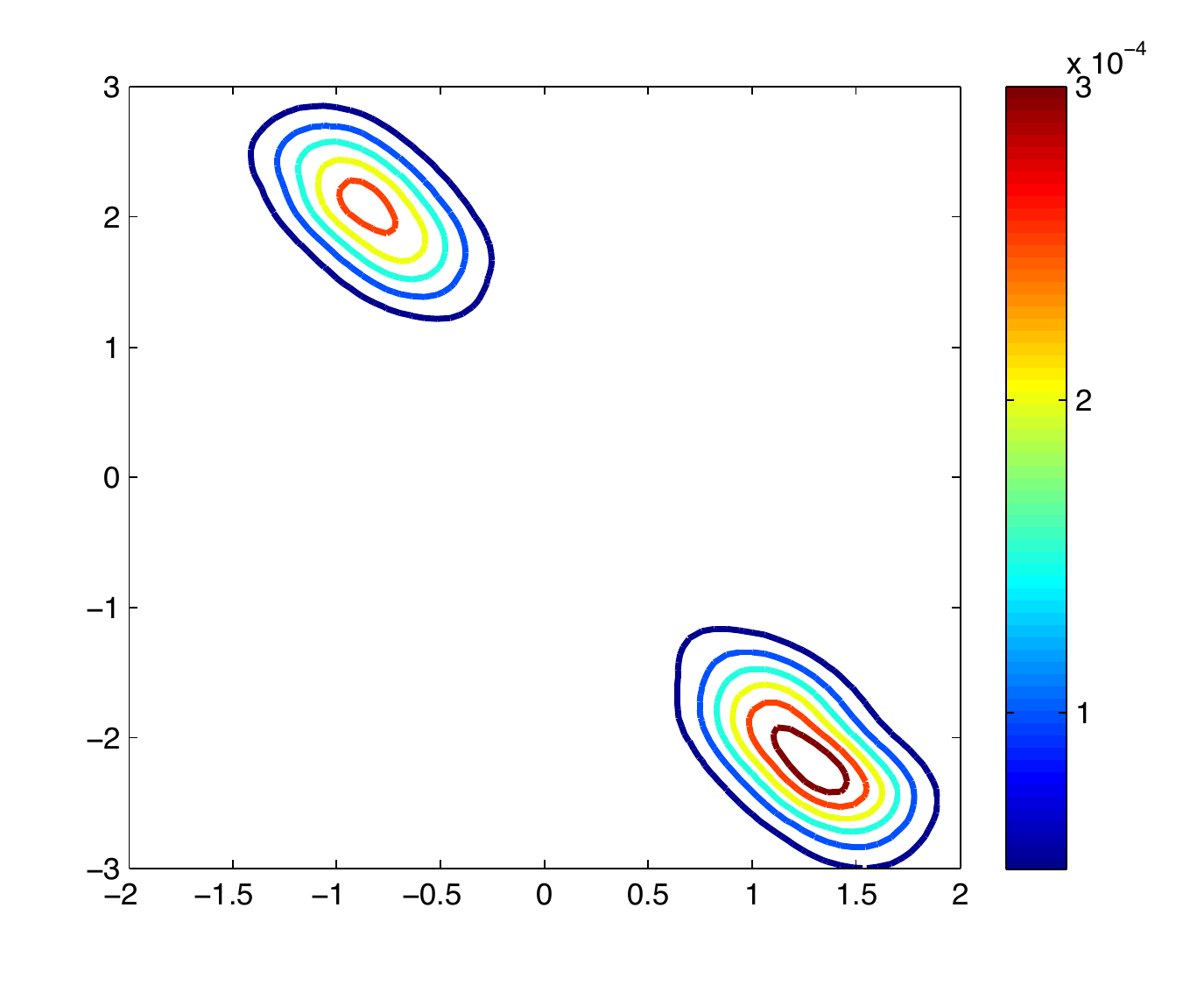} \\
  (1) True Posterior & (2) SGD NPV & (3) One-pass SMC
  \end{tabular}
  \begin{tabular}{ccc}
  \includegraphics[width=0.32\textwidth]{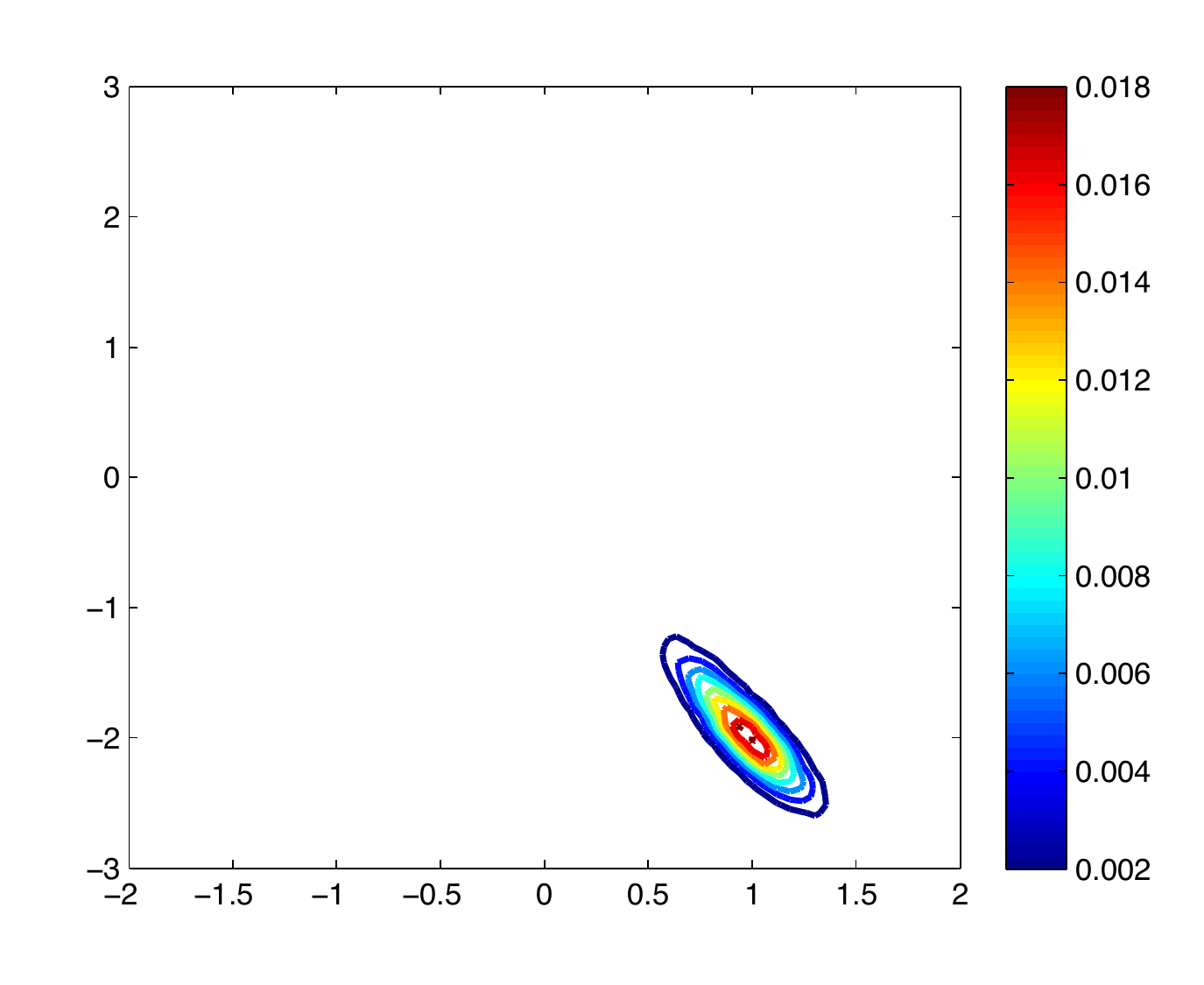} & 
  \includegraphics[width=0.32\textwidth]{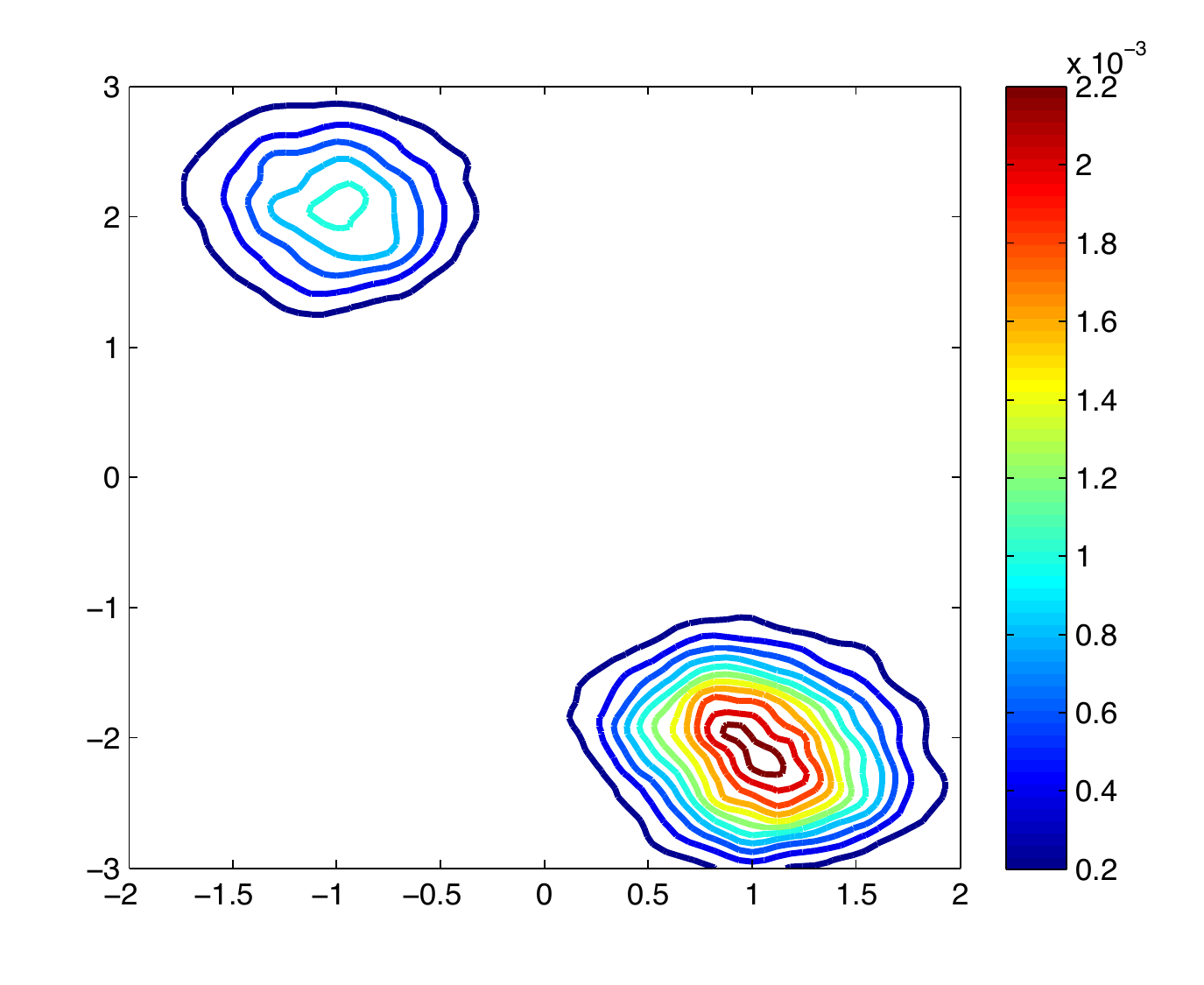}&
  \includegraphics[width=0.32\textwidth]{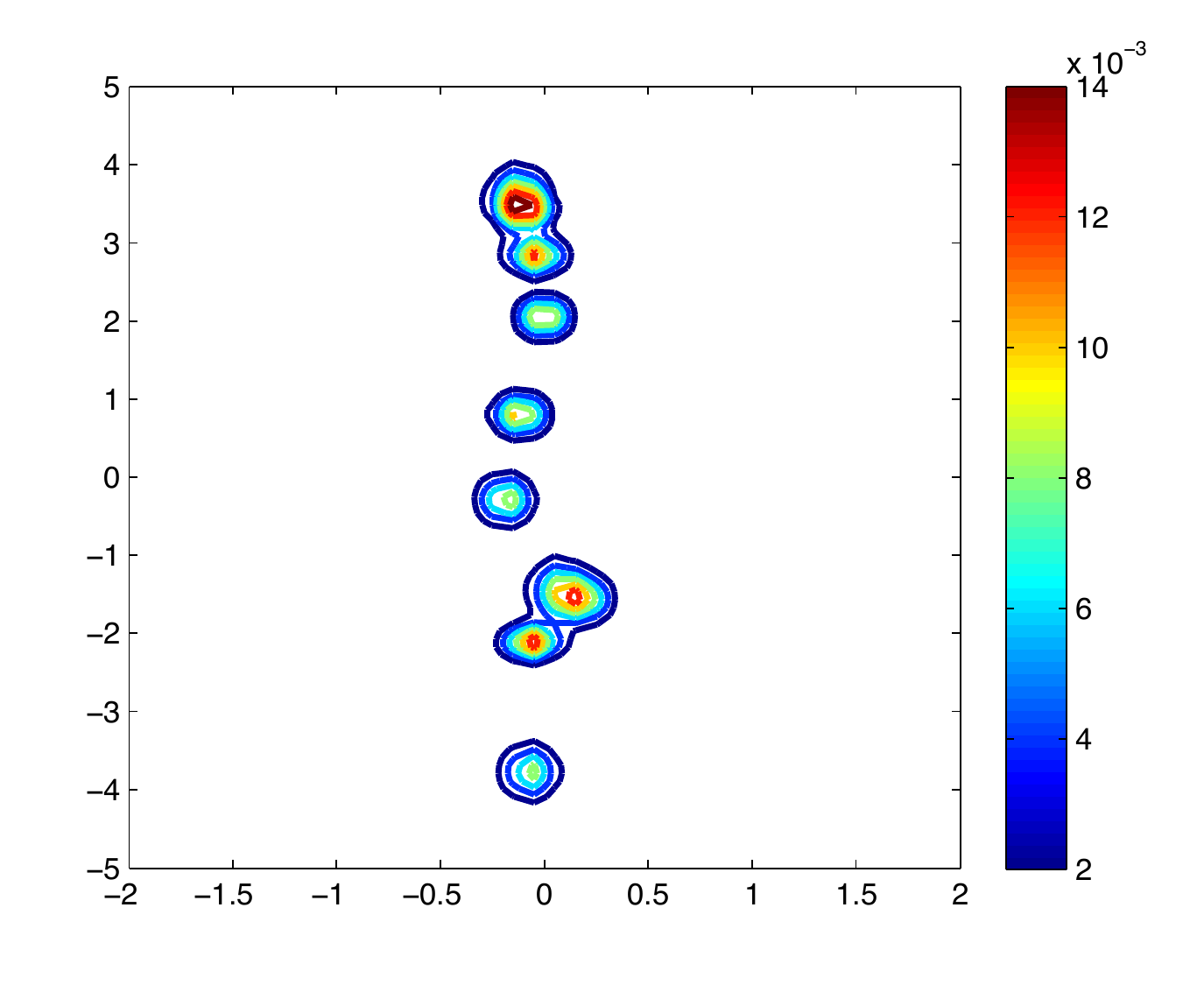}\\
  (4) Gibbs Sampling  & (5) SGD Langevin &(6) SVI
  \end{tabular}
  \caption{Visualization of posteriors of mixture model on synthetic dataset obtained by several inference methods. }
  \label{fig:syn_data_competitors}
\end{figure*}

We use the normalized Gaussian kernel in this experiment. For one-pass SMC, we use the suggested kernel bandwidth in~\cite{BalMad06}. For our method, since we increase the samples, the kernel bandwidth is shrunk in rate of $O(m^{-\frac{1}{2}})$ as the theorem suggested. The batch size for stochastic algorithms and one-pass SMC is set to be $10$. The total number of particles for the Monte Carlo based competitors, \ie, SMC, SGD Langevin, Gibbs sampling, and our method is $1500$ in total. We also keep $1500$ Gaussian components in SGD NPV. The burn-in period for Gibbs sampling and stochastic Langevin dynamics are $50$ and $1000$ respectively. 

The visualization of $10$ runs average posteriors obtained by the alternative methods are plotted in Figure~\ref{fig:syn_data_competitors}. From these figures, we could have a direct understand about the behaviors for each competitors. The Gibbs sampling and stochastic gradient Langevin dynamics sampling stuck in one local mode in each run. Gibbs sampler could fit one of the contour quite well, better than the stochastic Langevin dynamics. It should be noticed that this is the average solution, the two contours in the result of stochastic gradient Langevin dynamics did not mean it finds both modes simultaneously. The one-pass sequential Monte Carlo and stochastic nonparametric variational inference are able to location multiple modes. However, their shapes are not as good as ours. Because of the multiple modes and the highly dependent variables in posterior, the stochastic variational inference fails to converge to the correct modes.

To compare these different kinds of algorithms in a fair way, we evaluate their performances using total variation and cross entropy of the solution against the true potential functions versus the number of observations visited. In order to evaluate the total variation and the cross entropy between the true posterior and the estimated one, we use both kernel density estimation and Gaussian estimation to approximate the posterior density and report the better one for Gibbs sampling and stochastic Langevin dynamics. The kernel bandwidth is set to be $0.1$ times the median of pairwise distances between data points (median trick).

In Figure~\ref{fig:syn_data}(3)(4), the one-pass SMC performs similar to our algorithm at beginning. However, it cannot utilize the dataset effectively, therefore, it stopped with high error. It should be noticed that the one-pass SMC starts with more particles while our algorithm only requires the same number of particles at final stage. The reason that Gibbs sampling and the stochastic gradient Langevin dynamics perform worse is that they stuck in one mode. It is reasonable that Gibbs sampling fits the single mode better than stochastic gradient Langevin dynamics since it generates one new sample by scanning the whole dataset. For the stochastic nonparametric variational inference, it could locate both modes, however, it optimizes a non-convex objective which makes its variance much larger than our algorithm. The stochastic variational inference fails because of the highly dependent variables and multimodality in posterior.

\subsection{Bayesian Logistic Regression}
The likelihood function is 
$$
p(y|x, w) = \frac{1}{1 + \exp(-yw^\top x)}
$$ 
with $w$ as the latent variables. We use Gaussian prior for $w$ with identity covariance matrix. 

We first reduce the dimension to $50$ by PCA. The batch size is set to be $100$ and the step size is set to be $\frac{1}{100 + \sqrt{t}}$. We stop the stochastic algorithms after they pass through the whole dataset $5$ times. The burn-in period for stochastic Langevin dynamic is set to be $1000$. We rerun the experiments $10$ times. 

Although the stochastic variant of nonparametric variational inference performs comparable to our algorithm with fewer components, its speed is bottleneck when applied to large-scale problems. The gain from using stochastic gradient is dragged down by using L-BFGS to optimize the second-order approximation of the evidence lower bound.

\subsection{Sparse Gaussian Processes}

\subsubsection{1D Synthetic Dataset}

\begin{figure*}[!ht]

  \hspace{-0.3in}
  \begin{tabular}{cccc}
  \includegraphics[width=0.24\textwidth, trim = 40 40 40 40]{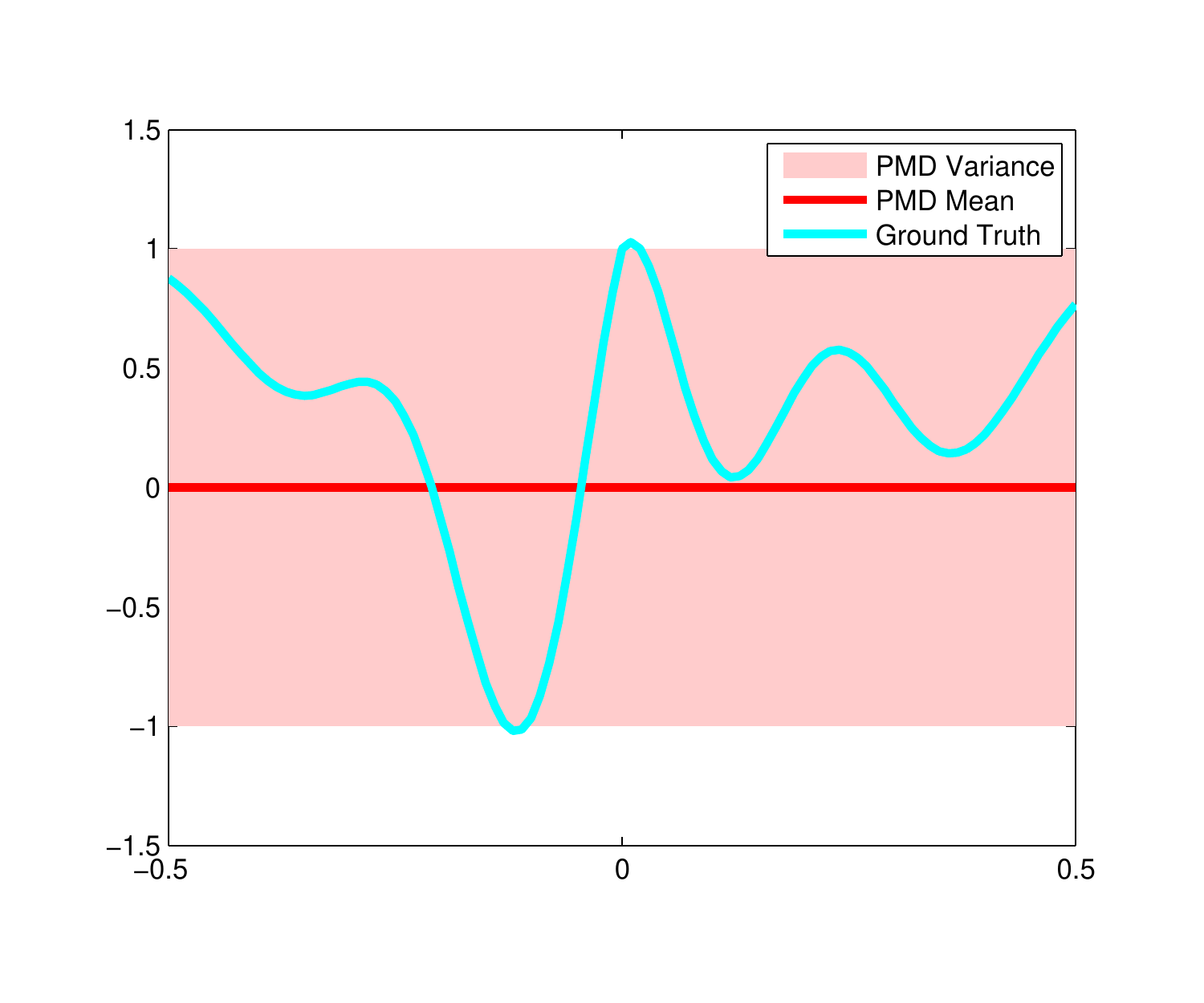} &
  \includegraphics[width=0.24\textwidth, trim = 40 40 40 40]{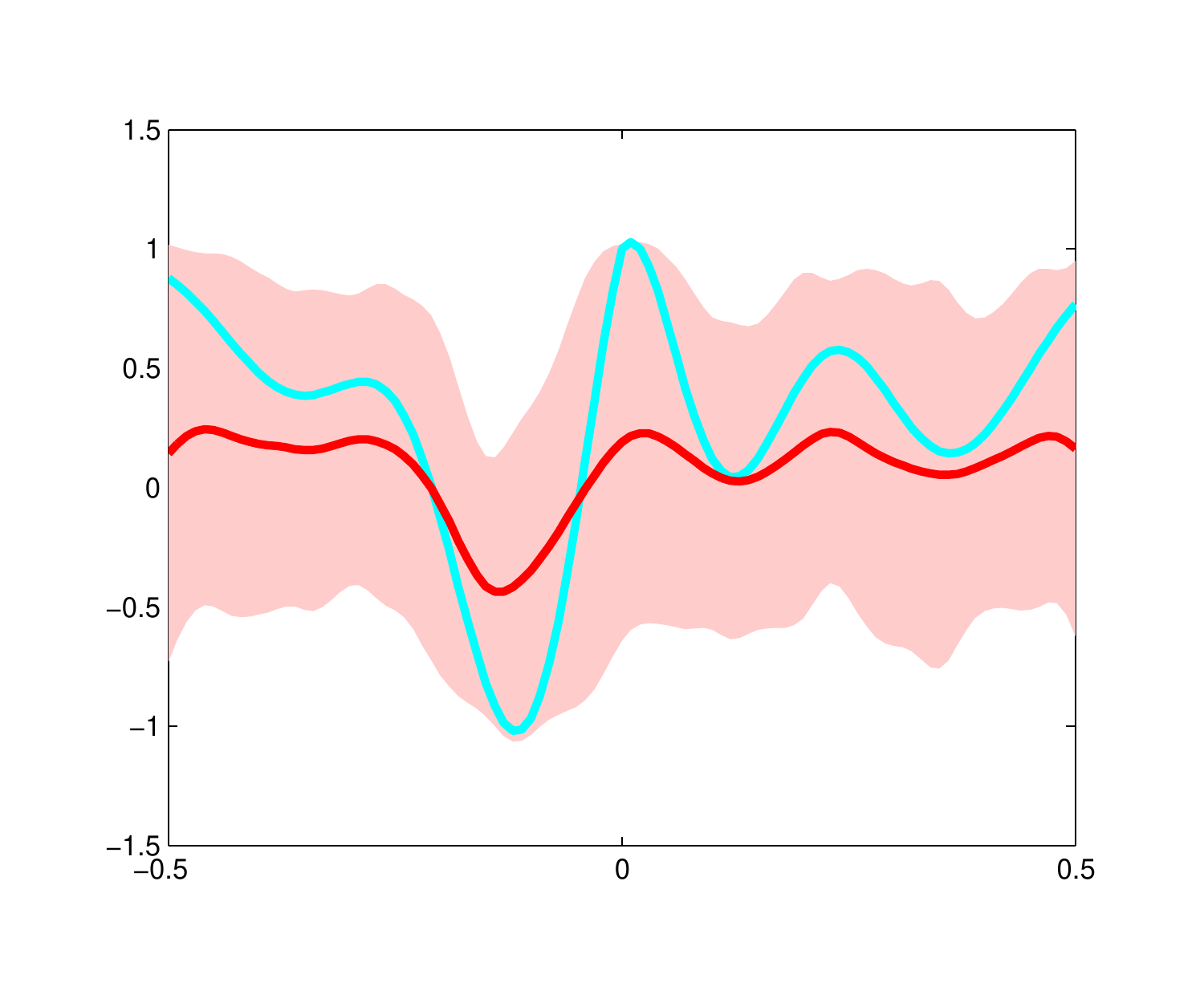} &
  \includegraphics[width=0.24\textwidth, trim = 40 40 40 40]{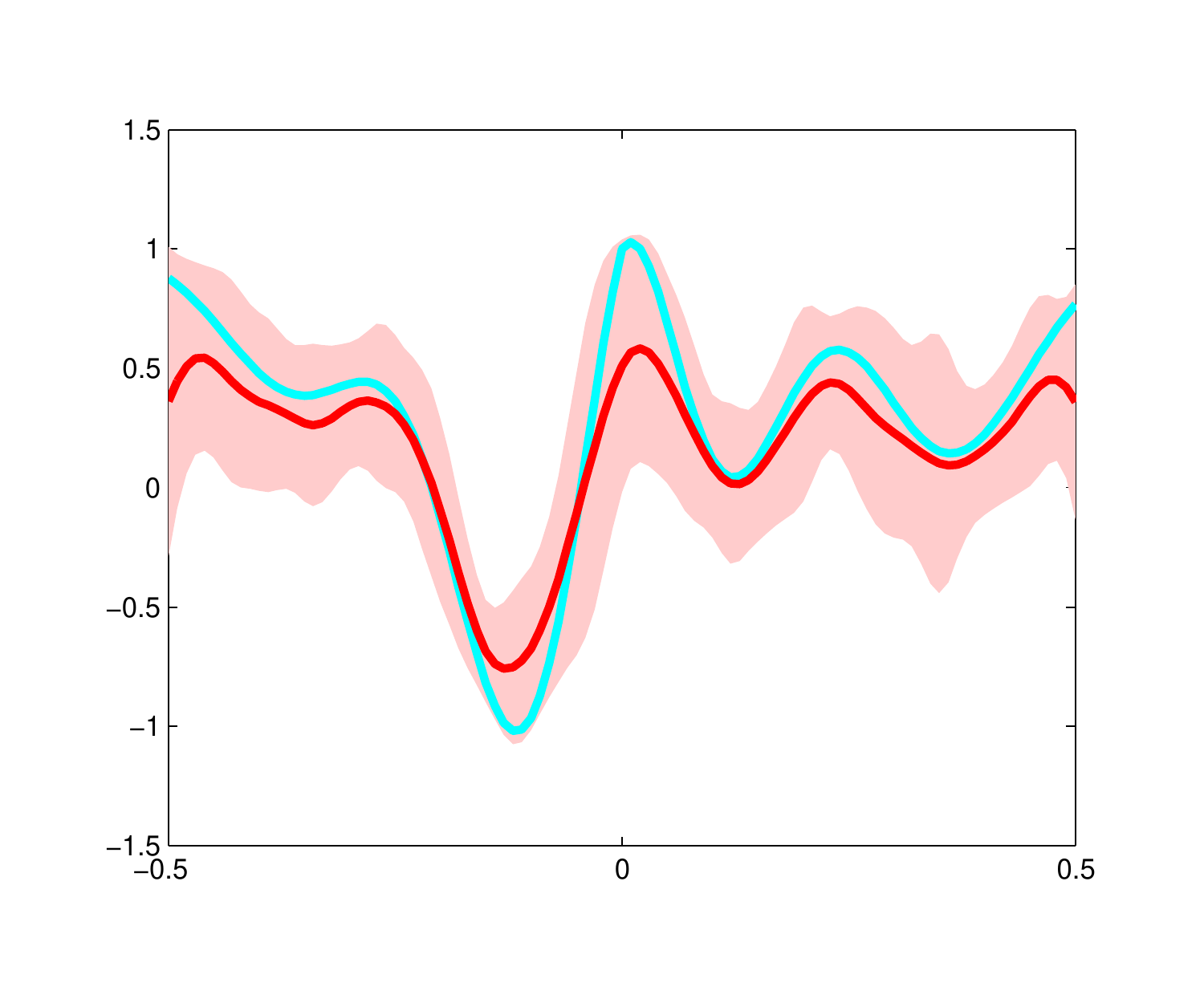} &
  \includegraphics[width=0.24\textwidth, trim = 40 40 40 40]{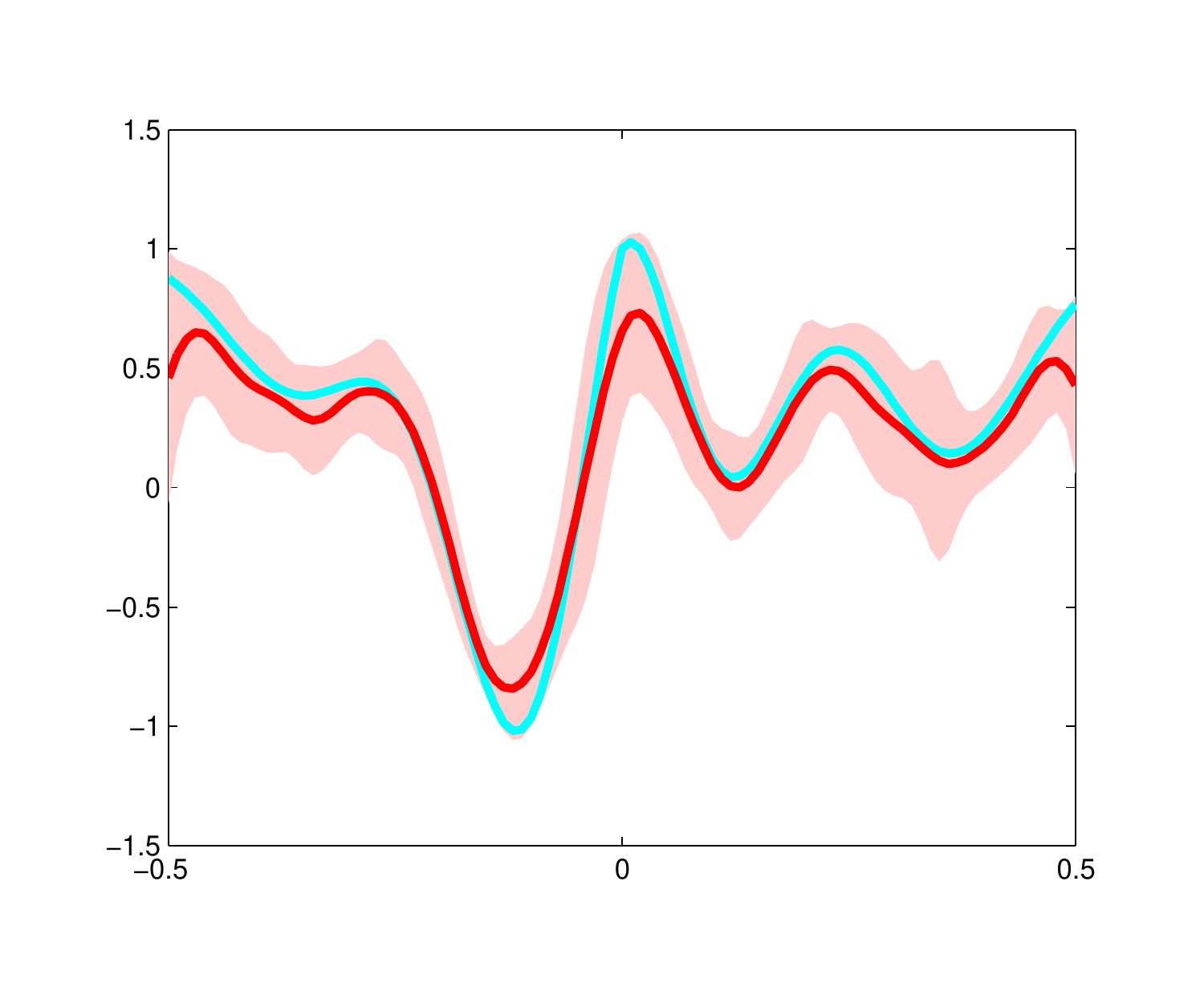} \vspace{-0.1in}\\
  (1) Iteration 0 & (2) Iteration 1 & (3) Iteration 5 &(4) Iteration 10
  \end{tabular}

  \hspace{-0.3in}
  \begin{tabular}{cccc}  
  \includegraphics[width=0.24\textwidth, trim = 40 40 40 40]{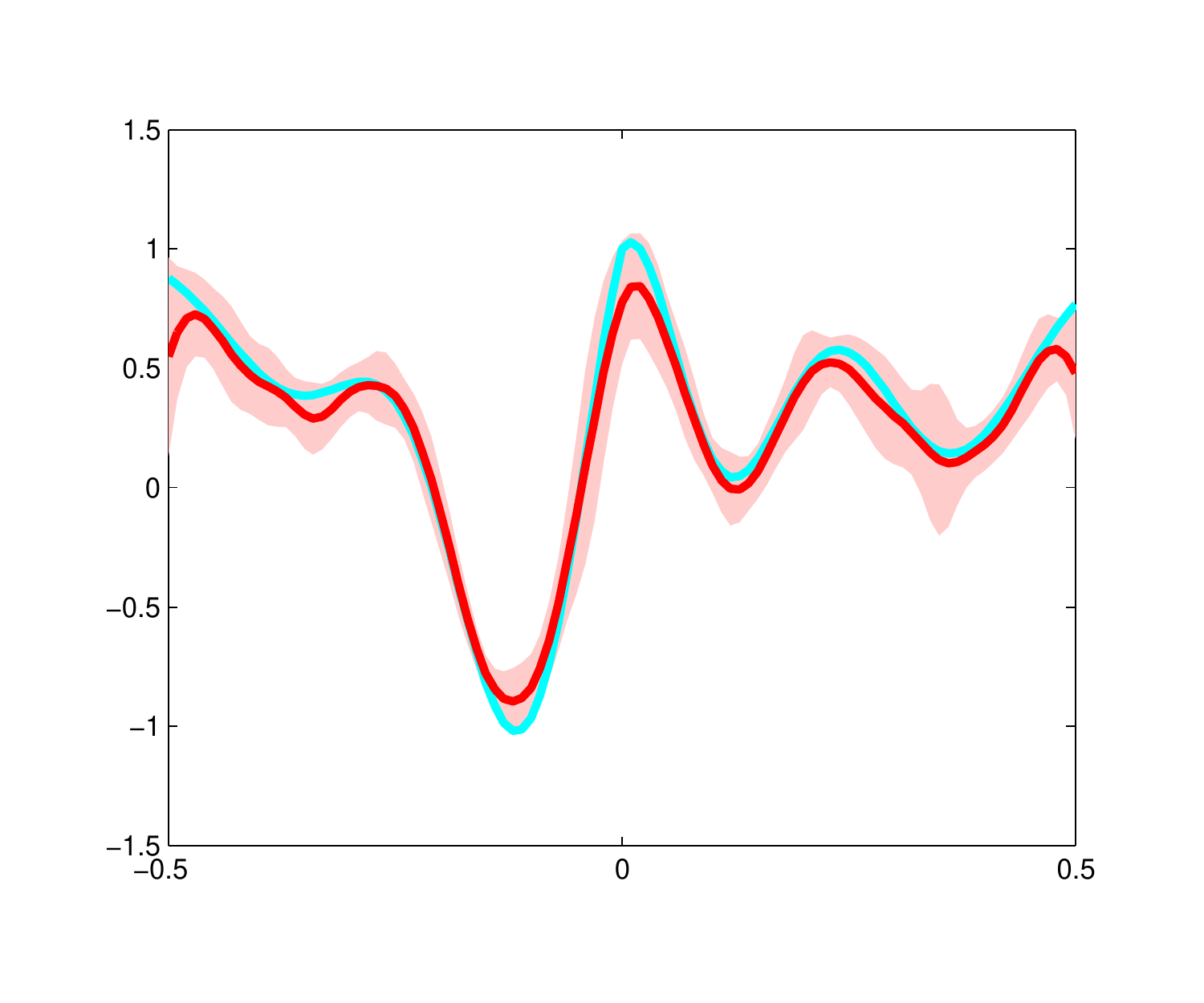} &
  \includegraphics[width=0.24\textwidth, trim = 40 40 40 40]{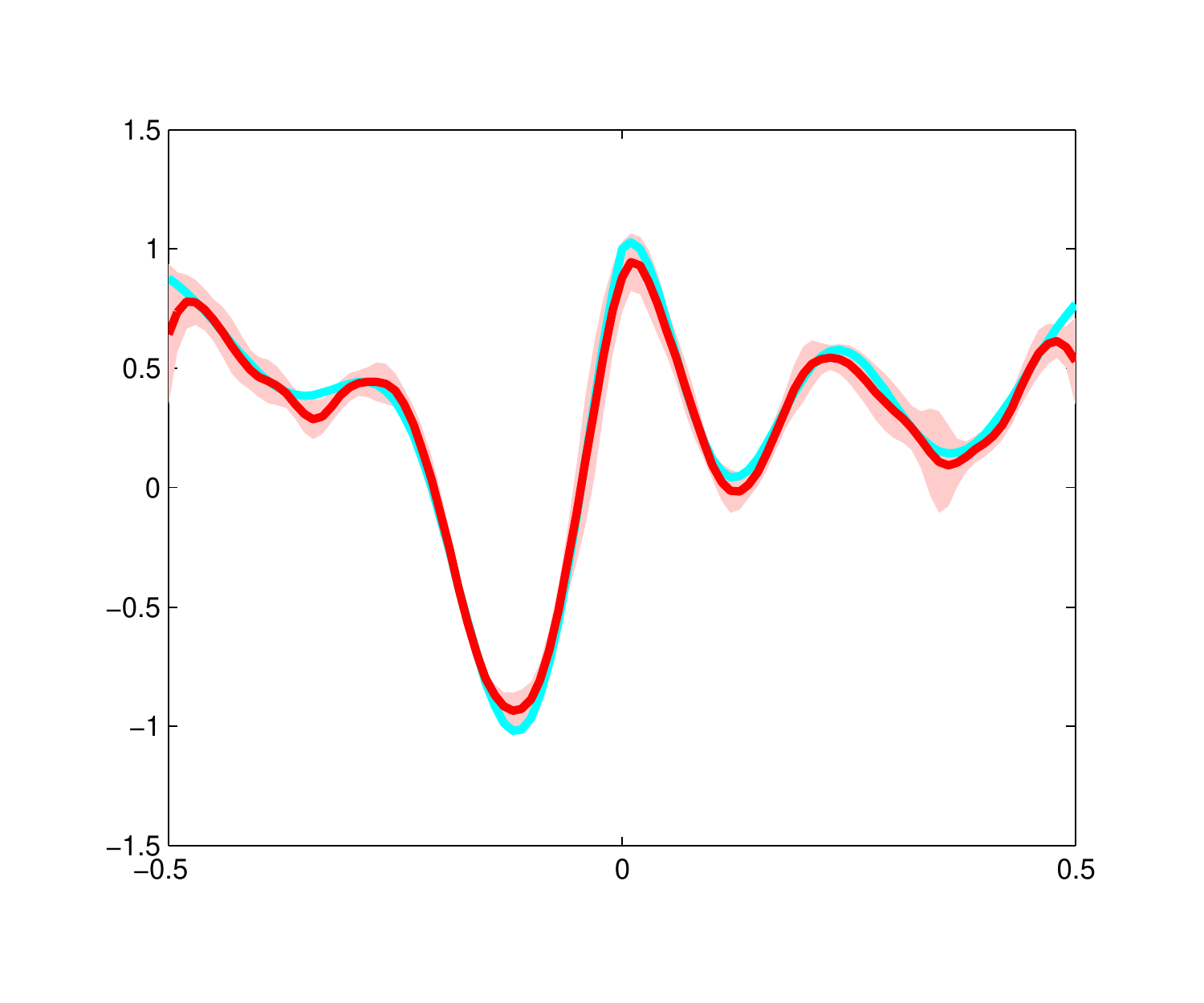} &
  \includegraphics[width=0.24\textwidth, trim = 40 40 40 40]{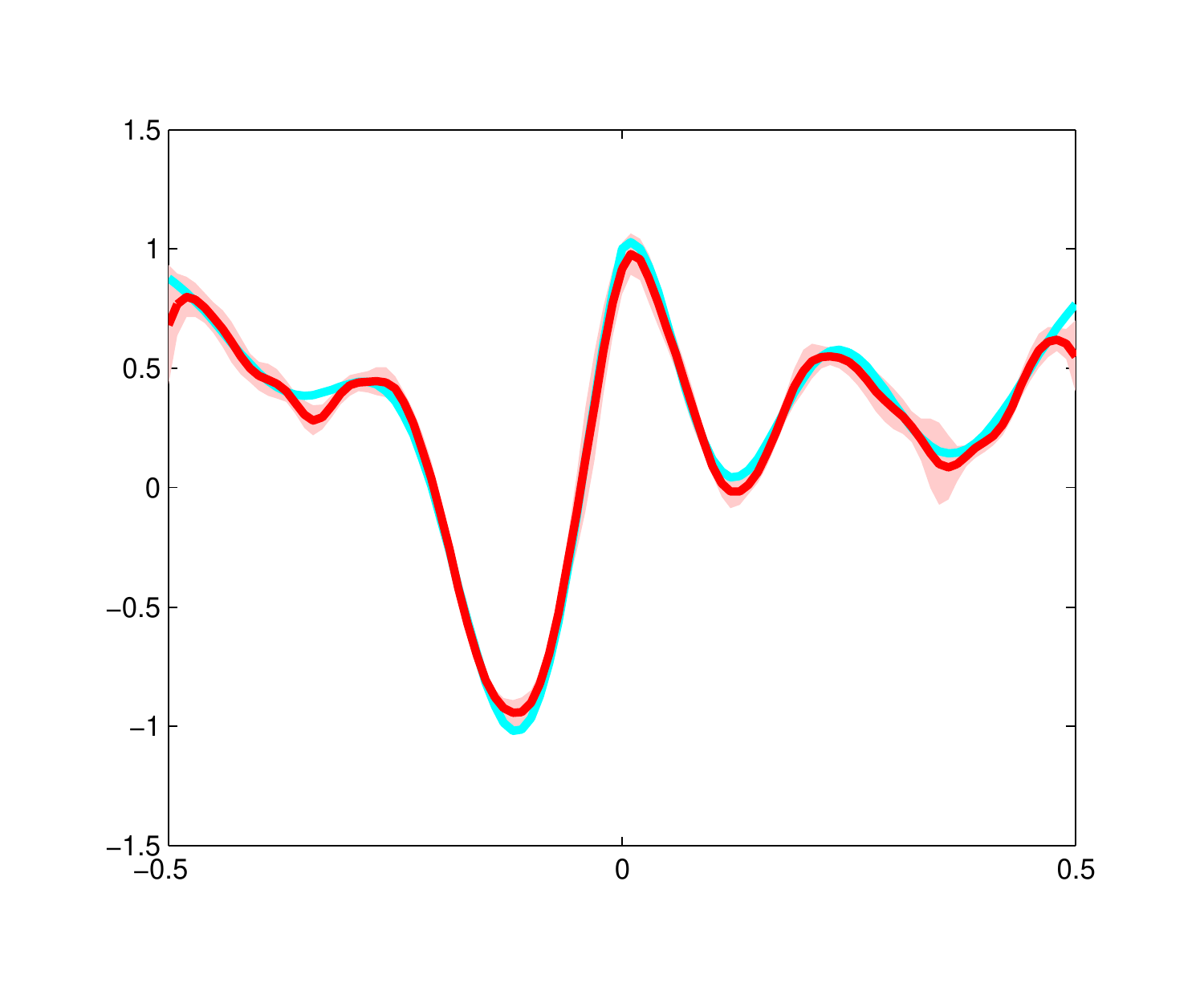} &
  \includegraphics[width=0.24\textwidth, trim = 40 40 40 40]{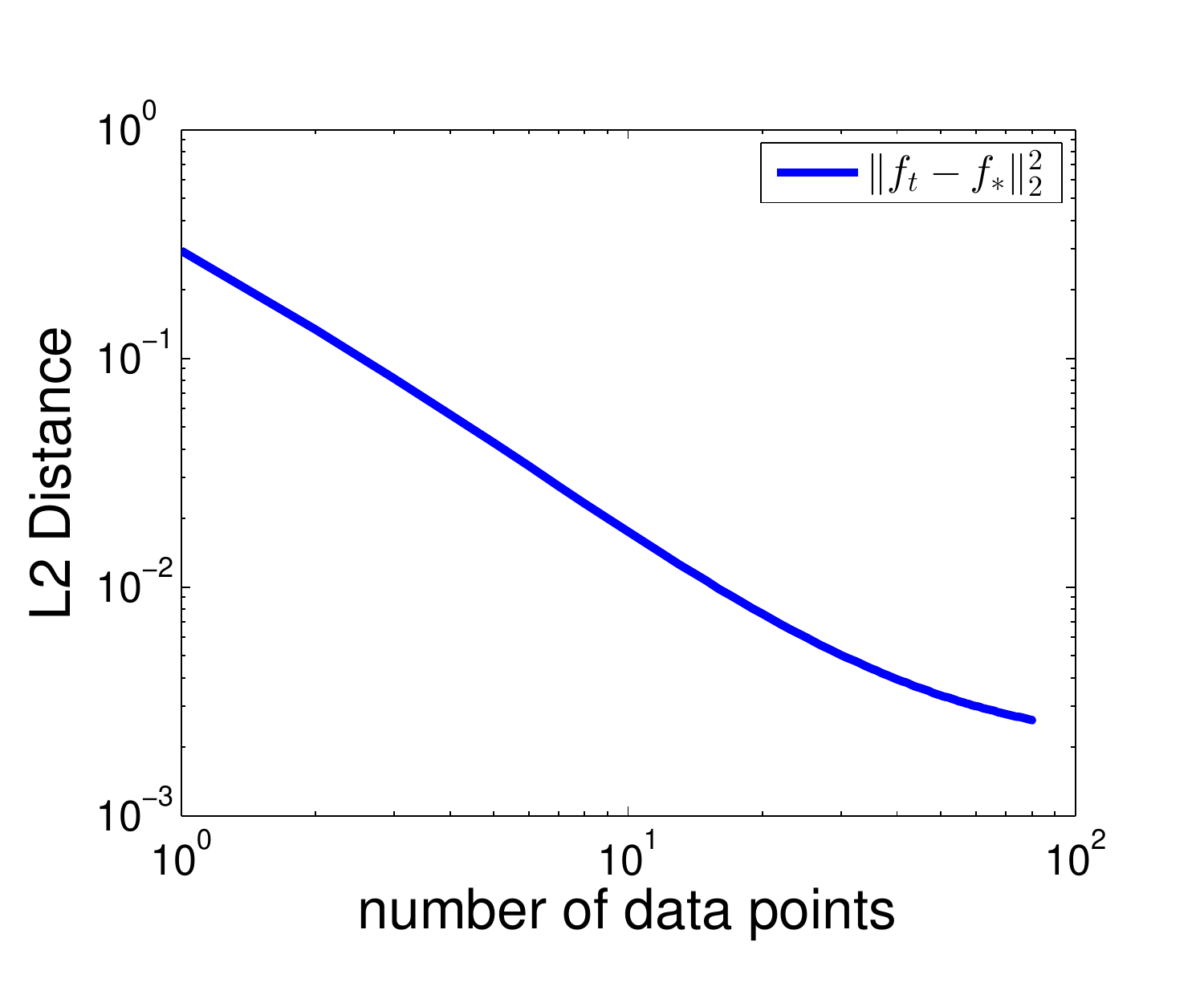} \vspace{-0.1in}\\

  (5) Iteration 20 & (6) Iteration 50 & (7) Iteration 80 &\\
          &         &         &  (8)Posterior Convergence
  \end{tabular}
  \caption{Visualization of posterior prediction distribution. The red curve is the mean function and the pale red region is the variance of the posterior. The cyan curve the ground truth. The last one shows convergence of the posterior mean to the ground truth.}
  \label{fig:syn_data_GP}
\end{figure*}

We test the proposed algorithm on 1D synthetic data. The data are generated by 
$$
y = 3x^2 + (\sin(3.53\pi x) + \cos(7.7\pi x))\exp(-1.6\pi|x|) + 0.1e
$$ 
where $x\in[-0.5, 0.5]$ and $e\sim\Ncal(0, 1)$. The dataset contains $2048$ observations which is small enough to run the exact GP regression. We use Gaussian RBF kernel in Gaussian processes and sparse Gaussian processes. Since we are comparing different inference algorithms on the same model, we use the same hyperparameters for all the inference algorithms. We set the kernel bandwidth $\sigma$ to be $0.1$ times the median of pairwise distances between data points (median trick), and $\beta^{-1} = 0.001$. We set the stepsize in the form of $\frac{\eta}{n_0 + \sqrt{t}}$ for both PMD and SVI and the batch size to be $128$. Figure.~\ref{fig:syn_data_GP} illustrates the evolving of the posterior provided by PMD with $16$ particles and $128$ inducing variables when the algorithms visit more and more data. To illustrate the convergence of the posterior provided by PMD, we initialize the $\ub = 0$ in PMD. Later, we will see we could make the samples in PMD more efficient.

\subsubsection{Music Year Prediction}
We randomly selected $463,715$ songs to train the model and test on $5,163$ songs. As in~\cite{BerEllWhiLam11}, the year values are linearly mapped into $[0, 1]$. The data is standardized before regression. Gaussian RBF kernel is used in the model. Since we are comparing the inference algorithms, for fairness, we fixed the model parameters for all the inference algorithms, \ie, the kernel bandwidth is set to be the median of pairwise distances between data points and the observations precision $\beta^{-1} = 0.01$. We set the number of inducing inputs to be $2^{10}$ and batch size to be $512$. The stepsize for both PMD and SVI are in the form of $\frac{\eta}{n_0+\sqrt{t}}$. To demonstrate the advantages of PMD comparing to SMC, we initialize PMD with prior while SMC with the SoD solution. We rerun the experiments $10$ times. We use both $16$ particles in SMC and PMD.  We stop the stochastic algorithms after they pass through the whole dataset $2$ times.

\subsection{Latent Dirichlet Allocation}

\begin{figure*}[!ht]
\centering
  \begin{tabular}{ccc}
  \includegraphics[width=0.33\textwidth]{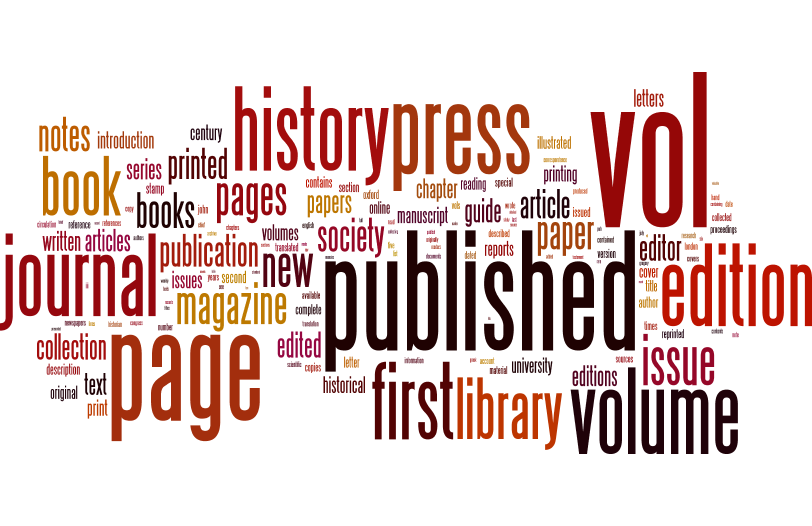} &
  \includegraphics[width=0.33\textwidth]{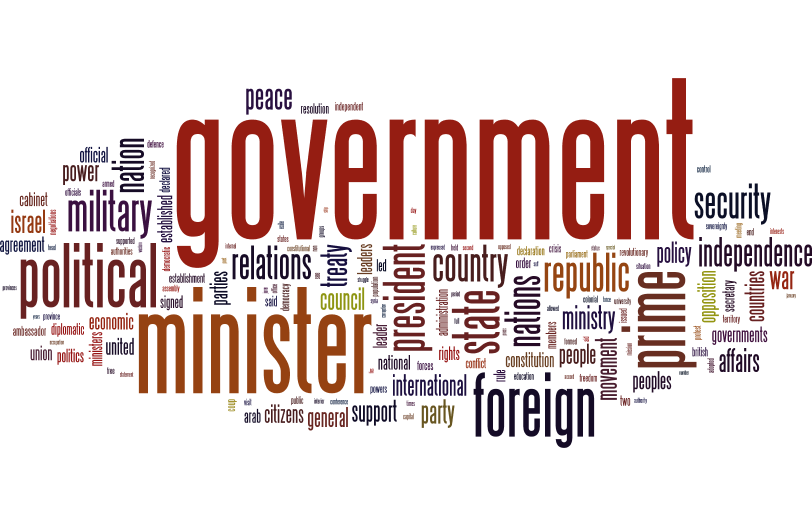} &
  \includegraphics[width=0.33\textwidth]{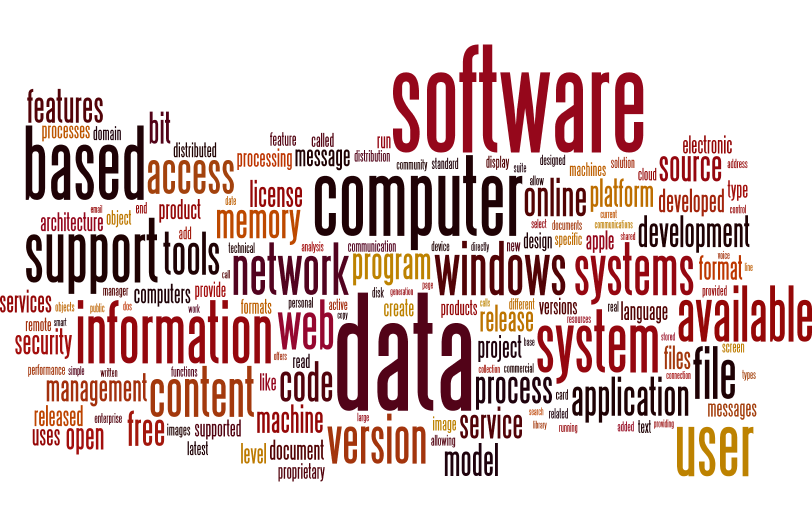}\\
  \includegraphics[width=0.33\textwidth]{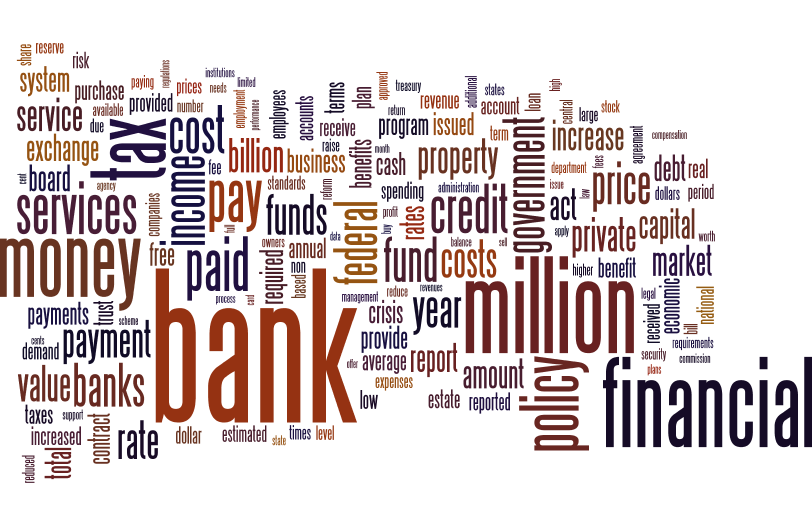} &
  \includegraphics[width=0.33\textwidth]{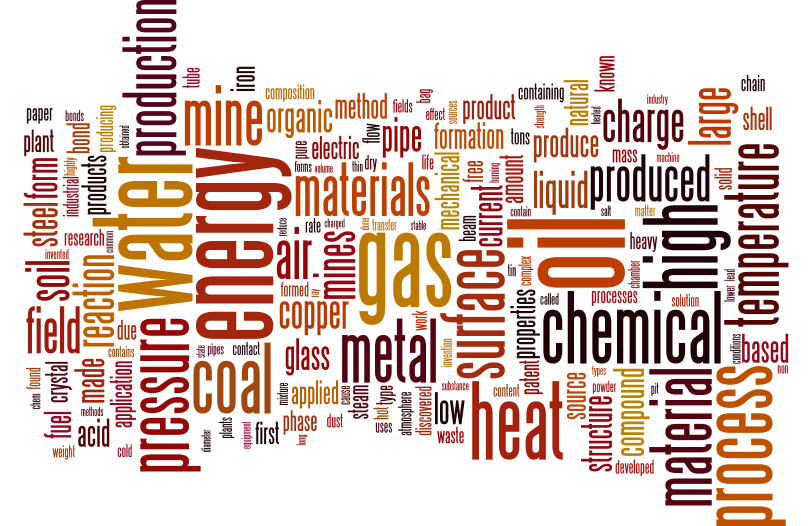} &
  \includegraphics[width=0.33\textwidth]{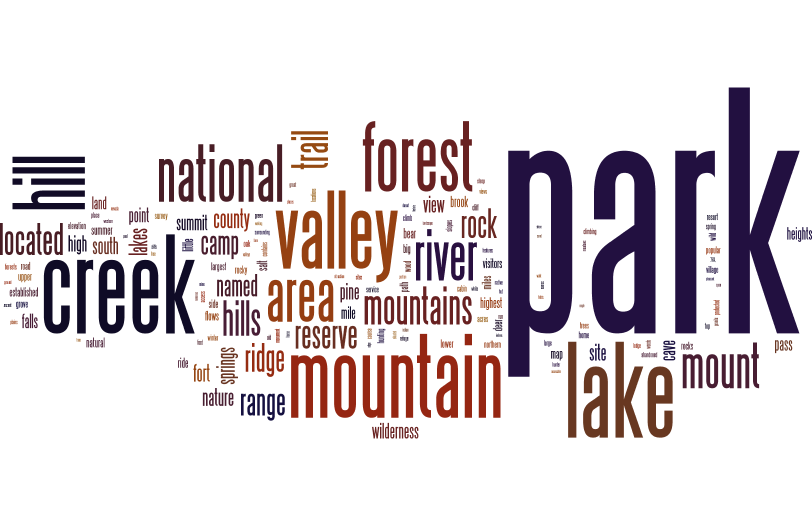}\\ 
  \end{tabular}
  \caption{Several topics learnd by LDA with PMD}
  \label{fig:LDA_topics}
\end{figure*}

We fix the hyper-parameter $\alpha = 0.1$, $\beta=0.01$, and $K=100$. The batchsize is set to be $100$. We use stepsize $\frac{\eta}{n_0 + {t}^\kappa}$ for PMD, stochastic variational inference and stochastic Riemannian Langevin dynamic. For each algorithm a grid-search was run on step-size parameters and the best performance is reported. We stop the stochastic algorithms after they pass through the whole dataset $5$ times.

The log-perplexity was estimated using the methods discussed in~\cite{PatTeh2013} on a separate holdout set with $1000$ documents. For a document $x_d$ in holdout set, the perplexity is computed by
$$
\text{perp}(x_d|X, \alpha, \beta) = \exp\bigg(-\frac{\sum_{n=1}^{N_d}\log p(x_{dn}|X, \alpha, \beta)}{N_d}\bigg)
$$
where 
\begin{eqnarray}\label{eq:lda_perplexity}
p(x_{dn}|X, \alpha, \beta) = \EE_{\theta_d, \Phi}\bigg[\theta_d^\top \Phi_{\cdot, x_{dn}}\bigg].
\end{eqnarray}

We separate the documents in testing set into two non-overlapped parts, $x_d^{\text{estimation}}$ and $x_d^{\text{evaluation}}$. We first evaluate the $\theta_d$ based on the $x_d^{\text{estimation}}$. For different inference methods, we use the corresponding strategies in learning algorithm to obtain the distribution of $\theta_d$ based on $x_d^{\text{estimation}}$. We evaluate $p(x_{dn}|X, \alpha, \beta)$ on $x_d^{\text{evaluation}}$ with the obtained distribution of $\theta_d$. Specifically, 
\begin{eqnarray*}
p(x_{dn}^{\text{evaluation}}|X, \alpha, \beta) = \EE_{\Phi|X, \beta}\EE_{\theta_d^{\text{evaluation}}|\Phi, \alpha, x_d^{\text{estimation}}} \bigg[\theta_d^\top \Phi_{\cdot, x_{dn}}\bigg]
\end{eqnarray*}

For PMD, SMC and stochastic Langevin dynamics, 
\begin{eqnarray*}
\theta_{dk}^{\text{evaluation}} = \frac{\sum_{n=1}^{N_d^{\text{estimation}}}\delta(z_{dnk}^{\text{estimation}} = 1) + \alpha}{N_d^{\text{estimation}} + K\alpha}
\end{eqnarray*}
For stochastic variational inference, $q(\theta_d)$ is updated as in the learning procedure.

We illustrate several topics learned by LDA with our algorithm in Figure.\ref{fig:LDA_topics}.

\end{document}